\def \Ltheta{L(\pmb{\theta})}
\def \LNtheta{L_{\mathrm{N}}(\pmb{\theta})}
\def \btheta{\pmb{\theta}}
\def\C{\mathcal{C}}
\def\X{\mathcal{X}}
\def\P{\mathcal{P}}
\def\R{\mathbb{R}}
\def\Th{\Theta}
\def\Ln{L_{\mathrm{N}}}
\newtheorem{thm}{Theorem}
\newtheorem{lem}[thm]{Lemma}
\newtheorem{defin}[thm]{Definition}
\definecolor{grey}{rgb}{.5, .5, .5}
\definecolor{lblue}{rgb}{0, 1, 1}
\definecolor{breastcol}{HTML}{9AD0F3}
\definecolor{yeastcol}{HTML}{000000}
\definecolor{votescol}{HTML}{E79F00}
\definecolor{dermacol}{HTML}{0072B2}
\definecolor{synthcol}{HTML}{D55E00}
\begin{document}

\title{Minimum Spectral Connectivity Projection Pursuit
}


\author{David P. Hofmeyr\\Dept. of Statistics and Actuarial Science \\
		Stellenbosch University, South Africa         \and
        Nicos G. Pavlidis\\ Dept. of Management Science\\
        Lancaster University, UK			\and
	Idris A. Eckley\\ Dept. of Mathematics and Statistics\\ Lancaster University, UK
}


\maketitle

\begin{abstract}
We study the problem of determining the optimal low dimensional projection for
maximising the separability of a binary partition of an unlabelled dataset, as
measured by spectral graph theory.  This is achieved by finding projections
which minimise the second eigenvalue of the graph Laplacian of the projected
data, which corresponds to a non-convex, non-smooth optimisation problem. We
show that the optimal univariate projection based on spectral connectivity
converges to the vector normal to the maximum margin hyperplane through the
data, as the scaling parameter is reduced to zero. This establishes a
connection between connectivity as measured by spectral graph theory and
maximal Euclidean separation. 
The computational cost
associated with each eigen-problem is quadratic in the number of data. To
mitigate this issue, we propose an approximation method using microclusters
with provable approximation error bounds.
Combining multiple binary partitions
within a divisive hierarchical model allows us to construct clustering solutions admitting
clusters with varying scales and lying within different subspaces. We evaluate the performance of the
proposed method on a large collection of benchmark datasets and find that it
compares favourably with existing methods for projection pursuit and dimension
reduction for data clustering.
\end{abstract}
\noindent
{\bf keywords:} Spectral clustering \and dimension reduction \and projection pursuit \and maximum margin

\section{Introduction}

Identifying distinct groups, or {\em clusters}\/, in unlabelled data is a
fundamental task in exploratory data analysis, with applications in diverse
disciplines ranging from computer science and biology to sociology and
marketing.
Spectral clustering methods have gained considerable attention because of their simplicity, versatility
and strong performance in numerous
applications~\citep{Shi2000,Weiss1999,Ning2010,Chi2009}.
One of the appealing properties of spectral clustering is its ability to
identify highly non-convex clusters, which may lie on or close to highly non-linear
manifolds. It is, however, sensitive to choices of scaling and to irrelevant or noisy features
which may be present in the data~\citep{BachJ2006,niu2011dimensionality}.

In spectral clustering, clusters are defined as strongly connected components
of a graph whose vertices correspond to data points, and edge weights represent
pairwise similarities between them~\citep{Luxburg2007}.
The minimum-cut problem seeks the partition of the graph that minimises the sum
of edge weights connecting different components of the partition. In other
words, the partition which minimises the total similarity between data assigned
to different clusters.
Although intuitive this formulation frequently produces partitions in which
some components contain very few vertices (data), which may not constitute complete clusters.
To avoid this, normalisations of the minimum-cut problem that favour balanced
partitions are used.
Normalisation, however, renders the problem NP-hard~\citep{Wagner1993}, and so a
continuous relaxation is solved instead. 
The solution of the relaxed problem is given by the eigenvectors of the
\emph{graph Laplacian} matrix.
This spectral decomposition of the graph Laplacian gives rise to the term
spectral clustering.\\

The successful application of any clustering method critically depends on the
extent to which the true group structure in the data is captured by spatial
similarities between points.
However, the presence of irrelevant and noisy features, which abound in modern
applications, can distort this spatial structure. This has been
shown to have particularly adverse effects on the performance of spectral
clustering, even in problems of moderate dimensionality~\citep{BachJ2006,niu2011dimensionality}.
Dimension reduction techniques
attempt to mitigate the effects of noisy and irrelevant features
by identifying low dimensional representations of a
dataset that preserve the maximum amount of relevant
information. Commonly these low dimensional representations
are defined by the projection of the data into a linear
subspace.
Classical techniques, like principal component analysis (PCA), although widely
used in clustering, are not guaranteed to identify subspaces that preserve
cluster structure.
More recently a number of dimension reduction methods that explicitly aim to
reveal cluster structure have been
developed~\citep{krause2005multimodal,Pavlidis2015,HofmeyrP2015,PenaF2001,niu2011dimensionality}.

\citet{PenaF2001} show that under certain conditions the one-dimensional projection of the data with minimum kurtosis
maximises bimodality. Such a projection can thus be used
to separate {\em high-density clusters}\/,
defined as contiguous regions of high probability density around modes
of the (assumed) underlying probability density function. 
For the same purpose,~\citet{krause2005multimodal} propose maximising
the {\em dip statistic}~\citep{Hartigan1985}, a measure of departure from
unimodality of a univariate dataset. 
More recently~\citet{Pavlidis2015} proposed an approach that aims
to identify regions of low probability density that separate
high-density clusters. This is achieved by identifying the univariate subspace normal to the hyperplane
that has the minimum integrated density
along it, called the {\em minimum density hyperplane}\/. \citet{HofmeyrP2015}
proposed a method to identify projections that maximise the variance-ratio
clusterability measure~\citep{Zhang2001}.
This measure is a normalisation of the $K$-means objective,
which is invariant to changes in scale and is thus less susceptible to projections
which exhibit high variance but little cluster structure. 
The problem of dimensionality reduction for spectral clustering was first
considered by~\citet{niu2011dimensionality}. 
A detailed description of this method and its relation to our work is
provided in Section~\ref{sec:background} after the presentation of
necessary background material.\\

The main problem we consider in this paper is the identification of the optimal projection to
bi-partition a dataset through spectral clustering.
This is achieved by minimising the second smallest eigenvalue of the graph Laplacian, which measures the
spectral connectivity between the two clusters. We consider
the graph Laplacians arising from the two most widely used normalisations of the minimum-cut objective, namely Ratio Cut~\citep{HagenK1992} 
and Normalised Cut~\citep{Shi2000}.
Although both formulations can lead to high quality clustering models, our
experience suggests that for our purposes the Normalised Cut formulation yields overall superior
performance.
Applying this bi-partitioning approach recursively produces a divisive spectral
clustering algorithm capable of identifying clusters with varying scales and defined in different
subspaces.
The minimisation of the sum of the $K$~smallest eigenvalues of the normalised
graph Laplacian with respect to a projection of the data was first proposed
by~\citet{niu2011dimensionality} to perform dimension reduction for spectral
clustering. 

In this paper we develop an improved methodology for
finding optimal projections based on the spectral clustering
objective, and provide new theoretical perspectives on
the problem. We perform a rigorous investigation into the
continuity and differentiability properties of eigenvalues of
graph Laplacians as functions of the projection, and find
that they are Lipschitz continuous (and hence differentiable
almost everywhere), and everywhere directionally differentiable.
We derive expressions for the derivative of an eigenvalue
with respect to the projection when the eigenvalue
is simple, thereby allowing us to minimise the objective
directly using generalised gradient descent methods. This
approach is guaranteed to converge to a local minimum,
whereas existing methodology for this problem does not
directly minimise the overall objective and may fail to find
an optimal projection. In addition, we provide a formulation
of the directional derivative which allows us to easily
derive optimality conditions for the proposed method.
Although our focus is on minimising the second smallest
eigenvalue our analysis applies to an arbitrary eigenvalue of
the Laplacian, and so the proposed methodology can easily
be extended to minimising sums of eigenvalues of graph
Laplacians.

Each eigenvalue computation requires~$\mathcal{O}(N^2)$ operations, where~$N$
is the size of the dataset. This can be prohibitive for large datasets.  We
show how preprocessing the dataset using microclusters provides an
approximation of the optimisation surface which enables a speed-up of up to two
orders of magnitude without an appreciable degradation in empirical clustering
accuracy. We also derive theoretical worst case error bounds for this
approximation.

We establish an asymptotic connection between optimal univariate
projections for spectral bi-partitioning and maximum margin hyperplanes.
Formally, we show that as the scaling parameter defining pairwise similarities
is reduced to zero, the optimal one-dimensional
projection for spectral bi-partitioning converges to the vector normal to the
largest margin hyperplane through the data. This establishes a theoretical
connection between connectivity as measured by spectral graph theory and
Euclidean separation, which underlies maximum margin clustering~\citep{xu2004maximum,zhang2009maximum},
an increasingly popular and effective approach to clustering.\\

The remainder of the paper is organised as follows. In~Section~\ref{sec:background}
we provide a brief introduction to spectral clustering, and existing dimension reduction
based on the spectral clustering objective.
Section~\ref{sec:method} presents our methodology for finding optimal
projections based on spectral connectivity. Section~\ref{sec:maxmargin} describes
the theoretical connection between the optimal one-dimensional projection for spectral
bi-partitioning and maximum margin hyperplanes. In
Section~\ref{sec:microclust} we discuss an approximation technique which allows
for a substantial improvement in computation time of the method, and derive theoretical
worst case error bounds. Experimental
results and sensitivity analyses are presented in
Section~\ref{sec:experiments}.

\section{Background} \label{sec:background}

In this section we provide a brief introduction to spectral clustering, with
particular attention to binary partitioning, and discuss existing methodology for
dimension reduction based on the spectral clustering objective.  Let $\X =
\{x_1, \ldots, x_N\}$ denote a dataset in $\R^d$. Then define the graph
$\mathcal{G}=(\mathcal{V},\mathcal{E})$, where vertices correspond to elements in $\X$, and the
\emph{undirected} edges assume weights equal to the pairwise \emph{similarities}
between data. 
The information in $\mathcal{G}$ can be represented by the {\em adjacency}, or {\em affinity} matrix,
$A \in \R^{N\times N}$, with $A_{ij} = \mathcal{E}_{ij}:= \mbox{similarity}(x_i, x_j)$.
The \emph{degree} of the $i$-th vertex is defined as $d_i = \sum_{j=1}^N
A_{ij}$, and the degree matrix is defined as $D = \mathrm{diag}(d_1,\ldots,d_N)$.
%
%
For a subset $\C \subset \X$, the size of $\C$ can be measured either by its
cardinality, $\vert \C\vert$, or by its {\em volume},
$\textrm{vol}(\C) := \sum_{i: x_i \in \C} d_i$. 

\begin{defin}
The \emph{normalised minimum-cut of a graph} is the solution to the optimisation problem
\begin{equation}\label{eq:mincut}
\min_{\C \subset \X} \sum_{i, j: x_i \in \C, x_j \in \X\setminus \C} A_{ij} \left(\frac{1}{\mbox{size}(\C)} + \frac{1}{\mbox{size}(\X\setminus \C)}\right).
\end{equation}
\end{defin}

When size$(\C) = \vert \C\vert$ the above objective is referred to as {\em Ratio Cut}~\citep{HagenK1992},
whereas when size$(\C) = \mathrm{vol}(\C)$ it is known as {\em Normalised Cut}~\citep{Shi2000}.
\citet{HagenK1992} and~\citet{Shi2000} have shown that the normalised minimum-cut
problems arising from these two definitions of size can be formulated
in terms of the \emph{graph Laplacian} matrices,
\begin{align}
(standard) \ L &= D - A, \label{eq:standard}\\
(normalised) \ \Ln &= D^{-1/2}LD^{-1/2},\label{eq:normal}
\end{align}
as follows. For $\C \subset \X$ define $u^{\C} \in\mathbb{R}^N$ to be the vector
with $i$-th entry,
\begin{equation}\label{eq:discretef}
u^{\C}_i = \left\{ \begin{array}{rl}
			\sqrt{\mbox{size}\left(\X \setminus \C\right)/\mbox{size}(\C)}, & \textrm{if } x_i \in \C \\
			-\sqrt{\mbox{size}\left(\C \right)/\mbox{size}(\X \setminus \C)}, & \textrm{if } x_i \in \X\setminus \C. \end{array} \right.
\end{equation}
%
%
For size$(\C) =\vert \C \vert$, 
the optimisation problem in (\ref{eq:mincut}) can be written as,
\begin{equation}\label{eq:ratiocut}
\min_{\C \subset X} (u^{\C})^\top L u^{\C} \; \textrm{ s.t. } \; u^{\C} \perp \mathbf{1}, \
\|u^{\C}\| = \sqrt{N}.
\end{equation}
If instead size$(\C)=$ vol$(\C)$ then~(\ref{eq:mincut}) is equivalent to,
\begin{eqnarray}\label{eq:Ncut}
\min_{\C \subset X} (u^{\C})^\top L u^{\C} \; \mbox{ s.t. } Du^{\C} \perp \mathbf{1}, \
(u^{\C})^\top D u^{\C}= \mbox{vol}(\X).
\end{eqnarray}

Both problems in~(\ref{eq:ratiocut}) and~(\ref{eq:Ncut}) are
NP-hard~\citep{Wagner1993}. However continuous relaxations, in which the
discreteness condition on $u^{\C}$, Eq.~(\ref{eq:discretef}), is removed,
can be solved in quadratic time~\citep{HagenK1992,Shi2000}. The solutions to the relaxed
problems are given by the second eigenvector of~$L$, and the second eigenvector
of the generalised eigen-equation $Lu = \lambda Du$ respectively. The latter
is thus equivalently solved by $D^{-1/2}u$, where~$u$ is the second eigenvector of
$\Ln$. 
The above approach readily extends to the problem of obtaining a $K$-partition
of the dataset. In this case the
solution is obtained from the eigenvectors corresponding to the $K$ smallest eigenvalues of~$L$ or~$\Ln$~\citep{Luxburg2007}, respectively. \\

Dimension reduction based on the spectral clustering objective using the
normalised graph Laplacian was first considered
by~\citet{niu2011dimensionality}. The objective considered by the authors is
equivalent to the objective we consider, and can be formulated as follows,
\begin{subequations}\label{eq:DRSC}
\begin{eqnarray}
\max_{U, V} &  \mbox{trace}(U^\top D^{-1/2}AD^{-1/2}U) \label{eq:drsc1}\\
s.t. & U^\top U = I\\
& A_{ij} = k(\|V^\top x_i - V^\top x_j \|) \label{eq:Aij}\\
&V^\top V = I.
\end{eqnarray}
\end{subequations}
Note that since $\Ln=I- D^{-1/2}AD^{-1/2}$, the trace maximisation in
(\ref{eq:drsc1}) is equivalent to $\min_{U, V}  \mbox{trace}(U^\top \Ln U)$.
The elements of the affinity matrix, $A$, are determined by a function,
$k(\cdot)$, of the pairwise distances of the points projected into the subspace
defined by the projection matrix~$V$; and $D$ is the corresponding degree matrix. It is clear that
for a given $V$ the matrix $U$ that maximises the trace in (\ref{eq:drsc1}) has
columns given by the~$K$ eigenvectors associated with the $K$~largest
eigenvalues of $D^{-1/2}AD^{-1/2}$ (or equivalently the $K$ smallest
eigenvalues of $\Ln$).
To solve the problem in~(\ref{eq:DRSC}),~\citet{niu2011dimensionality} propose
an algorithm that alternates between two stages: (i) for a fixed $V$ a spectral
decomposition of $\Ln$ determines the optimal~$U$; and (ii) fixing $U$ and $D$
a gradient ascent method is used to maximise $\mbox{trace}(U^\top
D^{-1/2}AD^{-1/2}U)$ with respect to $V$, where the dependence of this
objective on the projection matrix $V$ is through Eq.~(\ref{eq:Aij}). 
%
This process is then iterated.
However, this approach does not account for the fact that the degree matrix $D$
is a function of $A$ and therefore it is itself a function of $V$. An ascent
direction for the objective assuming a fixed~$D$ is thus not necessarily an
ascent direction for the overall objective. 
We have further observed that in practice this algorithm is not guaranteed to
lead to an increase in the overall objective across iterations and may thus
fail to converge. In the following section we derive expressions for the
gradient of the overall objective as a function of the projection, allowing us
to optimise it directly.

\section{Projection Pursuit for Spectral Connectivity} \label{sec:method}

In this section we study the problem of minimising the second eigenvalue of the
graph Laplacian of the projected data.
If the projected data are bi-partitioned through spectral clustering, then the
projection
that minimises the second eigenvalue of the graph Laplacian minimises the
connectivity between the two clusters, as measured by spectral graph theory.
%
%

Let $\X = \{x_1, \ldots x_N\}$ be
a dataset in $\R^d$.
We define the {\em projection matrix} $V$ as a $d\times l$ matrix, with $l < d$,
whose columns $\{v_1, \ldots, v_l\}$, have unit norm.
With this formulation it is convenient to express~$V$ in polar coordinates.
Let $\Th = [0, \pi)^{(d-1)\times l}$, then 
for $\btheta \in \Th$,
the projection matrix $V(\btheta)$ is given by,
\begin{equation} \label{angles}
V(\pmb{\theta})_{ij} = \left\{ \begin{array}{ll}\cos(\btheta_{ij}) \prod_{k = 1}^{i-1}\sin(\btheta_{kj}), & i = 1, ..., d-1\\
\prod_{k=1}^{d-1} \sin(\btheta_{kj}), & i = d. \end{array}\right.
\end{equation}
The $l$-dimensional {\em projected data set} is denoted by $\mathcal{P}(\btheta)
= \{p(\btheta)_1, \ldots, p(\btheta)_N\} = \{V(\btheta)^\top x_1, \ldots,V(\btheta)^\top x_N\}$.
We also define the data matrix, $X \in \R^{d \times N}$, and the projected
data matrix $P \in \R^{l \times N}$, as matrices whose columns contain the original and projected
data, respectively.

We define $\Ltheta$ (resp. $\LNtheta$) as the Laplacian (resp. normalised
Laplacian) of the graph constructed from the projected data set
$\mathcal{P}(\btheta)$. Throughout we use $\lambda_i(\cdot)$ to denote the
$i$-th smallest eigenvalue of its real symmetric matrix argument, and we assume
that all eigenvectors are normalised. Edge weights in the graph of
$\P(\btheta)$ are determined by a Lipschitz continuous and continuously differentiable {\em similarity function} $s:\R^{l\times N} \times
\{1\dots N\}^2 \to \R^+$, in that the affinity matrix is given by,
\begin{equation}\label{eq:sim}
A(\btheta)_{ij}:= s(P(\btheta), i, j) = k\left(d(p(\btheta)_i, p(\btheta)_j)/\sigma\right),
\end{equation}
where $k:\R^+\to\R^+$ is a smooth decreasing function,
$d(\cdot, \cdot)$ is a metric and $\sigma >0$ is the {\em scaling parameter}.
It is common to use the Euclidean metric, however our
experience has shown that projection pursuit for spectral clustering can be
sensitive to outliers when this metric is used. This is especially the case
when using the standard Laplacian. To mitigate against this we define a metric which encourages cluster boundaries to intersect a chosen convex set, $\pmb{\Delta}(\btheta)$, which depends on the projection $\btheta$. This is achieved by defining $d(\cdot, \cdot)$ so that the resulting similarities between points outside $\pmb{\Delta}(\btheta)$, which may be outliers, and other points, are increased. A detailed discussion is provided in 
Appendix~\ref{sec:sim}.\\

A common requirement in linear dimension reduction methods is that the
projection matrix~$V$ is orthonormal, that is $V^\top V = I$. 
\citet{niu2011dimensionality} directly enforce this constraint by
generating the columns of~$V$ sequentially and optimising each column over the
null space of previously determined columns. 
By restricting the domain of the optimisation problem to the manifold of $d
\times l$ orthonormal matrices, known as the Stiefel manifold, it is possible
to optimise over the entire matrix $V$~\citep{Edelman1998,Boumal2014}. 
However, optimisation algorithms operating over the Stiefel manifold have
only been shown to have guaranteed convergence when the objective function is everywhere continuously
differentiable. As we discuss in the next section this requirement is not
necessarily met by the eigenvalues of graph Laplacians.
We instead introduce a penalty term into the objective function which leads to
approximately orthogonal projection matrices. Specifically, we consider the
objective,
\begin{equation}\label{eq:orthog}
\min_{\btheta \in \Th} \lambda_2(\Ltheta) + \omega \sum_{i \neq j} \left(V(\btheta)_i^\top V(\btheta)_j \right)^2,
\end{equation}
or replacing $\lambda_2(\Ltheta)$ with $\lambda_2(\LNtheta)$ in the normalised
case. As in the case of optimising over the Stiefel manifold, this formulation
enables us to update the entire matrix~$V$ at each iteration.
This is an important advantage because the expensive computation of the
eigenvalue of the graph Laplacian is performed once rather than~$l$ times for
each complete update of~$V$.

\subsection{Continuity and Differentiability}\label{sec:differentiability}

In this subsection we investigate the continuity and differentiability properties
of~$\lambda_2(\Ltheta)$ and~$\lambda_2(\LNtheta)$, which are required to
establish global convergence of the optimisation algorithm discussed in
Section~\ref{sec:optimisation}.

To begin with, simple applications of the inequalities of~\citet{weyl1912}
and~\citet{Schur1911} give us,
\begin{equation*}
\vert \lambda_i(L(\pmb{\theta})) - \lambda_i(L(\pmb{\theta}^\prime))\vert  
 \leq N\sqrt{\max_{ij}\vert L(\pmb{\theta})-L(\pmb{\theta}^\prime)\vert_{ij}}.
\end{equation*} 
By assumption
the similarity function, $s$, is Lipschitz continuous
in $P \in \R^{l\times N}$ for fixed $i, j$. The elements of $\Ltheta$
are therefore Lipschitz continuous as compositions of Lipschitz functions ($V(\pmb{\theta})$ is Lipschitz in
$\pmb{\theta}$ as a collection of finite products of Lipschitz functions). 
Thus the objective $\lambda_2(\Ltheta)$ is Lipschitz continuous in~$\btheta$. An analogous
argument can be used to show that $\lambda_2(\LNtheta)$ is Lipschitz continuous.
\noindent
Rademacher's theorem therefore
tells us that $\lambda_2(\Ltheta)$ and $\lambda_2(\LNtheta)$
are almost everywhere differentiable~\citep{Polak1987}.
Generalised gradient descent methods therefore provide a natural framework
for finding locally optimal projections for spectral bi-partitioning~\citep{Polak1987}. 

Eigenvalue optimisation is made challenging by the fact that eigenvalues are only
guaranteed to be differentiable when they are {\em simple}, i.e., are not repeated.
However, minimising the smallest eigenvalue tends to
separate it from other eigenvalues, and therefore
the issue of non-differentiability becomes less of a concern~\citep{lewis1996eigenvalue}.
%
A basic property of graph Laplacian
matrices is that  both $\lambda_1(L)$ and $\lambda_1(L_{\mathrm{N}})$ are always equal to zero~\citep{Luxburg2007}.
If the similarity function, $s$, is
strictly positive, then $\lambda_2(\Ltheta)$ and $\lambda_2(\LNtheta)$ are
bounded away from zero.
%
%
Therefore minimising $\lambda_2(\cdot)$ 
%
%
tends to separate it from other eigenvalues, guiding the search to regions of
the domain where the objective function is differentiable.
Nonetheless, we cannot guarantee
that $\lambda_2(\Ltheta)$ and $\lambda_2(\LNtheta)$ are simple
throughout the optimisation procedure.
%
%
We next provide expressions for the derivatives of $\lambda_2(\Ltheta)$ and $\lambda_2(\LNtheta)$
as functions of $\btheta$, when they are simple. 
Using these we then establish that these eigenvalue objectives are in fact
{\em continuously} differentiable when they are simple.

A useful formulation of eigenvalue derivatives is found in~\citep[Th.~1]{Magnus1985};
if $\lambda$ is a simple eigenvalue of a real symmetric matrix $M$, 
then
$\lambda$ is infinitely differentiable on a neighbourhood of $M$, and the
differential at $M$ is given by,
\begin{equation}\label{eigdev}
d \lambda = u^\top d(M) u,
\end{equation}
where $u$ is the corresponding eigenvector.
As previously mentioned $s(P, i, j)$ is assumed to be continuously differentiable in $P \in \R^{l\times
N}$ for fixed $i, j \in \{1\dots N\}$.
%
%
The derivative $D_{\btheta} \lambda_2(\cdot)$ is given by the $(d-1)\times l$
matrix with $i$-th column $D_{\btheta_i} \lambda_2(\cdot)$, which can be obtained
through the chain rule decomposition,
\[ D_{\pmb{\theta_i}} \lambda_2(\cdot) = D_{P} \lambda_2 \, D_{V}P \,
D_{\pmb{\theta_i}}V, \]
where $D_{\cdot}\cdot$ is the differential operator. Since only the $i$-th
column of $V$ depends on $\btheta_i$, and only the $i$-th row of $P$ depends on
$V_i$, this product can be simplified as 
$$
D_{\pmb{\theta_i}} \lambda_2(\cdot)
= D_{P_i} \lambda_2 \, D_{V_i}P_i \, D_{\pmb{\theta_i}}V_i,
$$
where $P_i$ is
used to denote the $i$-th row of $P$, while $V_i$ and $\btheta_i$ are, as
usual, the $i$-th columns of $V$ and $\btheta$ respectively. 
By definition $D_{V_i} P_i = X^\top$, while
$D_{\btheta_i} V_i \in \R^{d\times (d-1)}$ is obtained by differentiating Eq.~(\ref{angles}),
and is given by,
\begin{equation}\label{eq:difftheta}
\frac{\partial V(\btheta)_{ji}}{\partial \btheta_{ki}} = \left\{\begin{array}{ll}
0, & j<k\\
-\sin(\btheta_{ki})\prod \limits_{m=1}^{k-1}\sin(\btheta_{mi}), & j=k<d\\
\cos(\btheta_{ki})\cos(\btheta_{ji})\prod \limits_{m<j, m\not=k}\sin(\btheta_{mi}), & k<j<d\\
\cos(\btheta_{ki}) \prod \limits_{m \not = k}\sin(\btheta_{mi}), & j = d.
\end{array}\right.
\end{equation}
Finally, in the case of the standard Laplacian, we find,
\begin{equation}\label{eq:derivlam}
\frac{\partial{\lambda_2(L)}}{\partial P_{ij}} = \frac{1}{2}\sum_{m, n}(u_m-u_n)^2\frac{\partial s(P, m, n)}{\partial P_{ij}},
\end{equation}
and for the normalised Laplacian we instead have,
\begin{align}\label{eq:derivlamnorm}
\nonumber
\frac{\partial \lambda_2(L_{\mathrm{N}})}{\partial P_{ij}} = & \frac{1}{2}\sum_{m, n} \left(\frac{u_m}{\sqrt{d_m}} - \frac{u_n}{\sqrt{d_n}}\right)^2\frac{\partial s(P, m, n)}{\partial P_{ij}}\\
& - \lambda \sum_{m, n} \frac{u_m^2}{d_m}\frac{\partial s(P, m, n)}{\partial P_{ij}}.
\end{align}
Complete derivations of Eqs.~(\ref{eq:derivlam}) and ~(\ref{eq:derivlamnorm})
can be found in Appendix~\ref{sec:deriv}.
The elements of the eigenvector, $u$, are continuous since we have
assumed the corresponding eigenvalue $\lambda_2(\cdot)$ to be simple~\citep{Magnus1985}.
In addition we have assumed that $s$ is continuously differentiable.
Therefore, the
product $D_{P} \lambda_2 \, D_{V}P \,
D_{\pmb{\theta_i}}V$ is continuous in $\btheta$, as desired.

If $\lambda_2(\cdot)$ is not simple at $\btheta$ the derivative
$D_{\pmb{\theta}} \lambda_2(\cdot)$ may not be defined. 
Gradient sampling~\citep{BurkeLO2006} can be applied
to minimising objectives which are not
differentiable everywhere.
The method works by sampling points within a
shrinking radius, $\epsilon$,
of the current iterate. The convex hull of the gradients at these sampled
points acts as an approximation for the Clarke $\epsilon$-subdifferential,
and the minimum norm element of this convex hull
provides an approximate steepest descent direction.
This approach
is appealing for its broad applicability and almost sure
convergence to a local minimum on objectives which are
locally Lipschitz and almost everywhere continuously differentiable.
However to obtain a search direction at each iteration
a quadratic program has to be solved, the
formulation of which requires $\mathcal{O}(d)$ gradient
computations. This makes the
method computationally expensive for large problems.
We consider a simple modification which exploits the
properties of eigenvalues of graph Laplacians,
and uses directional derivatives to derive optimality conditions.

%
%

The eigenvalues of a real symmetric matrix can be expressed as the difference
between two convex matrix functions~\citep{fan1949theorem}. 
Therefore $\lambda_2(\Ltheta)$ and $\lambda_2(\LNtheta)$ are directionally differentiable everywhere.
%
%
%
\citet{overton1993optimality} provide an expression for the directional
derivative of the sum of the $K$~largest eigenvalues of a matrix whose elements
are continuous functions of a parameter, at a point of non-simplicity of the
$K$-th largest eigenvalue. We discuss the case of $\lambda_2(\Ltheta)$, where
$\lambda_2(\LNtheta)$ is analogous. Consider the function $F^K:\R^{N\times N} \to \R$ which takes
as input a square matrix and returns
the sum of its $K$ largest eigenvalues. Then,
\begin{equation*}
\lambda_2(\Ltheta) = F^{N-1}(\Ltheta) - F^{N-2}(\Ltheta).
\end{equation*}
Now consider a $\btheta$ such that,
\begin{align*} 
&\lambda_N(\Ltheta) \geqslant \dots \geqslant \lambda_{N-r+1}(\Ltheta) > \\
&\lambda_{N - r}(\Ltheta)  = \dots = \lambda_{N-K+1}(\Ltheta) =\\
& \dots =
	\lambda_{N - r - t + 1}(\Ltheta) \\ &> \lambda_{N - r - t}(\Ltheta)
	\geqslant \cdots > \lambda_1(\Ltheta) =0.
\end{align*}
That is, the $K$-th largest eigenvalue has multiplicity $t$ and $K-r$ of the
repeated eigenvalues are included in the sum $F^K(\Ltheta)$.
\citet{overton1993optimality} have shown that the directional derivative of $F^K(\Ltheta)$ in the direction
$\pmb{\psi}$, $d F^{K}(\Ltheta; \pmb{\psi})$, is
equal to,
\begin{align*}
%
F^r \left( \sum_{i=1}^{d-1}\sum_{j=1}^l\pmb{\psi}_{ij} R^\top L_{ij}R \right) + 
	 F^{K-r} \left( \sum_{i=1}^{d-1}\sum_{j=1}^l\pmb{\psi}_{ij} Q^\top L_{ij} Q \right),
\end{align*}
where $L_{ij} = \partial \Ltheta/\partial \btheta_{ij}$,
the $j$-th column of the
matrix $R \in \R^{N\times r}$ is equal to the eigenvector
associated with the $j$-th largest eigenvalue of $\Ltheta$,
and the $j$-th column of the matrix $Q \in \R^{N\times t}$
is equal to the eigenvector associated with the $(r+j)$-th largest eigenvalue of $\Ltheta$.
The
directional derivative of $\lambda_2(\Ltheta)$ in the direction $\pmb{\psi}$ is thus,
\begin{align}
\nonumber
d \lambda_2(\Ltheta; \pmb{\psi}) = & d F^{N-1}(\Ltheta; \pmb{\psi}) - d F^{N-2}(\Ltheta; \pmb{\psi}) \\
	= & \lambda_1 \left( \sum_{i=1}^{d-1}\sum_{j=1}^l\pmb{\psi}_{ij} Q^\top L_{ij} Q   \right)\label{eq:dderiv},
\end{align}
where the columns of~$Q$ are given by the complete set of eigenvectors
for the eigenvalue $\lambda = \lambda_2(\Ltheta)$.

\subsection{Minimising $\lambda_2(\Ltheta)$ and $\lambda_2(\LNtheta)$.}\label{sec:optimisation}
Applying standard gradient descent methods to functions which are almost
everywhere differentiable can result in convergence to sub-optimal
points~\citep{wolfe1972convergence}. 
This occurs when the method for determining
the gradient is applied at a point of non-differentiability and produces a
non-descent direction. In this case the algorithm cannot reduce
the objective function value and terminates at a point that is not necessarily a local minimum.
The second eigenvalues of the graph Laplacian matrices,
while not necessarily differentiable everywhere, benefit
from the fact that their minimisation tends to
separate them from other eigenvalues. Thus a standard gradient
descent algorithm performs well on these objectives,
very often converging to locally optimal solutions.
Our approach for minimising $\lambda_2(\Ltheta)$ and
$\lambda_2(\LNtheta)$, therefore assumes them to be continuously
differentiable until there is evidence that this assumption fails.
Only then is it necessary to use the computationally more expensive gradient sampling
algorithm to identify a descent direction.

Our approach is summarised in Algorithm~\ref{alg:optimisation}.
Once again we discuss only $\lambda_2(\Ltheta)$ explicitly, noting that
the methodology for minimising $\lambda_2(\LNtheta)$ is equivalent, with the
only difference being in the computation of the gradients and directional
derivatives.

At each iteration a standard gradient-based algorithm with inexact line-search
is used to minimise the objective function
%
%
using the formulation for the gradient presented in Section~\ref{sec:differentiability}.
When this algorithm terminates, say with solution $\btheta^\star$, either the magnitude of the
computed gradient is below a threshold, or a sufficient
decrease in the objective function value was not feasible.
We then need to verify whether $\btheta^\star$ is a local minimum. 
%
%
If $\lambda_2(L(\btheta^\star))$ is simple then $\lambda_2(\cdot)$ is continuously
differentiable at $\btheta^\star$, and therefore $\btheta^\star$ is close to a local
minimiser. In this case the algorithm terminates. On the other
hand, if $\lambda_2(L(\btheta^\star))$ is not simple,
then $\btheta^\star$ may or may not be a local minimiser. 
%
The directional derivative formulation
in Eq.~(\ref{eq:dderiv}) provides a computationally efficient
way to determine if a descent direction from $\btheta^\star$ exists.
In particular, if at $\btheta^\star$, $Q^\top L_{ij}Q  \approx \mathbf{0}$ for all pairs, $i, j$, then 
the directional derivative $d\lambda_2(L(\btheta^\star); \pmb{\psi})$
is approximately zero
%
for all directions~$\pmb{\psi}$. In this case the algorithm terminates as $\btheta^\star$ is sufficiently
close to a local minimiser.
If this condition is not met a descent directions exists, that is $\exists \pmb{\psi} \in \Theta$ s.t.
$\lambda_1\left(\sum_{i=1}^{d-1}\sum_{j=1}^l \pmb{\psi}_{ij} Q^\top L_{ij}Q\right)<0$.
At this point a single step of the gradient sampling algorithm is performed.
As in the standard gradient sampling algorithm~\citep{BurkeLO2006} the magnitude of the sampling radius $\epsilon$ is
progressively reduced until a valid descent direction is identified, or the radius
is reduced beyond a user-specified threshold $\epsilon_f$. In the latter case the
current solution is considered sufficiently close to a local minimiser and the algorithm terminates.
In the former case, once a valid descent direction is identified
$\btheta^\star$ is updated using an inexact line-search algorithm.

Termination under any of the above criteria indicates the identification of a
local minimiser. Moreover, the convergence of the method is guaranteed
under the same analyses as for gradient descent on smooth
functions~\citep{nocedal2006numerical} and gradient sampling~\citep{BurkeLO2006}.

\begin{algorithm}
\caption{Minimising $\lambda_2(\Ltheta)$}\label{alg:optimisation}
\begin{algorithmic}
\STATE Input: Initial projection $\btheta_0$, optimality tolerance $\tau$,
\STATE initial sampling radius for gradient sampling $\epsilon_0$,
\STATE minimum sampling radius $\epsilon_f$, radius reduction
\STATE factor $\eta$, number of sampled gradients $n_g$
\STATE Output: Optimal projection $\btheta^\star$
\STATE
\STATE $\btheta^\star \gets \btheta_0$
\STATE $\epsilon \gets \epsilon_0$
\WHILE{$\epsilon > \epsilon_f$}
	\STATE \# {\em apply standard gradient descent to convergence}
	\STATE $\btheta^\star \gets \mathrm{Gradient Descent Solution}(\btheta^\star)$
	\STATE \# {\em check for optimality of the solution}
	\IF{$\lambda_2(L(\btheta^\star))$ is simple {\bf or} $\max_{i,j} \vert Q^\top L_{ij} Q \vert < \tau$}
	\STATE {\bf return} $\btheta^\star$
	\ELSE 
	\STATE \# {\em obtain gradients at points sampled uniformly in a}
	\STATE \# {\em ball of radius $\epsilon$ around the current solution}
	\FOR {$i = 1 \ldots n_g$}
	\STATE $\btheta_i \sim U(\mathcal{B}_{\epsilon}(\btheta^\star))$
	\STATE $\pmb{\Gamma}_i \gets D_{\btheta} \lambda_2(\Ltheta) \vert_{\btheta = \btheta_i}$
	\ENDFOR
	\STATE \# {\em obtain the search direction}
	\STATE $\pmb{\Gamma}_s \gets \mathrm{argmin}_{\pmb{\Gamma} \in \mathbf{conv}(\{\pmb{\Gamma}_1, \ldots, \pmb{\Gamma}_{n_g}\})} \|\pmb{\Gamma} \|_F$
	\STATE \# {\em if the magnitude of the search direction is below}
	\STATE \# {\em the optimality threshold decrease sampling radius}
	\IF{$\|\pmb{\Gamma}_s\|_F < \tau $}
		\STATE $\epsilon \gets \eta \epsilon$
	\ELSE
	\STATE \# {\em update solution using inexact line-search}
	\STATE $\nu^\star \gets \ \approx \mathrm{argmin}_{\nu > 0} \lambda_2(L(\btheta^\star - \nu \pmb{\Gamma}_s))$
	\STATE $\btheta^\star \gets \btheta^\star - \nu^\star \pmb{\Gamma}_s$
	\ENDIF
	\ENDIF
	\ENDWHILE
\STATE {\bf return} $\btheta^\star$
\end{algorithmic}
\end{algorithm}

A brief derivation of the computational complexity of each iteration of the method is provided in Appendix~\ref{sec:complexity}.
Each step in the standard gradient descent algorithm requires $\mathcal{O}(lN(N+d(d-1)))$ operations. The gradient sampling step
requires $\mathcal{O}(d)$ gradient computations, therefore having complexity $\mathcal{O}(dlN(N+d(d-1)))$. The complexity of
computing the optimality conditions using directional derivatives is similar, requiring $\mathcal{O}(t^2lN(n+d(d-1)))$ operations, where
$t$ is the multiplicity of the eigenvalue $\lambda = \lambda_2(\Ltheta)$. Our experience with this method indicates that the algorithm
almost always terminates with $\lambda_2(\cdot)$ being simple, without the need for any gradient sampling or directional derivative computations.



Figure~\ref{fig:2dplots} shows two dimensional plots of a subset of the datasets used in our experiments in Section~\ref{sec:experiments}. The left plots show projections of the data onto the first two principal components.
The right plots show the optimal projections of the data obtained by minimising the objective in~(\ref{eq:orthog}) by applying Algorithm~\ref{alg:optimisation}, and using the normalised Laplacian. Figures~\ref{fig:2dyale} and~\ref{fig:2diso}
show examples where the principal components do not show a clear identification of any of the clusters, whereas the
optimal projections for spectral clustering clearly admit a strong separation of clusters. In Figure~\ref{fig:2dhdd} the principal component projection does show some separation of clusters. In this case optimisation of the spectral connectivity serves to enhance this separation, and make the individual clusters more compact. 

\begin{figure}[t!]
\subfigure[Yale Faces B]{\includegraphics[width = 4cm]{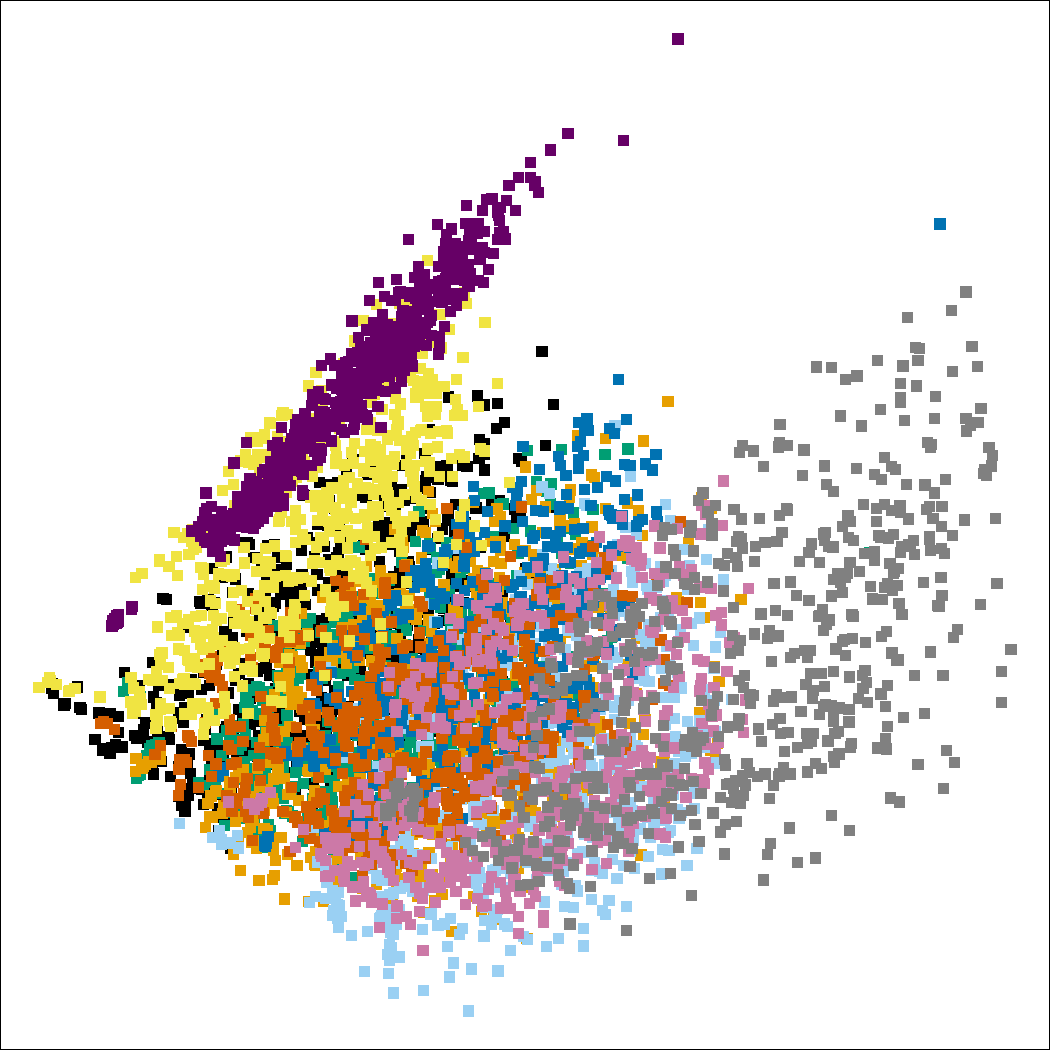} \includegraphics[width = 4cm]{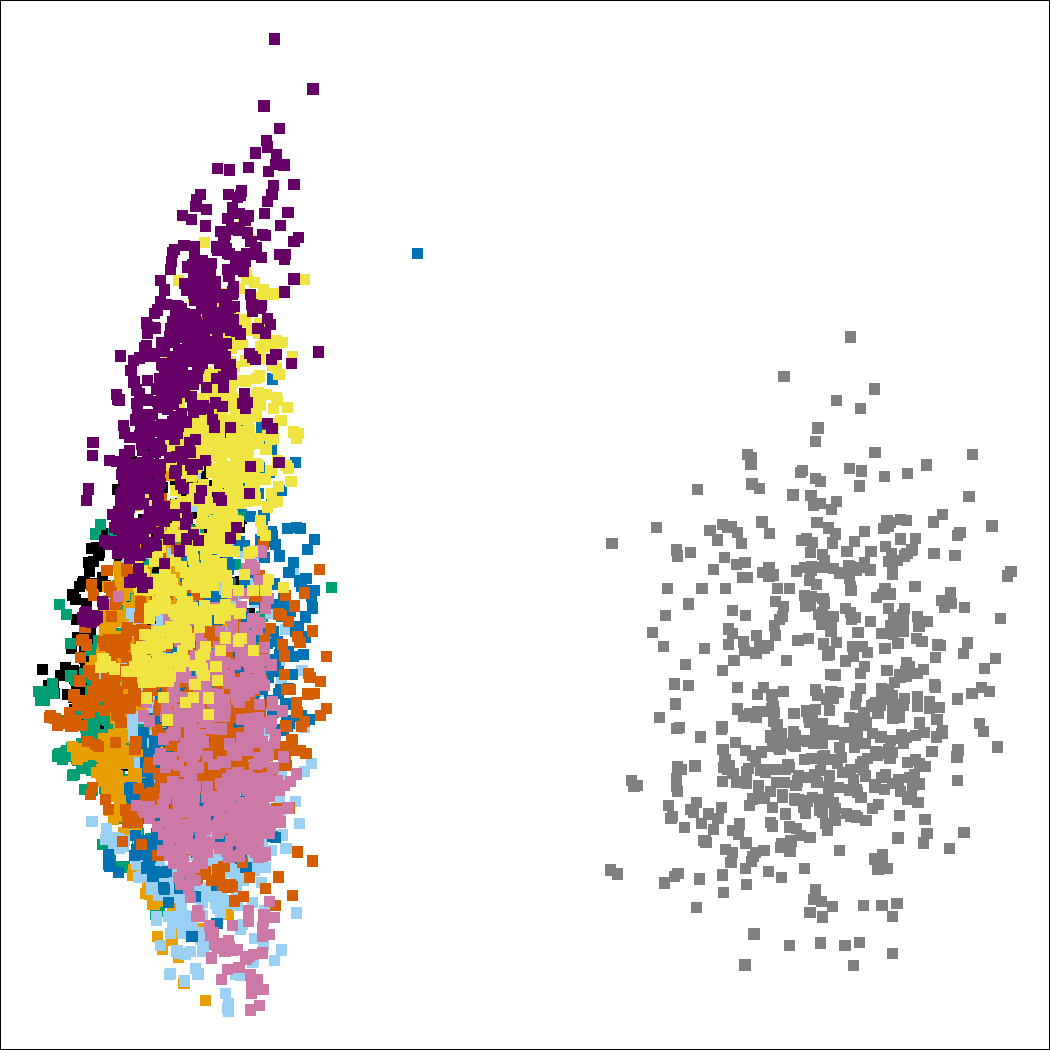}\label{fig:2dyale}}
\subfigure[Isolet]{\includegraphics[width = 4cm]{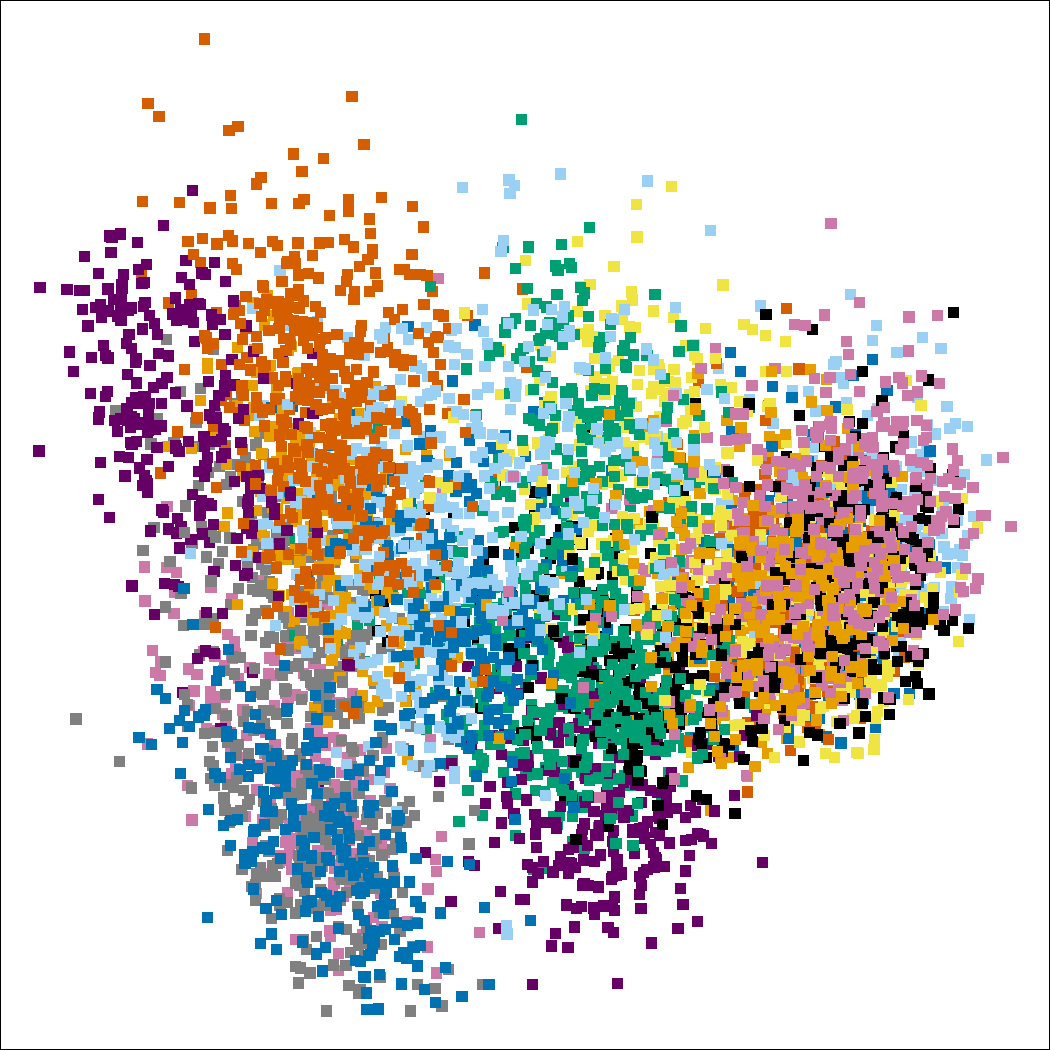} \includegraphics[width = 4cm]{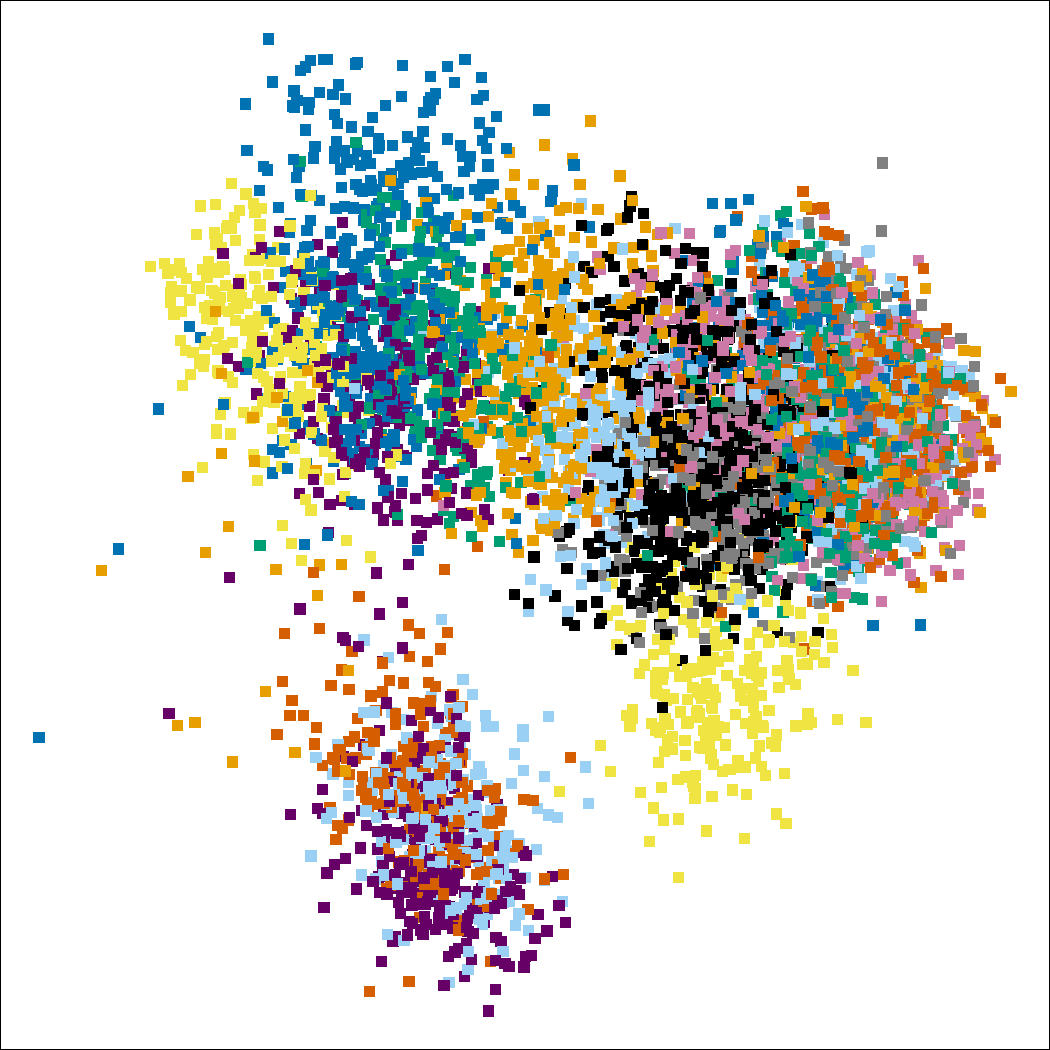}\label{fig:2diso}}
\subfigure[Multiple Feature Digits]{\includegraphics[width = 4cm]{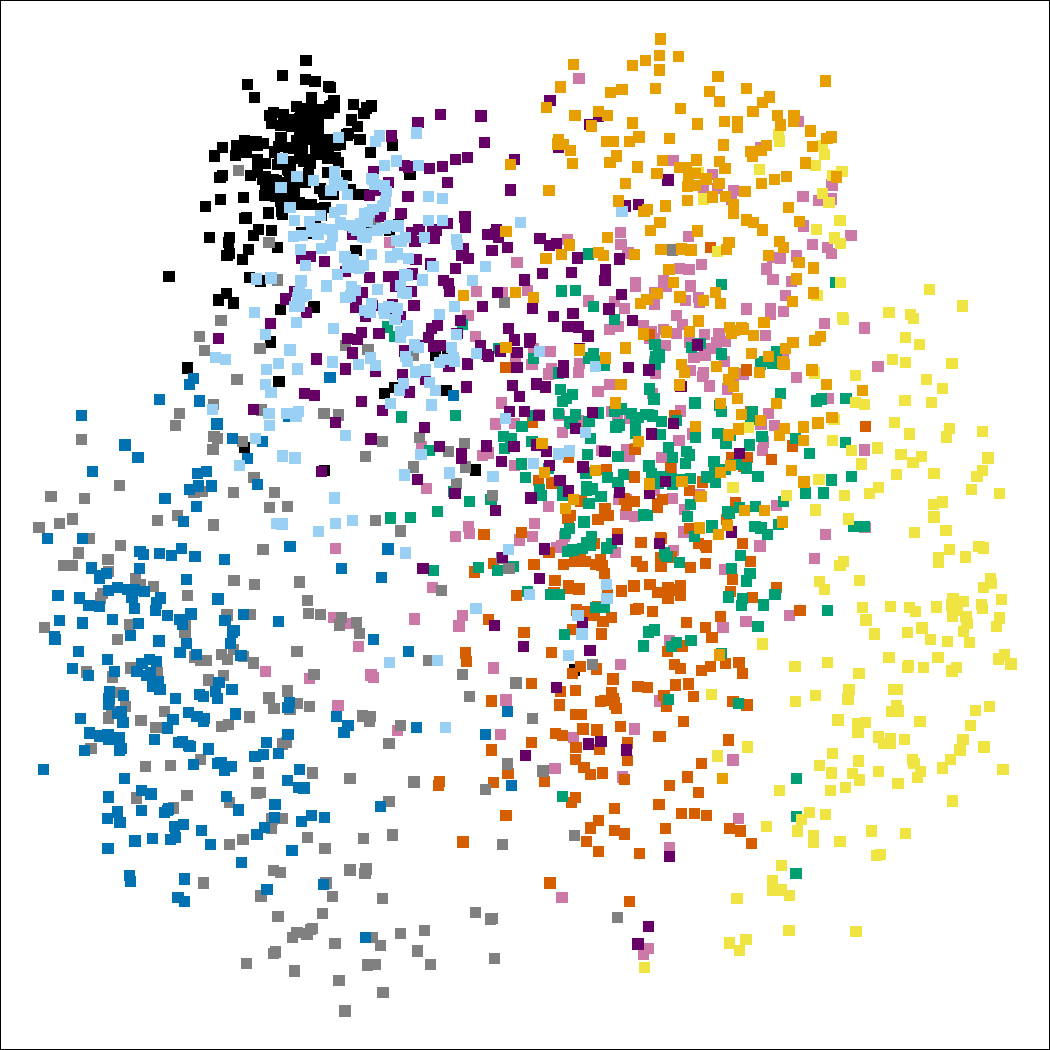} \includegraphics[width = 4cm]{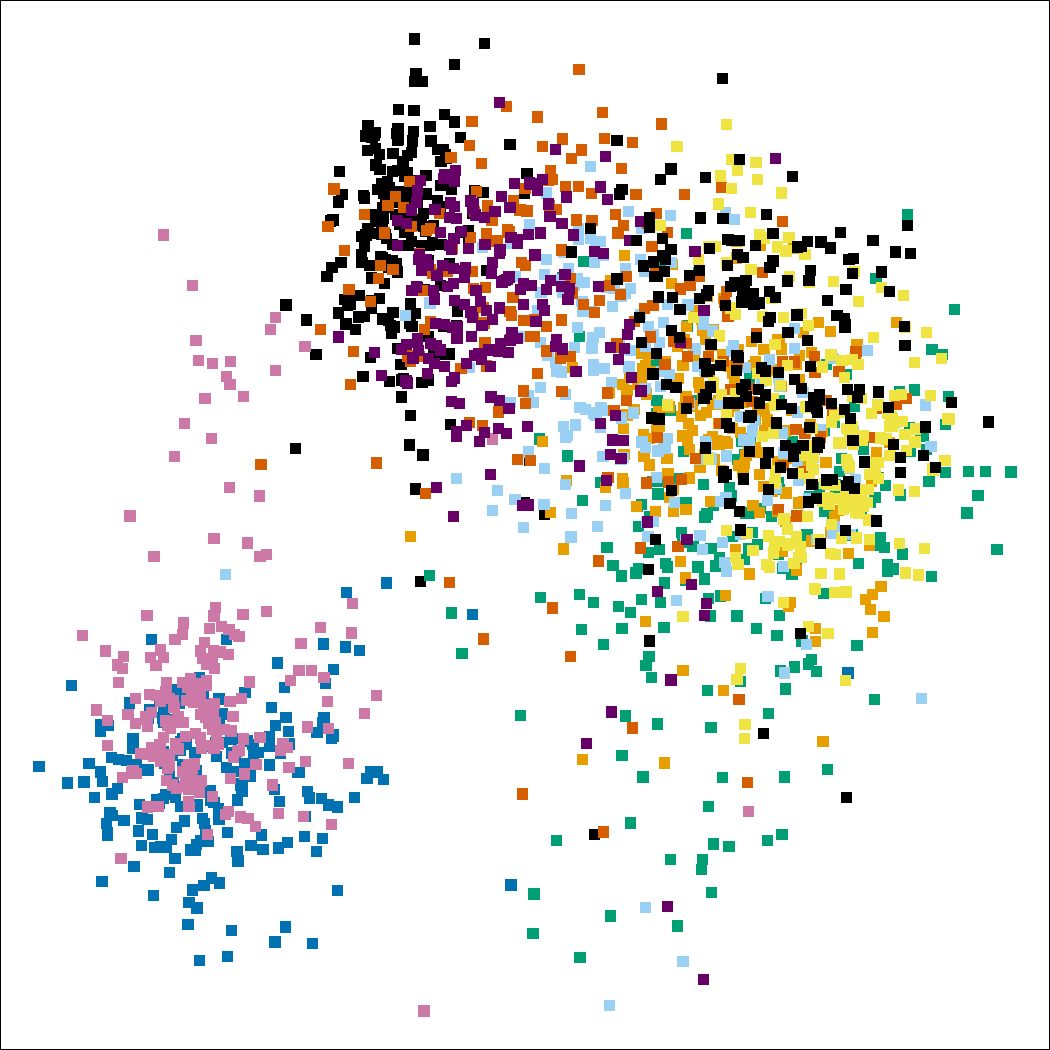}\label{fig:2dhdd}}
\caption{Two dimensional projections of publicly available datasets. PCA (left) and optimal projection for spectral clustering (right). \label{fig:2dplots}}
\end{figure}


\section{Connection to Maximum Margin Hyperplanes} \label{sec:maxmargin}

Maximum margin hyperplanes have become a unifying principle
in data classification tasks. Starting with the fully
supervised problem using support vector machines~\citep{vapnik1982estimation},
the methodology has been extended to semi-supervised classification~\citep{Joachims1999},
and more recently to the problem of maximum margin clustering~\citep{xu2004maximum,zhang2009maximum}.

In this section, we establish a connection between the optimal univariate
projection for spectral clustering
and maximum margin hyperplanes for clustering. In particular, we show that under suitable
conditions, as the scaling parameter, $\sigma$, tends to zero, the optimal
univariate projection for spectral bi-partitioning converges to the vector
normal to the largest margin hyperplane through the data. This establishes
a theoretical connection between separability measured by spectral graph
theory, and standard notions of separation in terms of the Euclidean metric.
Connections between maximum margin hyperplanes and Bayes optimal hyperplanes~\citep{Tong2000} as
well as minimum density hyperplanes~\citep{Pavlidis2015} have previously been established.
The result we discuss herein therefore connects spectral connectivity to these
objectives as well.

In this section we
use the notation $v(\pmb{\theta})$ instead of $V(\btheta)$ to stress that
the we are concerned with univariate projections.
A hyperplane is a translated subspace of co-dimension 1, and can be
parameterised by a vector $v \in \mathbb{R}^d \setminus \{0\}$
and a scalar $b$ as the set $H(v, b) = \{x \in \mathbb{R}^d \big \vert v^\top x =
b\}$. 
No generality is lost if $v$ is assumed to have unit norm,
thus the same
parameterisation by $\pmb{\theta}$ can be used.
%
%
%
For a finite set of points $\mathcal{X}$ in $\R^d$,
the \emph{margin} of hyperplane $H(v(\pmb{\theta}), b)$
w.r.t. $\mathcal{X}$ is the minimal Euclidean distance between $H(v(\pmb{\theta}), b)$
and $\mathcal{X}$,
\begin{equation}
\mbox{margin}(v(\pmb{\theta}), b) = \min_{x \in \mathcal{X}}\vert v(\pmb{\theta})^\top x
- b\vert.
\end{equation}

The set $\pmb{\Delta}(\btheta)$ again plays an important role as in many cases the largest margin hyperplane through a set of data separates only a few points from the rest, making it meaningless for the purpose of clustering. For the theory presented herein we consider an arbitrary convex and compact set $\pmb{\Delta} \subset \R^d$ and define $\pmb{\Delta}(\btheta)$ to be the projection of $\pmb{\Delta}$ onto $v(\btheta)$.
What we in fact show in this section is that there exists a set $\pmb{\Delta}^\prime \subset \pmb{\Delta}$ satisfying
$\pmb{\Delta}^\prime \cap \mathcal{X} = \pmb{\Delta} \cap \mathcal{X}$, such that, as the scaling parameter tends to zero, the optimal projections for $\lambda_2(\Ltheta)$ and $\lambda_2(\LNtheta)$ converge to the vector admitting the largest margin hyperplane that intersects $\pmb{\Delta}^\prime$. The distinction between the largest margin hyperplane intersecting $\pmb{\Delta}^\prime$ and that intersecting $\pmb{\Delta}$ is scarcely of practical relevance, but plays an important role theoretically. It accounts for situations when the largest margin hyperplane intersecting $\pmb{\Delta}$ lies close to its boundary and the distance between the hyperplane and the nearest point outside $\pmb{\Delta}$ is larger than to the nearest point inside $\pmb{\Delta}$. Aside from this very specific case, the two solutions in fact coincide.

The following theorem is the main result of this section. The proof and supporting results are provided in Appendix~\ref{sec:proofs}.
The result holds for all similarities in which the function $k$, in Eq.~(\ref{eq:sim}), satisfies the tail condition $\lim_{x\to\infty} k((1 + \epsilon)x)/k(x) = 0$ for all $\epsilon > 0$. This condition is satisfied by functions with exponentially decaying tails, including the popular Gaussian and Laplace kernels, but not those with polynomially decaying tails.

The proof of the result relies on obtaining upper and lower bounds on the magnitude of
$\lambda_2(\Ltheta)$ and $\lambda_2(\LNtheta)$ which depend essentially
on $k(M/\sigma)$, where $M$ is the largest gap between consecutive points
in $\mathcal{P}(\btheta)$. Notice that $M$ is equal to twice the maximum
margin of all hyperplanes orthogonal to $v(\btheta)$. These bounds show
immediately that as $\sigma$ approaches zero, if $\lambda_2(L(\btheta_1))<
\lambda_2(L(\btheta_2))$ (or $\lambda_2(L_{\mathrm{N}}(\btheta_1))<
\lambda_2(L_{\mathrm{N}}(\btheta_2))$) then the maximum margin of all
hyperplanes orthogonal to $v(\btheta_1)$ is greater than the maximum margin
of all hyperplanes orthogonal to $v(\btheta_2)$. The convergence
of the optimal projection itself to the vector normal to the maximum margin
hyperplane uses a property of the maximum margin hyperplane established
by~\citet{Pavlidis2015}. 

\begin{thm} \label{thm:convergence}
Let $\mathcal{X} = \{x_1, ..., x_N\}$ be a finite set of points in $\R^d$ and suppose that there is a unique hyperplane, which can be parameterised by
$(v(\btheta^\star), b^\star)$, intersecting $\pmb{\Delta}^\prime$ and attaining maximal margin on $\mathcal{X}$.
Let $k : \mathbb{R}_+ \to \mathbb{R}_+$ be decreasing, positive and
satisfy $\lim_{x \to \infty} k((1+\epsilon)x)/k(x) = 0$ for all $\epsilon > 0$. For $\sigma>0$
define 
$$
\btheta_{\sigma}:=\mbox{argmin}_{\btheta \in \Th}\lambda_2(L(\btheta, \sigma)),$$
$$
\btheta^N_{\sigma}:=\mbox{argmin}_{\btheta \in \Th}\lambda_2(L_{\mathrm{N}}(\btheta, \sigma)),$$
where 
there is now an explicit dependence on the
scaling parameter, $\sigma$. Then,
\begin{align*}
\lim_{\sigma \to 0^+} v(\btheta_{\sigma}) \ =
\lim_{\sigma \to 0^+} v(\btheta^N_{\sigma}) \ \ = \ \ v(\btheta^\star).
\end{align*}
\end{thm}

We note that the same result holds when using the Euclidean metric. In this case the optimal
projection based on spectral connectivity converges to the vector normal to the maximum margin
hyperplane through the data. The importance
of constraining the maximum margin hyperplane to avoid separating only outliers was also observed by~\citet{xu2004maximum} and~\citet{ zhang2009maximum}.

While the above result is only established
for univariate projections, we have
observed empirically that if a decreasing sequence of scaling parameters is
employed for a multivariate projection, then 
the projected data, $\mathcal{P}(\btheta)$, tend to exhibit
large Euclidean separation. This is illustrated in Figure~\ref{fig:LMSCyeast}
which shows two dimensional plots of the 72 dimensional yeast cell cycle
analysis dataset~\citep{BacheL2013}.
The left plots show the true clusters,
while the right plots show the cluster assignments made by the algorithm.
In Figure~\ref{fig:1dmm} the horizontal axis corresponds to the optimal projection
obtained by minimising $\lambda_2(\LNtheta)$ for a decreasing sequence of
scaling parameters, while the vertical axis is the direction of maximum variance
orthogonal to this vector. Figure~\ref{fig:2dmm} instead shows the result of
two dimensional projection pursuit for a decreasing sequence of scaling parameters.

\begin{figure}[!t]
\subfigure[One dimensional projection pursuit]{\includegraphics[width = 3.8cm]{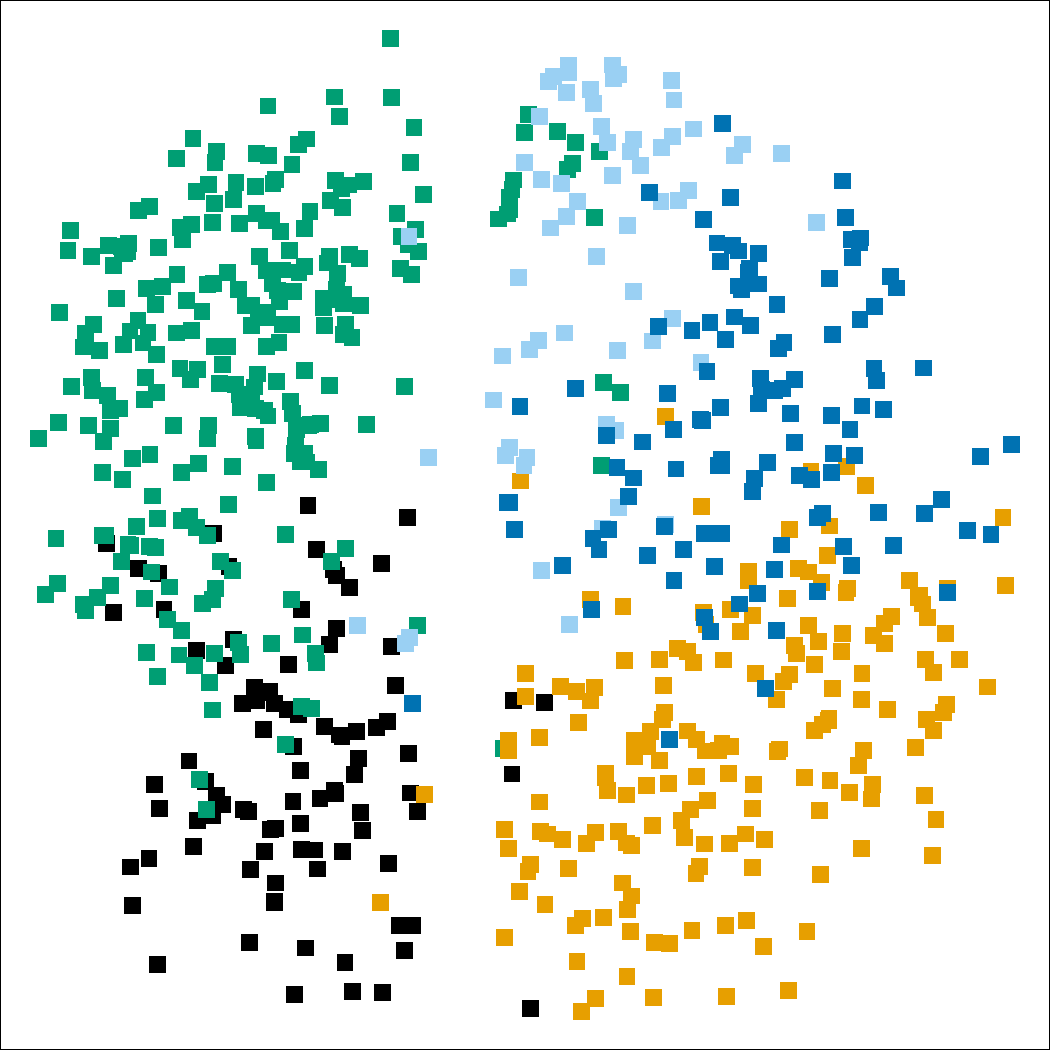}
\includegraphics[width = 3.8cm]{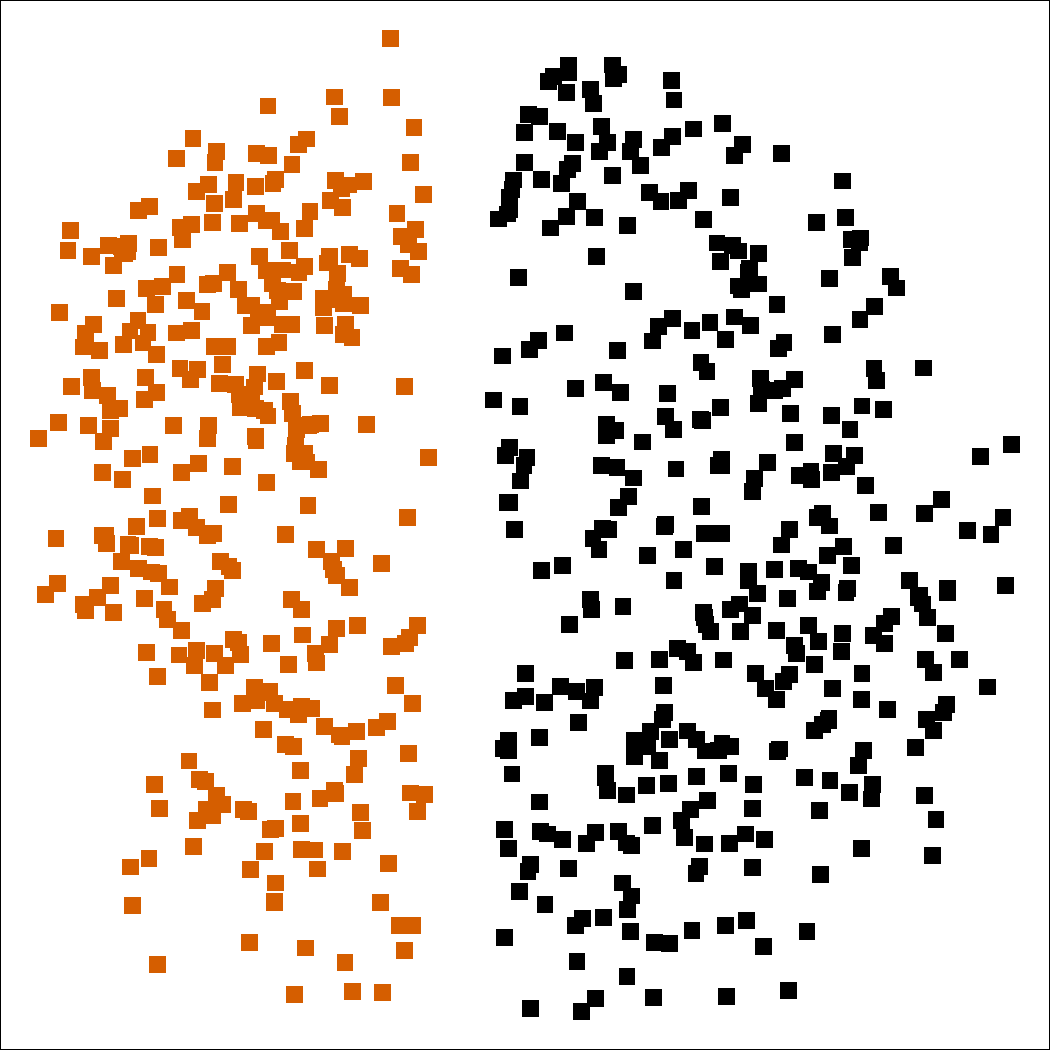} \label{fig:1dmm}}
\subfigure[Two dimensional projection pursuit]{\includegraphics[width = 3.8cm]{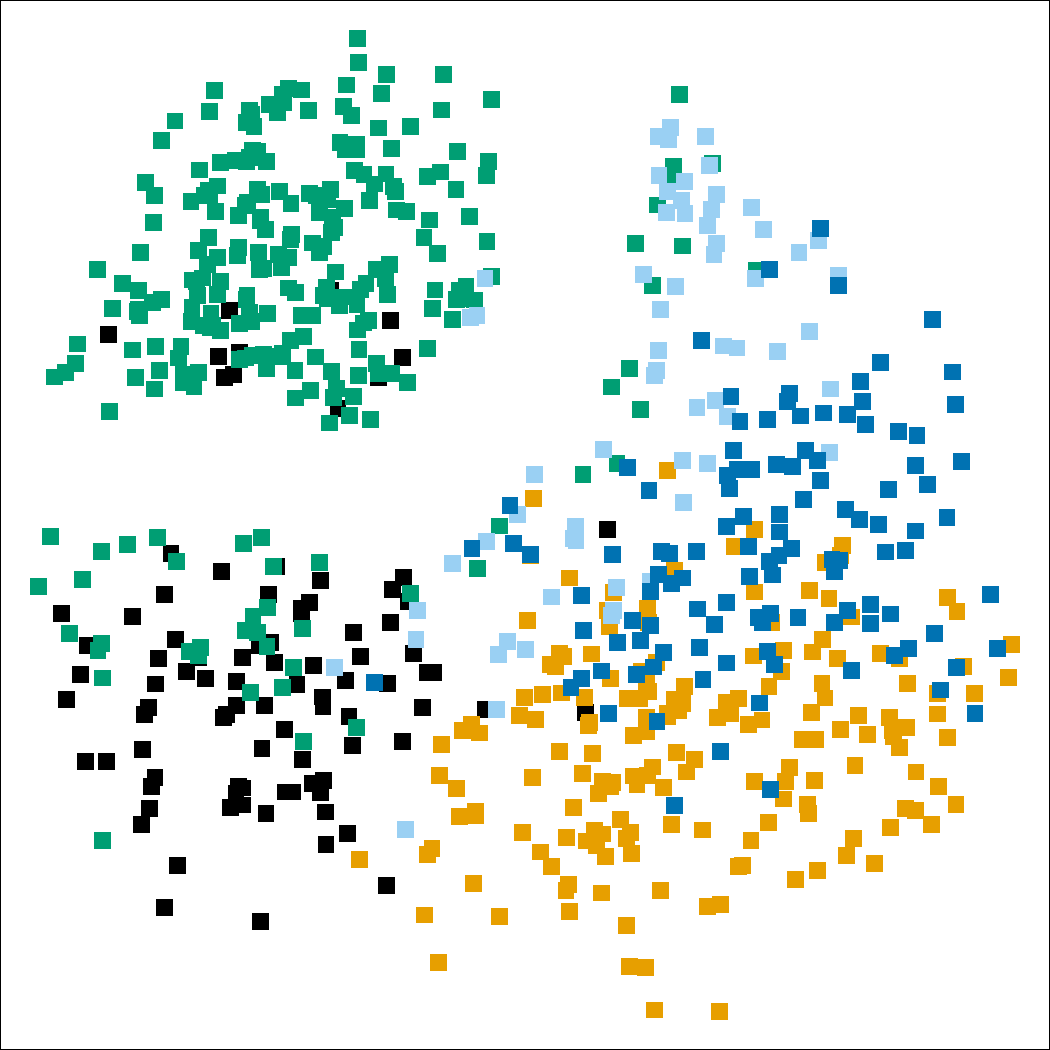}
\includegraphics[width = 3.8cm]{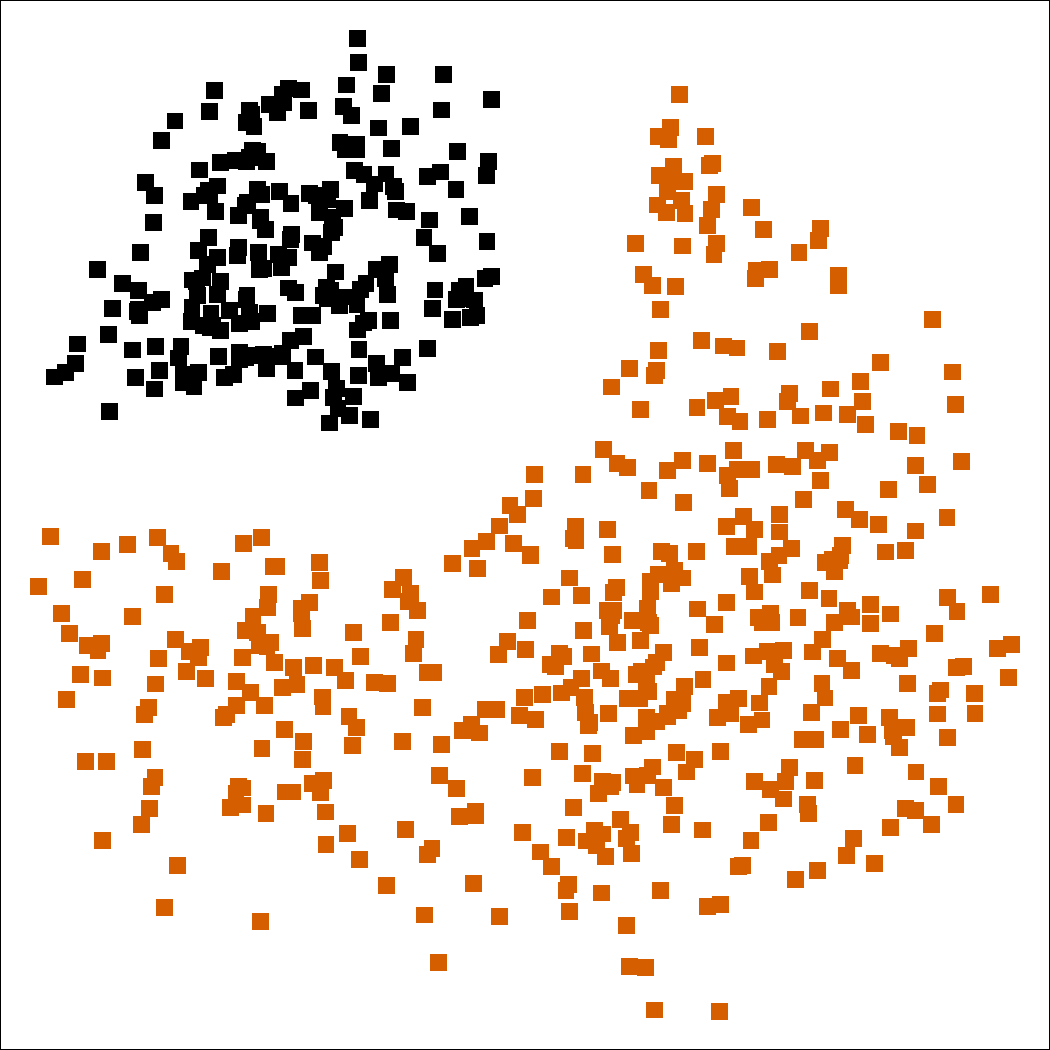} \label{fig:2dmm}}
%
%
\caption{Large Euclidean separation of yeast cell cycle dataset
by decreasing the scaling parameter during one and two dimensional
projection pursuit.}\label{fig:LMSCyeast}
\end{figure}



\section{Speeding up Computation} \label{sec:microclust}

Each step in the projection pursuit algorithm involves the solution of an eigen
problem which requires $\mathcal{O}(N^2)$ operations.
In this section we discuss how preprocessing a dataset using
\emph{microclusters}~\citep{Zhang1996} can reduce this cost significantly, and
derive theoretical bounds on the approximation error. 
Microclusters are small clusters of data which can in turn be clustered to obtain
a complete clustering of a data set. A microcluster based approach 
to reduce the computational cost of the standard spectral clustering algorithm has been previously proposed
by~\citet{yan2009fast}. 
In this work
we use microclusters to obtain an approximation of the optimisation surface for projection pursuit
which is significantly less expensive to explore.

In the microcluster approach, the data set $\mathcal{X} = \{x_1, \ldots, x_N\}$ is
replaced by $m$ points $\{c_1, \ldots, c_m\}$ which represent the centres of a
$m$-way clustering of $\mathcal{X}$. By projecting these microcluster centres during
projection pursuit rather than the data the computational cost
associated with each eigen problem is reduced to
$\mathcal{O}(m^2)$. If we define the radius, $\rho$, of a cluster $C$ to be the
largest distance between any one of its members and its centre,
\begin{equation}
\rho(C) = \max_{x \in C}\left\|x - \frac{1}{\vert C \vert}\sum_{x \in C} x
\right\|,
\end{equation}
then we expect the approximation error to be small whenever the microcluster
radii are small. This relationship is shown in the following lemma. The proof of the
lemma, which is given in Appendix~\ref{sec:proofs}, relies on a result from matrix perturbation theory for diagonally dominant
matrices~\citep[Th. 3.3]{ye2009}

\begin{lem}\label{thm:approxbound1}
Let $\C = C_1, \ldots, C_m$ be a $m$-way clustering of $\X$ with centres $c_1, \ldots, c_m$,
radii $\rho_1, ..., \rho_m$ and counts $n_1, ..., n_m$.
For $\btheta \in \Theta$ define $N(\btheta), B(\btheta) \in \mathbb{R}^{m \times m}$
where $N(\btheta)$ is the diagonal matrix with,
\begin{equation*}
N(\btheta)_{i,i} = \sum_{j=1}^m n_j s(P^c(\btheta), i, j),
\end{equation*}
and
\begin{equation*}
B(\btheta)_{i,j} = \sqrt{n_in_j} s(P^c(\btheta), i, j),
\end{equation*}
where $P^c(\btheta) = \{V(\btheta)^\top c_1, ..., V(\btheta)^\top c_m\}$ are the projected microcluster
centres and the similarities are given by $s(P^c(\btheta), i, j) = k(d(V(\btheta)^\top c_i, V(\btheta)^\top c_j)/\sigma)$, and $k(x)$ is positive and non-increasing for $x\geq0$. Then,
\begin{align*}
\frac{\vert \lambda_2(L(\pmb{\theta})) - \lambda_2(N(\btheta)-B(\btheta)) \vert}{\lambda_2(L(\pmb{\theta}))}& \\
	\leqslant \max_{i \not = j} 
	\max \Bigg\{1-&\frac{k(D_{ij}/\sigma)}{k((D_{ij}-\rho_i-\rho_j)^+/\sigma)}, \\
      &\hspace{-10pt} \frac{k(D_{ij}/\sigma)}{k((D_{ij}+\rho_i+\rho_j)/\sigma)} - 1\Bigg\},
\end{align*}
where $D_{ij} = d(V(\btheta)^\top c_i,V(\btheta)^\top c_j)$ and $(x)^+ = \max\{0, x\}$.
\end{lem}

The bound in the above lemma depends on $\btheta$ via the quantity $D_{ij}$. Uniform bounds can be derived for specific functions, $k$. For example, if using the Gaussian kernel, $k = \exp(-x^2/2)$, then we can show that
\begin{align*}
&\frac{\vert \lambda_2(L(\pmb{\theta})) - \lambda_2(N(\btheta)-B(\btheta)) \vert}{\lambda_2(L(\pmb{\theta}))} \\
	&\hspace{10pt}\leqslant \max_{i \not = j} \exp\left(\frac{(\rho_i+\rho_j)^2+2(\rho_i+\rho_j)\mathrm{Diam}(\X)}{2\sigma^2}\right) - 1.
\end{align*}
If $k$ is the Laplace kernel, $k(x) = \exp(-\vert x\vert)$, then we instead have
\begin{align*}
&\frac{\vert \lambda_2(L(\pmb{\theta})) - \lambda_2(N(\btheta)-B(\btheta)) \vert}{\lambda_2(L(\pmb{\theta}))} \\
	&\hspace{120pt}\leqslant \max_{i \not = j}  \exp\left(\frac{\rho_i+\rho_j}{\sigma}\right) - 1.
\end{align*}
Clearly if the radii of the microclusters are small relative to the
scale parameter, $\sigma$, then these bounds are close to zero. However the uniform bounds are pessimistic,
and to obtain a reasonable bound on the approximation surface, as many
as $m \approx 0.6N$ might be needed, leading to only a threefold speed up. We
have observed empirically, however, that even for $m = 0.1N$ (and sometimes
lower) one still obtains a close approximation of the optimisation surface.
This renders the projection pursuit of the order of 100 times faster.

While bounds of the above type are not verifiable for $\LNtheta$ since this
matrix is not diagonally dominant, a similar degree of agreement
between the true and approximate eigenvalues has been observed.

Once an optimal projection has been determined, the corresponding bi-partition needs to be established. We again use the microclusters to determine this partition. Let $\mathcal{P}(\btheta)^\prime = \{V(\btheta)^\top c_1, V(\btheta)^\top c_1, \ldots, V(\btheta)^\top c_m, V(\btheta)^\top c_m\}$,
where each $V(\btheta)^\top c_i$ is repeated $n_i$ times. $\mathcal{P}(\btheta)^\prime$ therefore
represents an approximation of the projected data set, where each datum is replaced by its assigned microcluster.
It is straightforward to verify that if $u^C$ is the second eigenvector
of $N(\btheta) - B(\btheta)$, then the vector $u\in \R^N$, with
$u_i = u^C_j/\sqrt{n_j}$ for all $i$ s.t. $x_i$ is in microcluster $j$, is the second eigenvector
of the Laplacian of $\mathcal{P}(\btheta)^\prime$. The vector $u$ therefore represents
an approximation of the second eigenvector of $\Ltheta$.
In case of the normalised Laplacian the $m\times m$ matrix is given by the normalised
Laplacian of the graph of $\P^c(\btheta)$ with similarities given by
$n_i n_j s(P^c(\btheta), i, j)$. This matrix has the same structure as the
original normalised Laplacian, the only difference being the introduction of
the factors $n_i, n_j$.
The approximation of the second eigenvector of $\LNtheta$ is again given
by $u_i = u_j/\sqrt{n_j}$ whenever $x_i$ is in microcluster $j$.
This approximate eigenvector is then used to determined the partition of the data.



\section{Practical Implementation and Experimental Results} \label{sec:experiments}

We have found that projection pursuit based on both $\lambda_2(\Ltheta)$ and $\lambda_2(\LNtheta)$
leads to high quality clustering results. However, we have observed empirically
that the
minimisation of $\lambda_2(\LNtheta)$ is more robust to varying parameter settings,
and we recommend using this objective. Our complete clustering
algorithm, which we will refer to as Spectral Clustering Projection Pursuit (SCPP),
is summarised in Algorithm~\ref{alg:SCPP}\footnote{An {\tt R} implementation of the SCPP algorithm is available at \url{https://github.com/DavidHofmeyr/SCPP}}. Starting with all the data in a single cluster,
we recursively bi-partition the data until we have the desired number of clusters. At each iteration we simply split the largest cluster in the current partition.
To split a cluster, we first obtain $m$ microclusters from it, for which we use the $K$-means algorithm. We then apply Algorithm~\ref{alg:optimisation} to
obtain the optimal projection, $\btheta^\star$, based on Eq.~(\ref{eq:orthog}). Recall that the normalised Laplacian
based on (weighted) projected microcluster centers $\P^c(\btheta) = \{V(\btheta)^\top c_1, ..., V(\btheta)^\top c_m\}$ is given by $L_{\mathrm{N}}(\btheta) = D(\btheta)^{-1/2}L(\btheta)D(\btheta)^{-1/2} = I - D(\btheta)^{-1/2}A(\btheta)D(\btheta)^{-1/2}$, where
$A(\btheta)_{ij} = n_i n_j s(P^C(\btheta), i, j)$ and $D_{ii} = \sum_{j=1}^m A(\btheta)_{ij}$.
To obtain a bi-partition of the cluster we use the method recommended by~\citet{NgJW2002}.
For this we obtain the first two eigenvectors of $\Ln(\btheta^\star)$ as the matrix $U^c \in \R^{m \times 2}$.
From these we obtain the approximate eigenvectors of the Laplacian of the complete set of projected points
as the matrix $U \in \R^{N\times 2}$, with $i$-th row equal to the $j$-th row of $U^c$ divided by $\sqrt{n_j}$ for each $x_i$ in
microcluster $j$. We then normalise the rows of $U$ and apply $K$-means for $K=2$.
For the sake of easier interpretability we make our algorithm completely deterministic by initialising
all implementations of $K$-means as follows. We select the first center to be the point furthest from the mean of the
data. We then iteratively add to the set of initial centroids the furthest point from the current set.\\
\\
\begin{algorithm}[t!]
\caption{SCPP}
\label{alg:SCPP}
\begin{algorithmic}
\STATE Input: Dataset $\X$, number of clusters $K$
\STATE Output: Partition $\Pi$ of $\X$ into $K$ clusters
\STATE \# {\em Initialise $\Pi$ as the set containing $\X$}
\STATE $\Pi \gets \{\X\}$
\WHILE{$\vert \Pi \vert < K$}
	\STATE \# {\em Select the next cluster to split, $\C^\prime$}
	\STATE $\C^\prime \gets \mbox{argmax}_{\C \in \Pi}$ $\vert \C \vert$
	\STATE \# {\em Obtain centers and counts from microclustering of $\C^\prime$}
	\STATE $[\{c_1...c_m\}, \{n_1... n_m\}] \gets Microcluster(\C^\prime)$
	\STATE \# {\em Optimise projection for spectral clustering of $\P^c(\btheta)$}
	\STATE $\btheta^\star \gets \mathrm{argmin}_{\btheta} \lambda_2(\LNtheta) + \omega\sum_{i\not = j} (V(\btheta)_i^\top V(\btheta)_j )^2$
	\STATE \# {\em Find the first two eigenvectors of $\Ln(\btheta^\star)$}
	\STATE $U^c \gets \mathrm{argmin}_U \mathrm{trace}(U^\top L_{\mathrm{N}}(\btheta^\star) U)$ s.t. $U^\top U = I$
	\STATE \# {\em Get approximate eigenvectors of Laplacian of $V(\btheta^\star)^\top \C^\prime$}
	\STATE $U \gets U_i = U_j^c/\sqrt{n_{j}} \iff x_i \in \mathrm{microcluster} \ j$
	\STATE \# {\em Normalise the rows of $U$}
	\STATE $U_i \gets U_i/\|U_i\|, \ \forall \ i = 1, \ldots, N$
	\STATE \# {\em Bi-partition rows of $U$ using $k$-means}
	\STATE $[\mathcal{U}_1, \mathcal{U}_2] \gets K\mathrm{-means}(U, 2)$
	\STATE \# {\em Obtain corresponding split of $\C^\prime$}
	\STATE $\C_1 \gets \cup_{i: U_i \in \mathcal{U}_1} \{x_i\}, \C_2 \gets \cup_{i: U_i \in \mathcal{U}_2} \{x_i\}$
	\STATE \# {\em Update overall partition $\Pi$}
	\STATE $\Pi\gets (\Pi\setminus \{\C^\prime\})\cup \{\C_1, \C_2\}$
\ENDWHILE
\STATE {\bf return} $\Pi$
\end{algorithmic}
\end{algorithm}
The clustering model obtained by the SCPP algorithm has a binary tree structure, as illustrated in Figure~\ref{fig:bi-tree}. The figure shows a divisive hierarchical clustering of the 256 dimensional phoneme dataset~\citep{HastieTF2009}.  Each scatter plot shows the data assigned to the corresponding node in the model projected into the optimal subspace based on the minimisation of the second eigenvalue of the Laplacian matrix. In Figure~\ref{fig:bi-tree-nolabel} the colours indicate the binary partitions made by the SCPP algorithm, while in Figure~\ref{fig:bi-tree-label} the colours show the true cluster labels of the data. The model has accurately partitioned the clusters; indicated by the fact that the leaf nodes each contain primarily data of a single cluster, and aside from the two clusters arising in the bottom most level in the hierarchy no cluster is split among multiple leaves.
\begin{figure}
\centering
\subfigure[Without cluster labels]{\includegraphics[width = 7cm]{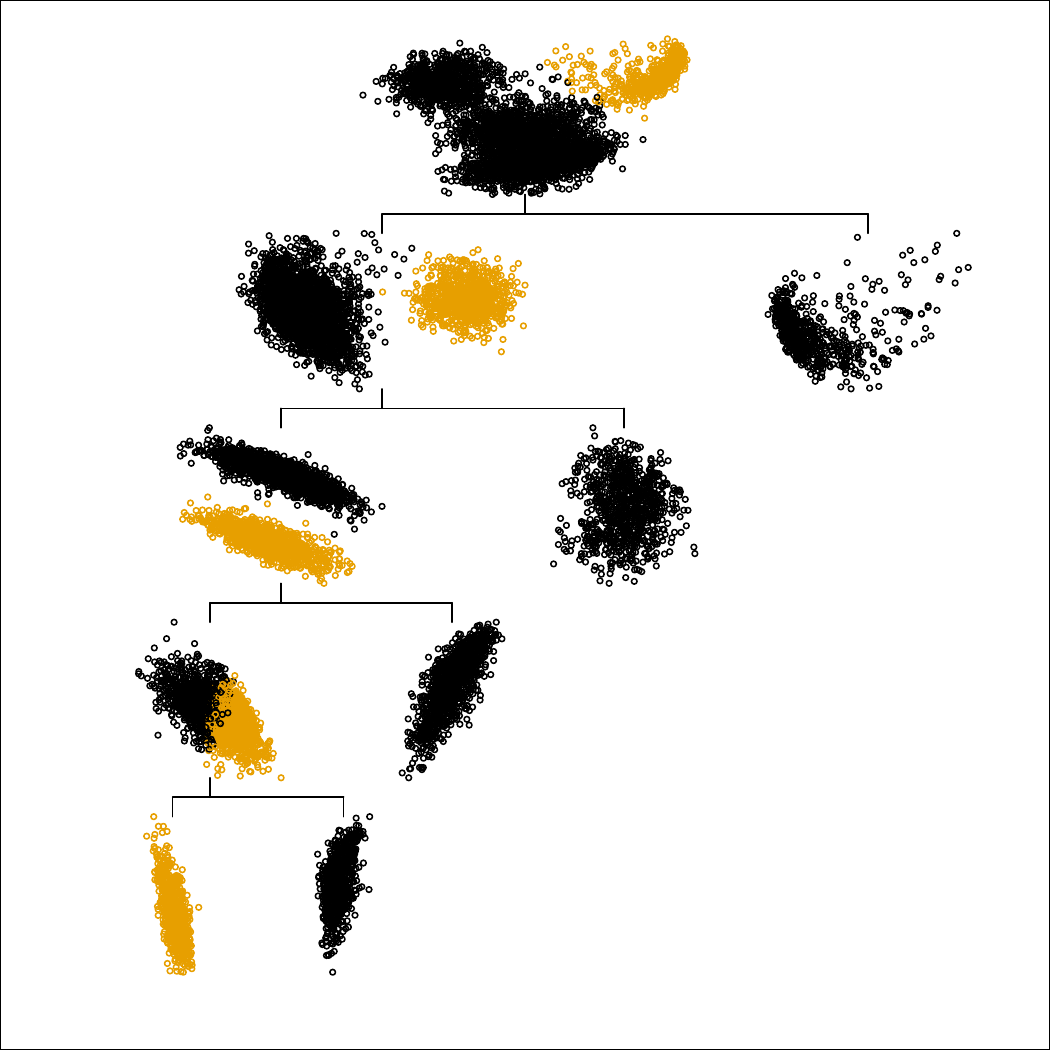}\label{fig:bi-tree-nolabel}}
\subfigure[With true cluster labels]{\includegraphics[width = 7cm]{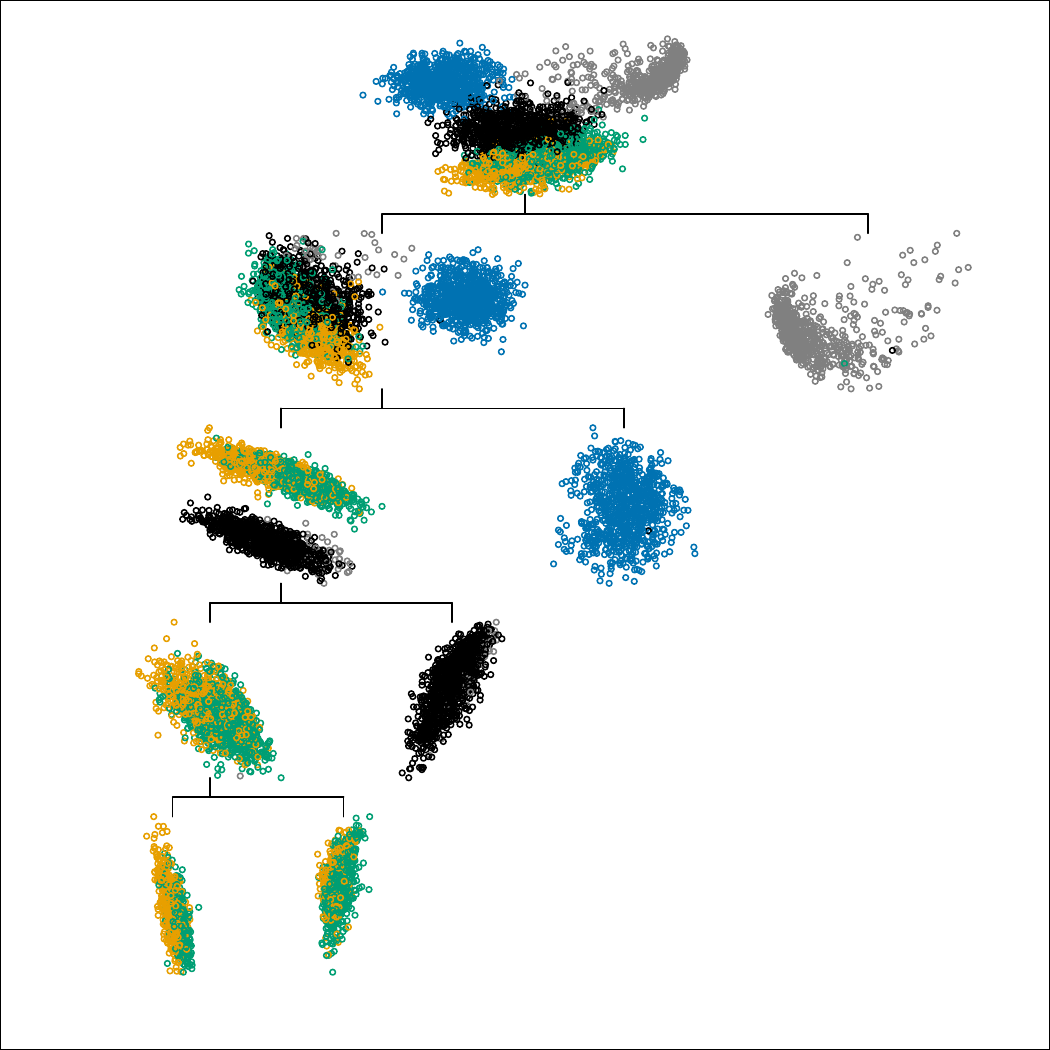}\label{fig:bi-tree-label}}
\caption{Hierarchical clustering model obtained by SCPP on phoneme dataset \label{fig:bi-tree}}
\end{figure}
\subsection{Parameter Settings for SCPP}
For the experiments herein, we use the following settings. In all cases the data dependent settings are determined for each partition using the subset of the data being split. We set $l$, the dimension of the projection to 2 as this is the lowest number of dimensions which admits non-linear separation of clusters. We initialise the projection pursuit using the first two principal components. We have found that this often leads to higher quality solutions compared to random initialisations. Experiments with higher dimensional projections have not shown substantially improved performance.
Similarities between projected points are determined using the Gaussian kernel. The scale parameter, $\sigma$, is set as follows. We approximate $d^*$, the intrinsic dimensionality of the data, using Kaiser's criterion~\citep{kaiser1960application}. We then set $\sigma = \sqrt{\bar \lambda}\left(\frac{4}{3N}\right)^{\frac{1}{4+d^*}}$, where
$\bar \lambda$ is the average of the first $d^*$ eigenvalues of the covariance matrix of the data. The factor $\sqrt{\bar\lambda}$ captures the scale of the data, while $\left(\frac{4}{3N}\right)^{\frac{1}{4+d^*}}$ is borrowed from kernel density bandwidth estimation, and we have found it to work well for our problem as well.

Recall that we use $\pmb{\Delta}(\btheta)$ to mitigate the influence of outliers. We define $\pmb{\Delta}(\btheta) = \Delta_1 \times \dots \times \Delta_l$,
 where $\Delta_i = [\mu_i-\beta
\sigma_{i}, \mu_i + \beta \sigma_{i}]$; $\mu_i$ and $\sigma_{i}$ are the mean
and standard deviation of the $i$-th component of the projected data respectively; and
$\beta \geqslant 0$ controls the size of
$\pmb{\Delta}(\btheta)$. Rather than attempting to define a single value of $\beta$ which
is appropriate for all datasets, we initialise $\beta$ to a large value,
$\beta=5$, and decrease $\beta$ until the induced bi-partition is sufficiently balanced. For this we define a minimum
cluster size, the average cluster size in the complete clustering solution divided by 5. That is, we decrease $\beta$ until the smaller of the two clusters contains at least $\frac{N}{5K}$ points, where $N$ is the number of data in the complete dataset being clustered.
Note that in general we do not have to execute the optimisation of $\btheta$ to convergence for each value of $\beta$, since a few
iterations generally suffice to determine if the optimisation is focusing on outliers. We therefore terminate the optimisation
as soon as the induced partition does not meet the desired balance, reduce $\beta$, and reinitialise.

The setting of the parameter $\omega$, which controls the penalisation of non-orthogonal projections, does not affect the result substantially provided it is relatively larger than the eigenvalues being optimised. Since $\lambda_2(\LNtheta)$ is bounded above by 1, we set $\omega = 1$.

Finally, for our experiments we use a small number of microclusters, $m = 200$. A sensitivity study presented in Section~\ref{sec:sensitivity} using simulated data shows that even for data sets of up to 10 000
points and in 50 dimensions, 200 microclusters are sufficient to obtain high quality clustering results.

\subsection{Competing Approaches}

We compare our approach against existing dimension reduction
methods for clustering, where the final clustering result is determined using
spectral clustering. We use SC to refer to spectral clustering
applied to the original data, and SC$_{\mathrm{PC}}$ and SC$_{\mathrm{IC}}$ to refer
to spectral clustering applied to Principal and Independent Component
projections of the data respectively. DRSC refers to dimensionality reduction for spectral clustering, proposed by~\citet{niu2011dimensionality}. For SC$_{\mathrm{PC}}$, SC$_{\mathrm{IC}}$ and DRSC we consider $K-1$ dimensional
projections, as suggested by~\citet{niu2011dimensionality}. These approaches all directly seek a $K$ way partition of the data.

For these competing approaches we compute clustering results for all values of $\sigma$ in $\{0.1, 0.2, 0.5, 1, 2, 5, 10, 20, 50,$ $100, 200\}$, and select the solution which gives the lowest cluster distortion measure. This selection criterion is recommended by~\citet{NgJW2002} and~\citet{niu2011dimensionality}. We also compute the clustering result for the local scaling approach of~\citet{Zelnik2004}. We report the highest performance of these two in each case. We also provide DRSC with a warm start via PCA as this improved performance over a random initialisation, and provides a fair comparison.

The connection between optimal projections for spectral clustering
and maximum margin clustering, established in Section~\ref{sec:maxmargin}, also
leads us to compare our method with the iterative support vector regression approach
of~\citet{zhang2009maximum},
%
a state-of-the-art maximum margin clustering algorithm. We use iSVR$_{\mathrm{G}}$ to refer to
this method, where the subscript G indicates that we use the Gaussian kernel. We set the balancing parameter equal to $0.3$ as suggested by~\citet{zhang2009maximum} when the cluster sizes are not balanced. 
The unbalanced setting led to superior performance compared with the balanced setting in the examples considered. The iSVR approach generates only a bi-partition, and to generate multiple clusters we apply the same divisive approach as in our method.

\subsection{Clustering Results}

We compare the different methods based on two popular evaluation metrics for clustering, namely Purity~\citep{zhao2004empirical},  and Normalised Mutual Information (NMI)~\citep{strehl2002cluster}. These metrics compare the cluster assignments with the true labels of the data. Both take values in $[0, 1]$, with larger values indicating better performance.

The following benchmark datasets were used for comparison.
Optical recognition of handwritten digits (Opt. Digits)\footnote{\tt https://archive.ics.uci.edu/ml/datasets.html}, Pen based recognition of handwritten digits (Pen Digits)$^1$, Multiple feature digits (M.F. Digits)$^1$, Satellite$^1$, Statlog image segmentation (Image Seg.)$^1$, Breast cancer Wisconsin (Br. Cancer)$^1$, Synthetic control chart (Chart)$^1$, Isolet$^1$, Dermatology$^1$, Yeast cell cycle analysis (Yeast)\footnote{\tt http://genome-www.stanford.edu/cellcycle/}, Smartphone based activity recognition (Smartphone)$^1$,  Yale faces dataset B 30 $\times$ 40 (Faces)\footnote{\tt https://cervisia.org/machine\_learning\_data.php/}, Phoneme\footnote{\tt http://statweb.stanford.edu/$\sim$tibs/ElemStatLearn/}.
\noindent
Before applying the clustering algorithms, data were rescaled so that every feature had unit variance. 
\setlength{\tabcolsep}{.15cm}

Clustering results for all methods considered are given in Table~\ref{tb:clustresults}. SCPP achieves the highest performance in more than half the cases considered, and very importantly is competitive with
the best performing method in every case. All other methods achieve substantially lower performance than SCPP in
multiple examples.

The vastly different natures of the datasets considered means that the associated clustering tasks differ in difficulty. This is evidenced by the range of performance values achieved by the clustering algorithms on different datasets. To combine the results from the different datasets we standardise them as follows. For each dataset $\X$ we compute for each method the relative deviation from the average performance of all methods when applied to $\X$. That is, for each method, $M_i$, we compute the relative purity,
\begin{equation}
\frac{\mathrm{Purity}(M_i, \X) - \frac{1}{\# \mathrm{Methods}}\sum_{j=1}^{\# \mathrm{Methods}}\mathrm{Purity}(M_j, \X)}{\frac{1}{\# \mathrm{Methods}}\sum_{j=1}^{\# \mathrm{Methods}}\mathrm{Purity}(M_j, \X)},
\end{equation}
and similarly for NMI. We can then compare the distributions of the relative performance measures from all datasets and for all methods. It is clear from Table~\ref{tb:clustresults} that the DRSC method is not competitive with other methods
in the examples considered, due to its substantially inferior performance on multiple datasets. Moreover, the performance of DRSC is sufficiently low to obscure the comparisons between other methods. We therefore remove DRSC from this comparison and in computing the relative performance measures.
Figure~\ref{fig:relmeasure} shows boxplots of the relative performance measures. These plots show clearly that SCPP achieves substantially higher performance overall than all other methods considered.

\begin{table*}
\begin{center}
\caption{Clustering performance. Highest performance in each case is highlighted in bold. Details of datasets in terms of number of data (N), number of dimensions (d), and number of clusters (K) are provided. \label{tb:clustresults}}
\scalebox{1}{
\begin{tabular}{ll|cccccccc}
&& SCPP & DRSC & SC$_{\mathrm{PC}}$  &SC$_{\mathrm{IC}}$ & SC & iSVR$_{\mathrm{G}}$\\
\hline\hline
Opt. Digits 
							& Purity 		& {\bf 0.89}  & 0.10 & 0.66 &  0.69  &  0.66   & 0.73   \\
(N = 5620, d = 64, K = 10)	& NMI 		& {\bf 0.83} & 0.03     & 0.63 &   0.67     & 0.63 & 0.65\\
\hline
Pen Digits 
							& Purity 		& 0.81  & 0.44  & 0.77 & 0.77  & {\bf 0.87} & 0.74\\
(N = 10992, d = 16, K = 10)	& NMI 		& 0.79  & 0.41 & 0.76 & 0.75  & {\bf 0.82} & 0.68\\
\hline
M.F. Digits 
							& Purity 		& 0.76  & 0.66  & 0.75 & 0.72  & 0.77 & {\bf 0.78}\\
(N = 2000, d = 216, K = 10)	& NMI 		& {\bf 0.73}  & 0.67  & 0.70 & 0.68 & 0.72 & 0.65\\
\hline
Satellite 
							& Purity 		& {\bf 0.80}  & 0.53 & 0.73 & 0.74  &  {\bf 0.76} & 0.61\\
(N = 6435, d = 36, K = 6)		& NMI 		& {\bf 0.67}  & 0.22 & 0.61 & 0.62  &  0.62 & 0.48\\
\hline
Image Seg. 
							& Purity 		& 0.56  & 0.38 & 0.56 & {\bf 0.76}  & 0.50 & 0.64\\
(N = 2310, d = 19, K = 7)		& NMI 		& 0.56  & 0.40 & 0.55 & {\bf 0.69}   & 0.48 & 0.59\\
\hline
Br. Cancer
							& Purity 		& {\bf 0.97}  &  0.89  & {\bf 0.97}  & {\bf 0.97} &   0.96 & 0.95\\
(N = 699, d = 9, K = 2)		& NMI 		& 0.78  & 0.51 & 0.81 & {\bf 0.82}  & 0.76 & 0.72\\
\hline
Chart 
							& Purity 		& {\bf 0.89}  & 0.24  & 0.67 & 0.73  & 0.67 & 0.80\\
(N = 600, d = 60, K = 6)		& NMI 		& {\bf 0.87}  & 0.01 & 0.81 & 0.76  & 0.74 & 0.72\\
\hline
Isolet 
							& Purity 		& 0.58  & - & 0.59 & {\bf 0.60} &  {\bf 0.60} & 0.50\\
(N = 6238, d = 617, K = 26)	& NMI 		& {\bf 0.72}  & - & 0.69 & 0.67  & 0.69 & 0.61\\
\hline
Dermatology 
							& Purity 		& 0.87 & 0.59 & 0.92 & 0.91 &  {\bf 0.95} & 0.82   \\
(N = 366, d = 34, K = 6)		& NMI 		& 0.90  & 0.40 & 0.87 & 0.83 & {\bf 0.91} & 0.78\\
\hline
Yeast 
							& Purity		& 0.73  & 0.42 & 0.68 & 0.60  & {\bf 0.78} & 0.76\\
(N = 698, d = 72, K = 5)		& NMI 		& 0.53  & 0.05 & 0.51 & 0.34  & {\bf 0.57} & {\bf 0.57}\\
\hline
Smartphone 
							& Purity 		& {\bf 0.70}  & - & 0.61 & {\bf 0.70} &  0.67 & 0.65\\
(N = 10929, d = 561, K = 12)	& NMI 		& {\bf 0.61} & - & 0.52 & 0.58 & 0.55 & 0.52 \\
\hline
Faces 
							& Purity 		& 0.71   & - & 0.68 & 0.69 &  0.73 & 0.63 \\
(N = 5850, d = 1200, K = 10)	& NMI 		& 0.76   & - & 0.77 & {\bf 0.82} & 0.76 & 0.64 \\
\hline
Phoneme 
							& Purity 		& {\bf 0.85}   & 0.56 & 0.83  & 0.84 & 0.80 & 0.82 \\
(N = 4509, d = 256, K = 5)	& NMI 		& 0.82   & 0.45 & {\bf 0.84} & 0.76 & 0.71 & 0.70 \\
\end{tabular}
}
\end{center}
{\footnotesize `-' indicates that a clustering solution could not be obtained in a reasonable amount of time.}
\end{table*}

Among the competing methods, it is evident that spectral clustering tends to outperform maximum margin clustering in general. Among competing spectral clustering variants, we see that both principal and independent component projections are capable of improving the performance of spectral clustering, but across multiple datasets the overall
performance is not appreciably higher.

Overall the proposed approach for projection pursuit based on spectral connectivity is highly competitive with existing dimension reduction methods. Furthermore, a simple data driven heuristic can be used to select the important scaling parameter without tuning it for each dataset.

\begin{figure}
\centering
\caption{Box plots of relative performance measures with additional red dots to indicate means. \label{fig:relmeasure}}
\subfigure[Relative Purity]{\includegraphics[width = 7cm, height = 7cm]{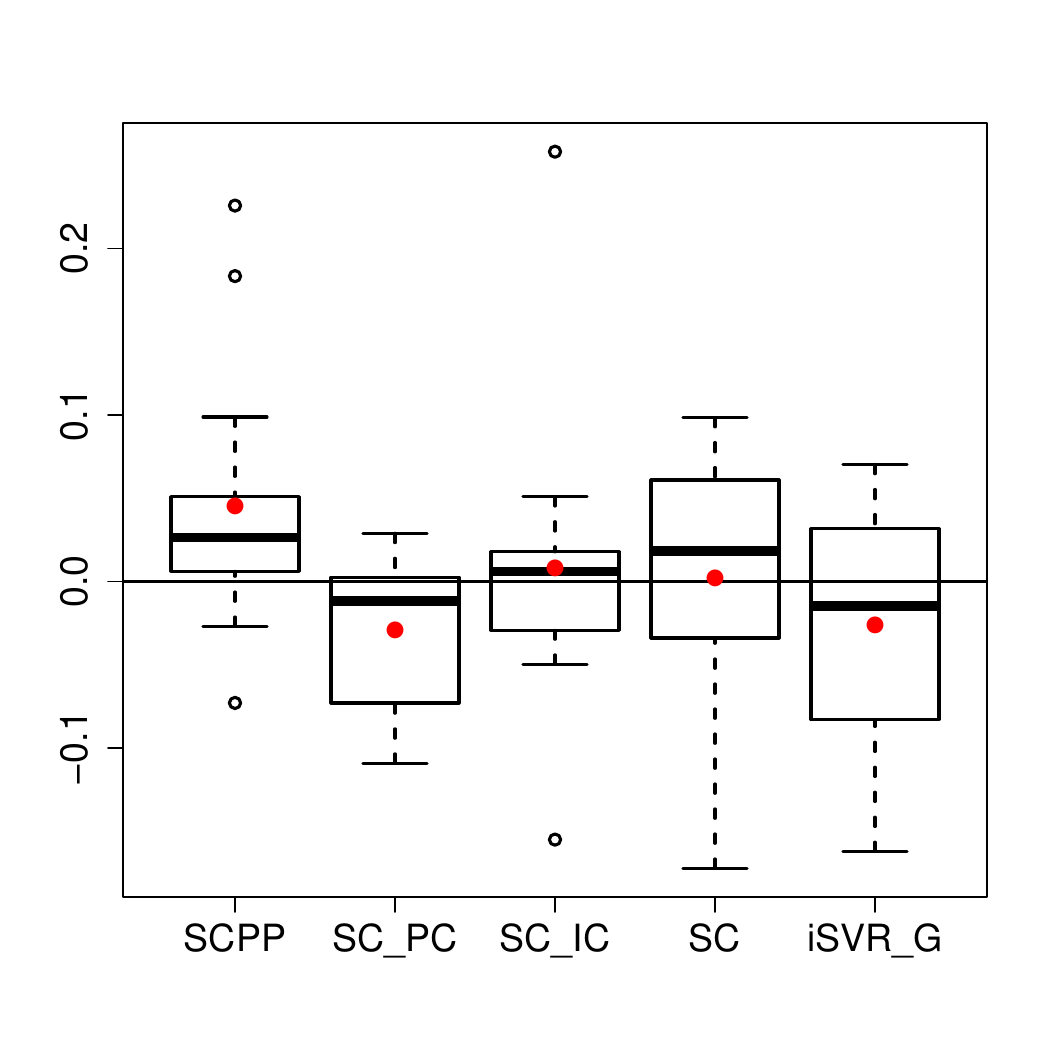}}
\subfigure[Relative NMI]{\includegraphics[width = 7cm, height = 7cm]{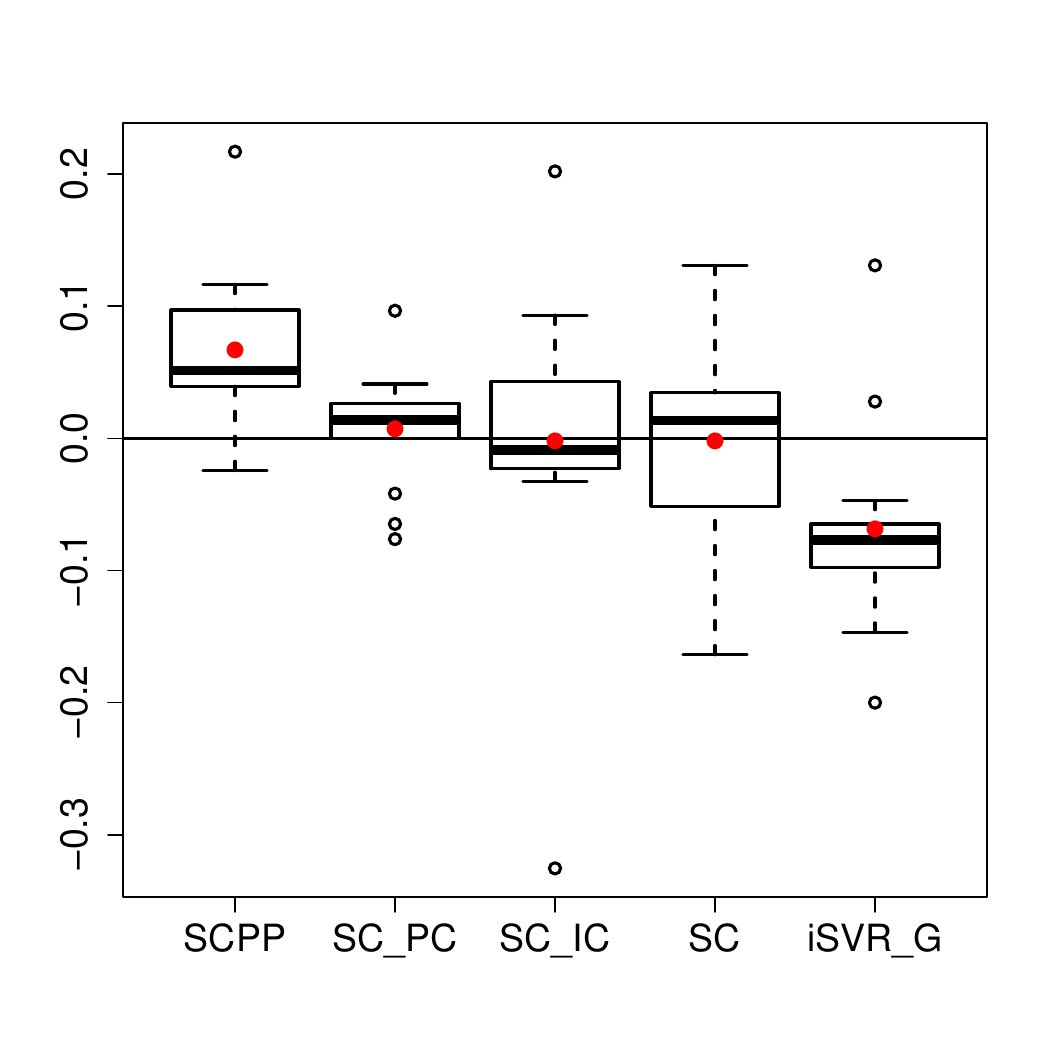}}
\end{figure}

\subsection{The Effect of Microclusters on Performance} \label{sec:sensitivity}

To investigate the effect of microclusters on clustering accuracy we simulated datasets from Gaussian mixtures containing 5 components (clusters) in 50 dimensions. This allows us to generate datasets of any desired size.
For these experiments 30 sets of parameters for the Gaussian mixtures were generated randomly. In the first case a single dataset of size 1000 was simulated from each set of parameters, and clustering solutions obtained for a number of microclusters, $m$, ranging from 100 to 1000, the final value therefore applying no approximation. Figure~\ref{fig:microcluster1} shows the median and interquartile range of both performance measures for 10 values of $m$. It is evident that aside from $m=100$, performance is similar for all other values, and so using a small value, say $m = 200$, should be sufficient to obtain a good approximation of the underlying optimisation surface.

In the second case, we fix the number of microclusters, $m = 200$, and for each set of parameters simulate datasets with between 1000 and 10 000 observations. In the most extreme case, therefore, the number of microclusters is only 2\% of the total number of data. Figure~\ref{fig:microcluster2} shows the corresponding performance plots, again containing the medians and interquartile ranges. Even for datasets of size 10 000, the coarse approximation of the dataset through 200 microclusters is sufficient to obtain a high quality projection using the proposed approach.

\begin{figure}
\caption{Effect of microclusters on performance. Plots show median and interquartile ranges of performance measures from 30 datasets simulated from 50 dimensional Gaussian mixtures with 5 clusters.}
\subfigure[Fixed number of data (1000) and varying number of microclusters, $m$]{\includegraphics[width = 8cm]{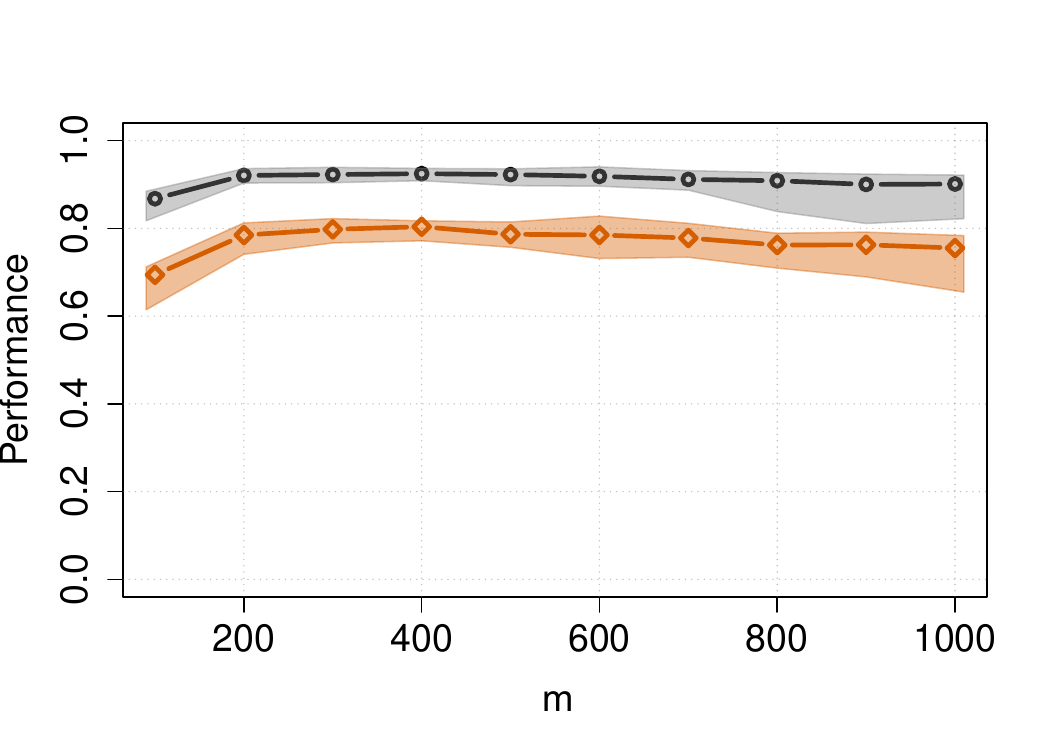}\label{fig:microcluster1}}
\subfigure[Fixed number of microclusters (200) and varying number of data, $N$]{\includegraphics[width = 8cm]{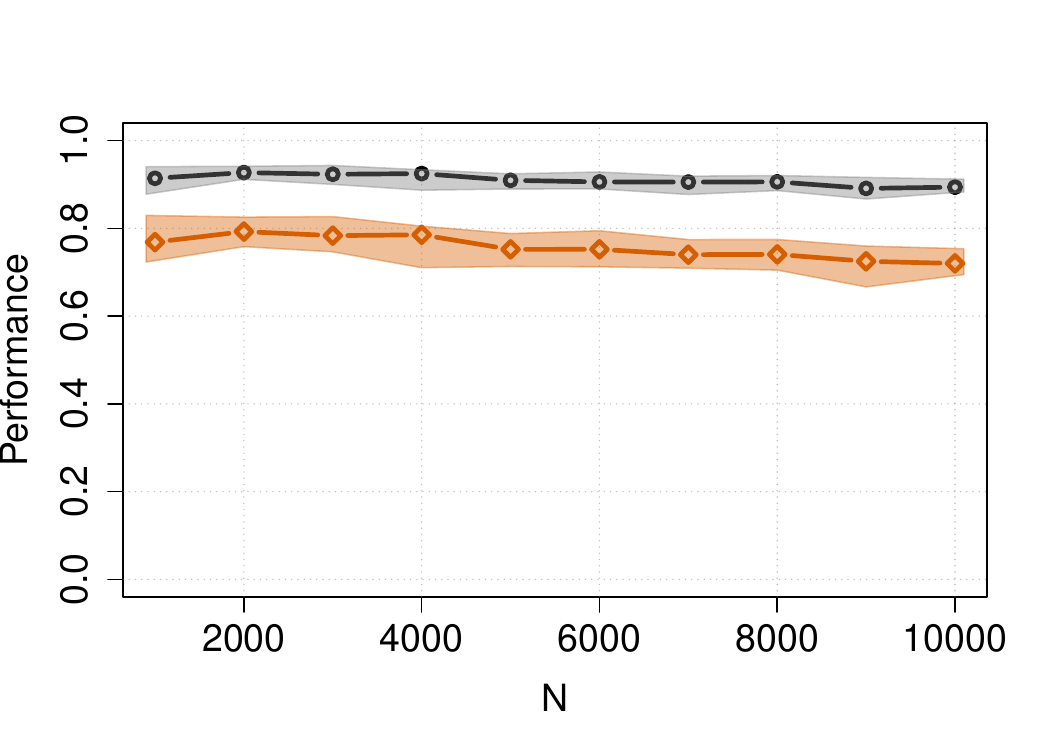}\label{fig:microcluster2}}\\
\centering
{\footnotesize
Purity (\textcolor{yeastcol}{\bf --$\circ$--}), NMI (\textcolor{synthcol}{\bf --$\diamond$--})
}
\end{figure}




\section{Conclusions}\label{sec:conclusion}

We proposed an approach to identify optimal projections to bi-partition a
dataset through spectral clustering, based on the minimisation of the second smallest
eigenvalue of the graph Laplacian (which measures the connectivity of the two clusters) with respect to the projection.
We provided a rigorous analysis of this optimisation problem
and proposed a globally convergent
algorithm, which directly minimises the overall
objective.
Using this approach to perform binary partitioning recursively gives rise to a
divisive clustering algorithm capable of identifying clusters defined in
different subspaces. 

The computational cost of the proposed projection pursuit method per iteration
is~$\mathcal{O}(N^2)$, where~$N$ is the number of observations, which can become prohibitive for large datasets. To mitigate
this an approximation method using microclusters, with provable error bounds is
proposed. This reduces the complexity to $\mathcal{O}(m^2)$, where $m$ is the number
of microclusters. We found that in practice using even a small number of microclusters, $m = 200$,
our method is capable of generating high quality clustering models. This results in
a speed up of up to two orders of magnitude for the examples considered in this paper.

Finally, we established an asymptotic connection between optimal univariate
projections for spectral bi-partitioning and maximum margin hyperplanes. In
particular we showed that as the scaling parameter of the similarity function
is reduced towards zero, the optimal vector to bi-partition the data using spectral
clustering also achieves the maximum Euclidean distance between the two
clusters. In other words, the optimal projection vector for spectral
bi-partitioning converges to the normal vector to the maximum margin separating
hyperplane.

Experimental results on a large collection of datasets indicate that the
proposed approach is highly competitive with spectral clustering
applied on the full dimensional data, and with existing dimension reduction
methods for spectral clustering.

It is interesting to note that while we discuss only the linear projection of
Euclidean embedded data, the methodology we present
can be
generalised to apply to any differentiable transformation of a collection of data objects
admitting a similarity measure. Extensions to structured data such as time series, graphical
and image data represent interesting future directions for this work.

\appendix

\section{Avoiding Outliers}\label{sec:sim}

It has been documented that spectral clustering can be sensitive to
outliers~\citep{rahimi2004clustering}. Our experience has shown that this problem
becomes more pronounced when performing dimension reduction based on the spectral clustering
objective, especially in high dimensional applications. Consider the extreme
case where $d>N$: since the linear system $V^\top X = P$ is underdetermined,
for any $P$ there exists $\btheta \in \Theta, c \in
\R\setminus\{0\}$ s.t. $V(\btheta)^\top X = cP$. The projected
data can therefore be made to have {\em any} distribution (up to a scaling constant).
In other words there will always be projections that contain outliers. 
We have found that even in problems of moderate dimensionality, there
often exist projections which induce large separation of a
small group of points from the remainder of the data.
These projections frequently achieve the minimum spectral connectivity
for both Ratio Cut and Normalised Cut.

We have found that by defining a metric
which encourages the induced cluster boundaries to intersect
a compact set, $\pmb{\Delta}(\btheta)$, around the mean of the projected data,
the problem of outliers can be mitigated.
This is achieved by reducing the distance, relative to the usual Euclidean metric,
to points lying outside $\pmb{\Delta}(\btheta)$. Points
lying outside $\pmb{\Delta}(\btheta)$, which may be outliers, therefore
have increased similarity to all others.
We define
$\pmb{\Delta}(\btheta) = \Delta_1 \times \ldots \times \Delta_l$,
 where $\Delta_i = [\mu_i-\beta
\sigma_{i}, \mu_i + \beta \sigma_{i}]$; $\mu_i$ and $\sigma_{i}$ are the
mean and standard deviation of the $i$-th component of the projected data; and
$\beta \geqslant 0$ controls the size of
$\pmb{\Delta}(\btheta)$.
The modified distance metric, $d(\cdot, \cdot)$,
is defined with respect to a continuously differentiable transformation, $T_{\Delta}$, of the projected data,
\medmuskip = 1.5mu \thickmuskip = 1.5mu
\begin{align}
d(p_i, p_j) & = \|T_{\Delta}(p_i) - T_{\Delta}(p_j)\|_2,\\
T_{\Delta}(y) &= \left(t_{\Delta_1}(y_1), \ldots, t_{\Delta_l}(y_l)\right),\label{eq:Ttransform}\\
t_{\Delta_i}(z)&:= \left\{\begin{array}{ll}
c_2 -\beta \sigma_i -\delta \left(c_1 -\beta\sigma_i - z \right)^{1-\delta}, & z < -\beta \sigma_i\\ 
z, & z \in \Delta_i\\
\beta \sigma_i  + \delta \left(z - \beta \sigma_i + c_1 \right)^{1-\delta} - c_2, & z > \beta \sigma_i,
\end{array}\right.  \end{align}
where $\delta \in (0, 0.5]$ is the distance reducing parameter,
and $c_1$ and $c_2$ are equalt to $\left(\delta\left(1-\delta\right)\right)^{1/\delta}$ and $\delta c_1^{1-\delta}$ respectively.
By construction $\|T_{\Delta}(p_i) - T_{\Delta}(p_j)\|_2 \leq \|p_i - p_j\|_2$
for any $p_i,p_j \in \R^l$, with strict inequality when either or both $p_i,p_j \notin \pmb{\Delta}(\btheta)$.

Figure~\ref{fig:retardeddistance} illustrates
the impact of $T_{\Delta}$ on pairwise distances in the univariate case.
As shown, distance increases linearly in the interval $\Delta$, but
outside $\Delta$ it increases
much more slowly, with the rate being determined by $\delta$.
In the limit as $\delta$ approaches zero, all points outside $\Delta$
are mapped to the boundary of~$\Delta$.
As a result distances between points outside $\Delta$ and all other
points are much smaller after being transformed through $T_\Delta$, and points which can
be characterised as outliers
in terms of the original projections,~$\P$, do not appear as such in
terms of $T_\Delta(\P)$.

\begin{figure}[h]
\includegraphics[width = 6cm]{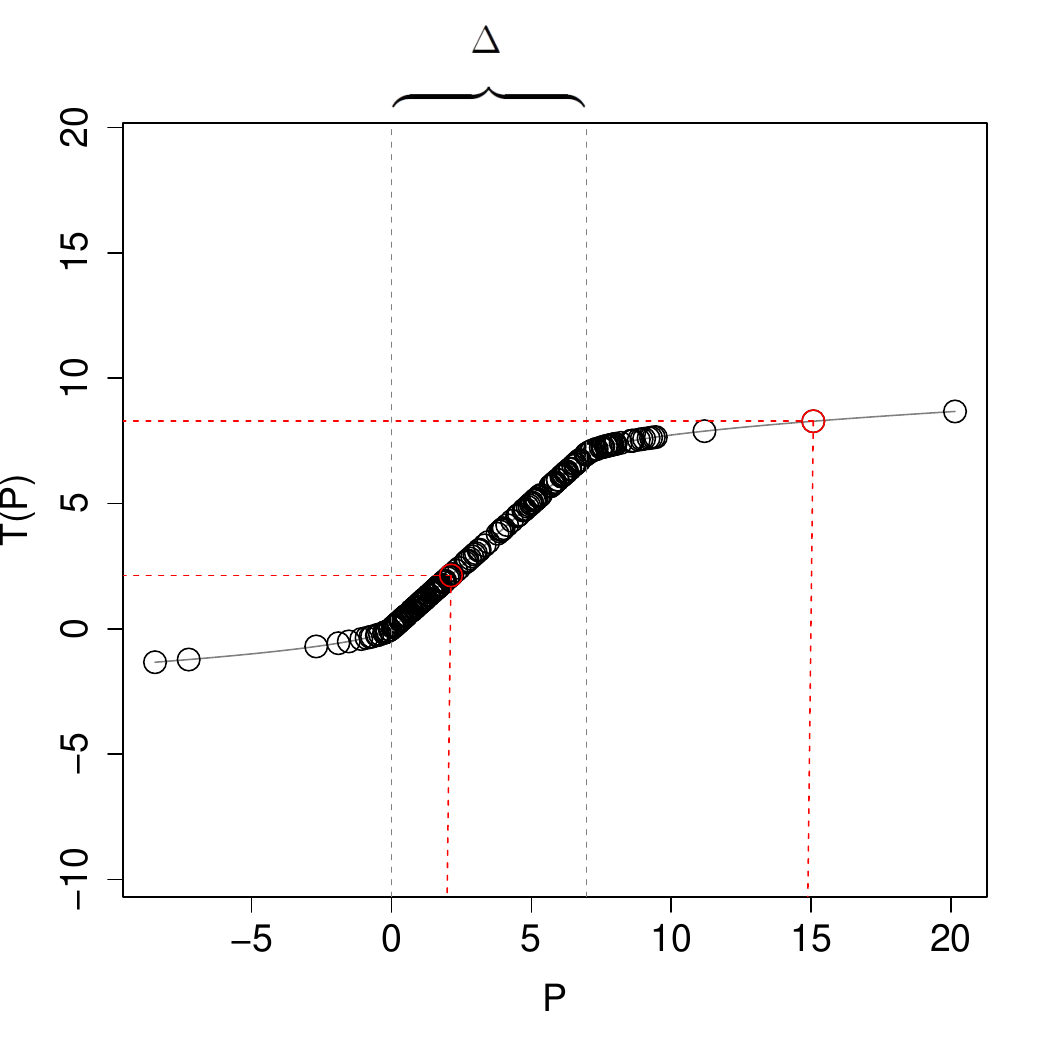}
\caption{Pairwise distances of points outside $\Delta$ are decreased through the transformation $T_{\Delta}$}
\label{fig:retardeddistance}
\end{figure}

An illustration of the usefulness of this modified metric is provided in Figure~\ref{fig:optidigitsprojections}.
The figure shows two dimensional projections of the
64~dimensional optical recognition of handwritten digits dataset~\citep{BacheL2013}.
The left plots show the true clusters while the right plots show the clustering assignments based on
spectral clustering using the normalised Laplacian~\citep{Shi2000}.
Figure~\ref{fig:optidigitsPCA} shows the projection onto the first two principal components, which are also used as initialisation for
our method. There are clearly a few points outlying from the remainder of the data, which are
separated by the spectral clustering algorithm.
Figure~\ref{fig:optidigitsBetamax} shows the optimal projection from
minimising $\lambda_2(\LNtheta)$ using the Euclidean metric. The
result is that the outlying points have been further separated from the
remainder of the data, thereby exacerbating the outlier problem.
Finally, Figure~\ref{fig:optidigitsBeta3} shows the same result but using
the modified metric discussed above, and with $\beta = 3$. In this case
the projection pursuit is able to find a projection which separates two of
the true clusters
clearly from the remainder.

\begin{figure}
\caption{Two dimensional projections of optical recognition of handwritten digits dataset. The left
plots show the true clusters while the right plots show the partitions made by spectral clustering. \label{fig:optidigitsprojections}}
\subfigure[PCA projection used for initialisation]{\includegraphics[width = 3.8cm]{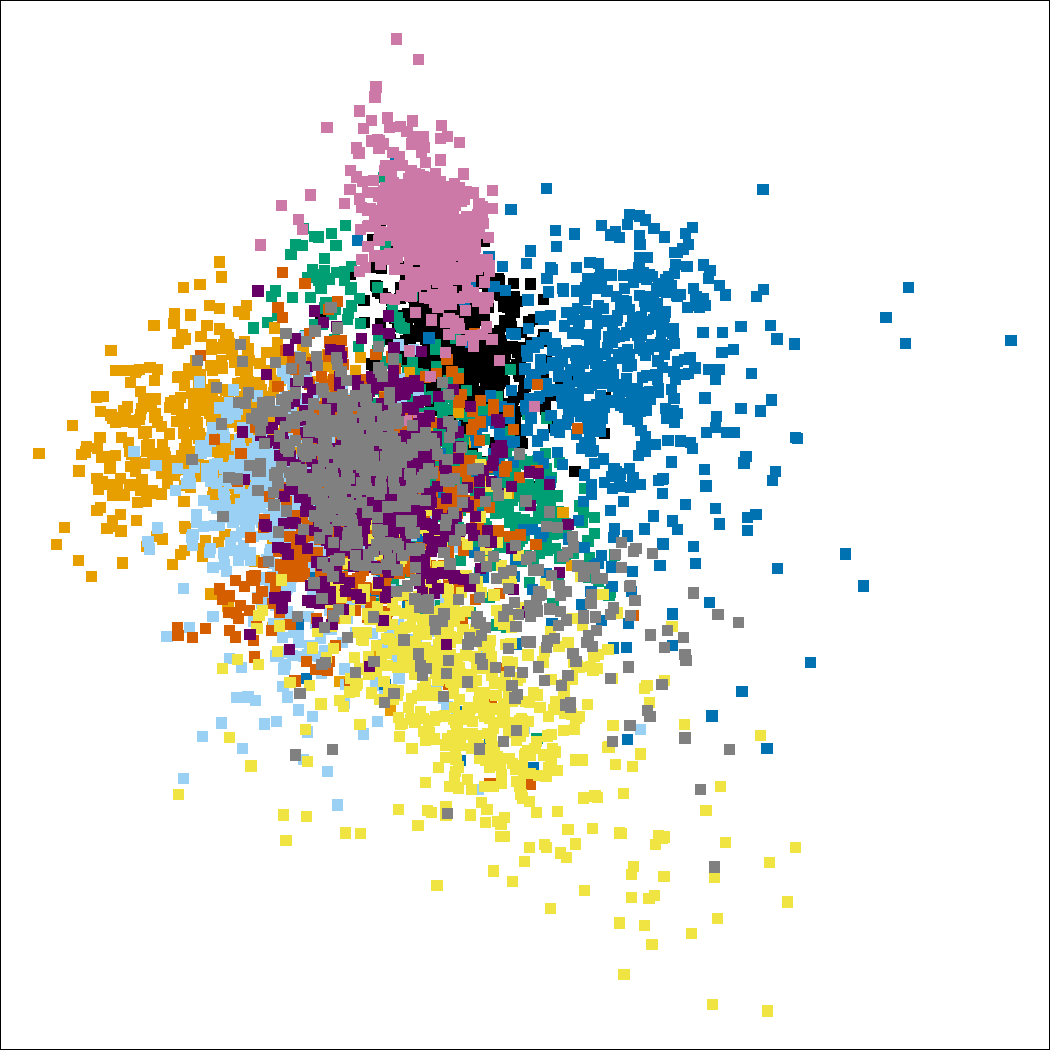}
 \includegraphics[width = 3.8cm]{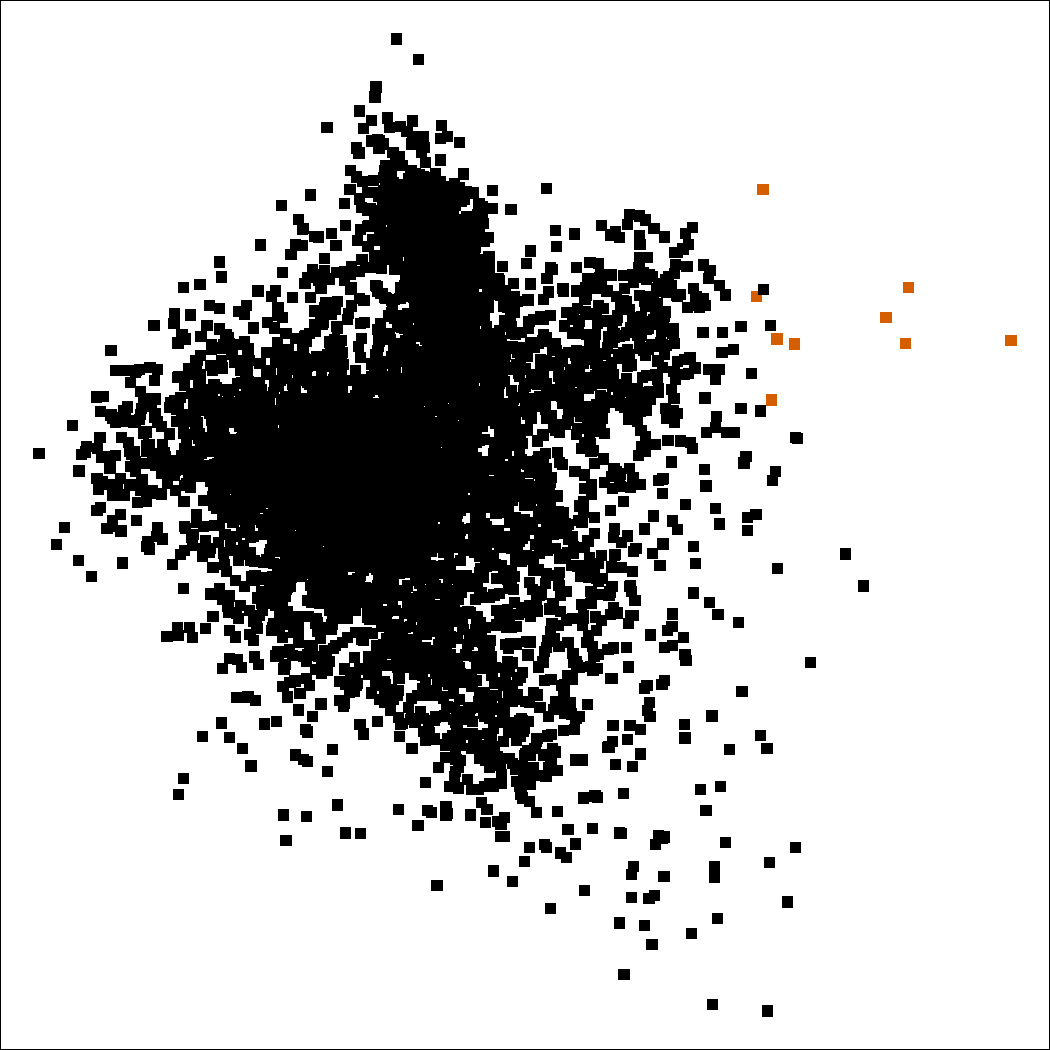} \label{fig:optidigitsPCA}}
\subfigure[Optimal projection from minimising $\lambda_2(\LNtheta)$ with the Euclidean metric]{\includegraphics[width = 3.8cm]{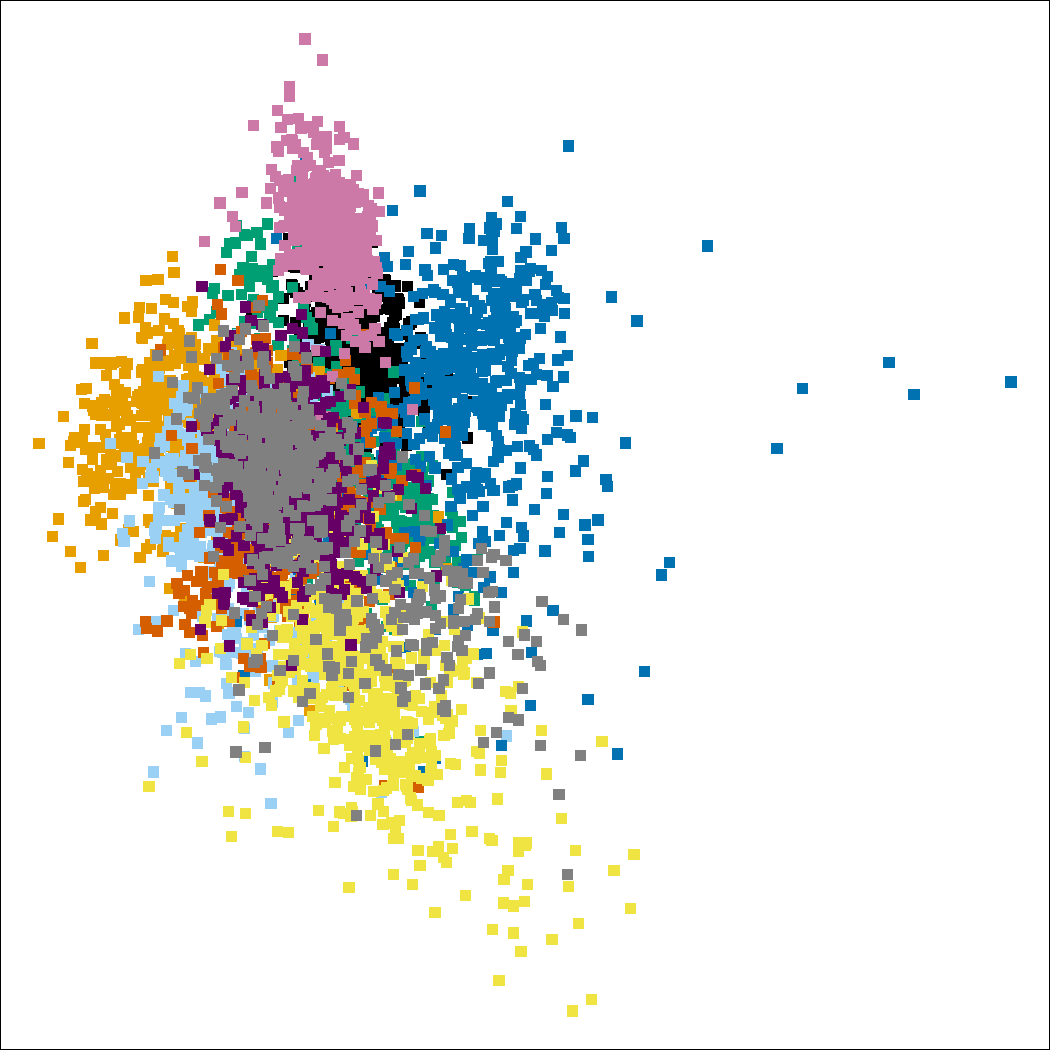}
\includegraphics[width = 3.8cm]{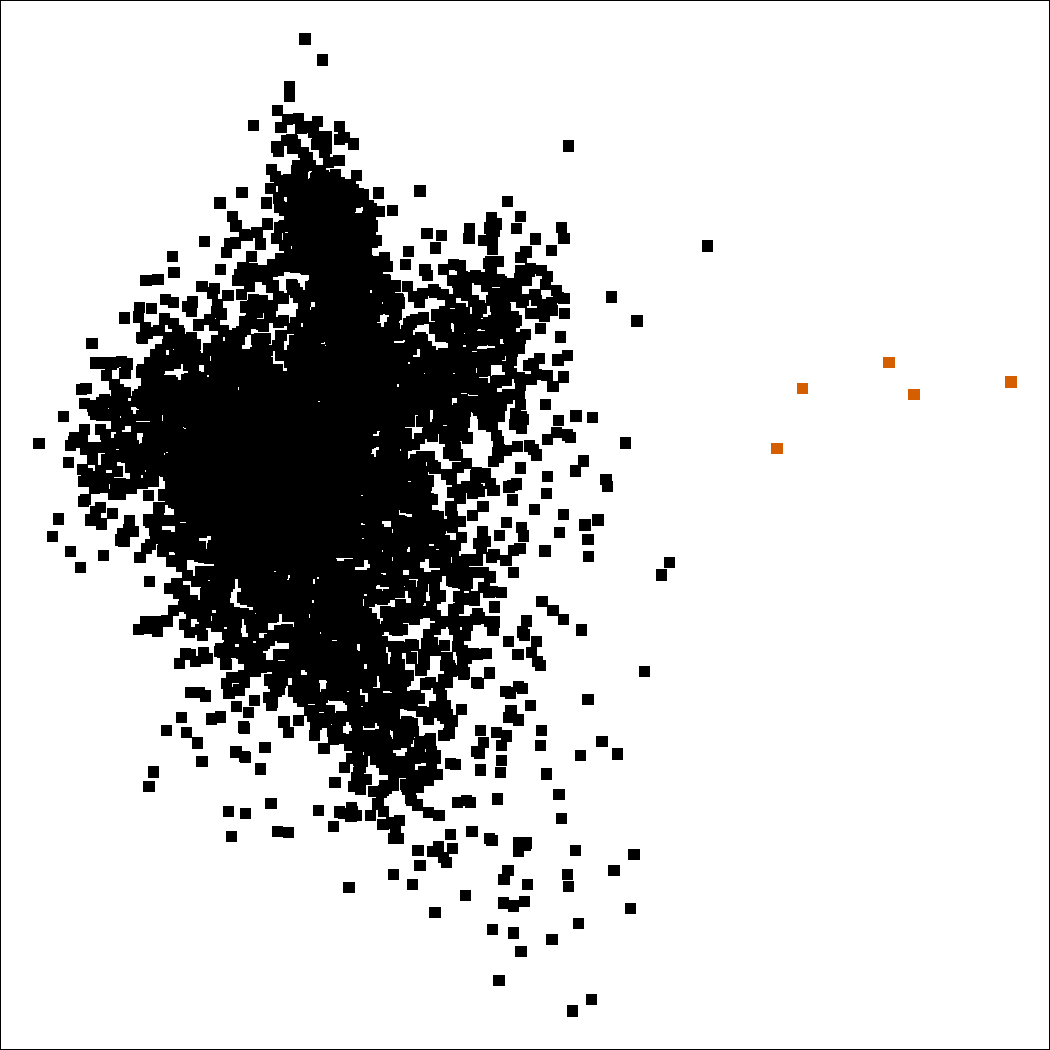} \label{fig:optidigitsBetamax}}
\subfigure[Optimal projection from minimising $\lambda_2(\LNtheta)$ with the modified metric ($\beta = 3$)]{\includegraphics[width = 3.8cm]{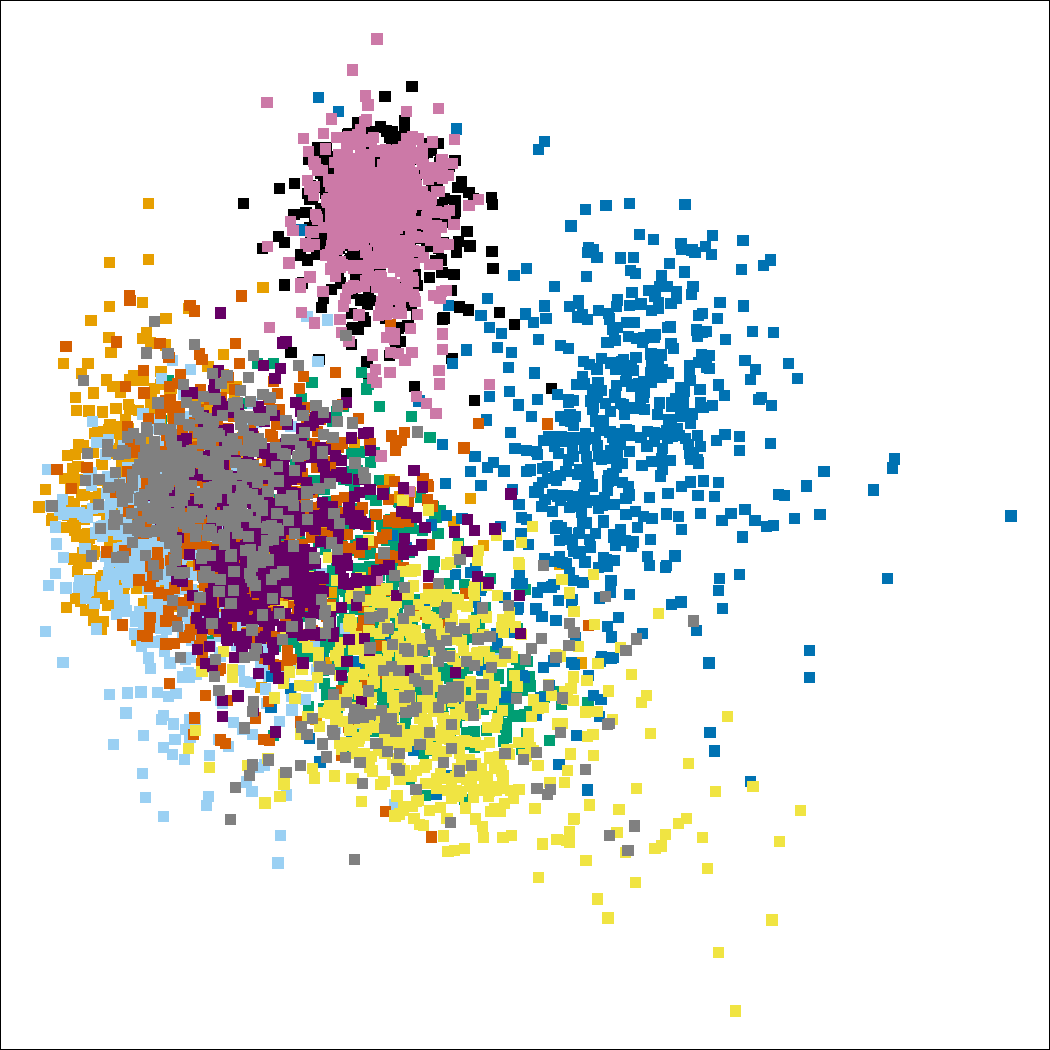}
\includegraphics[width = 3.8cm]{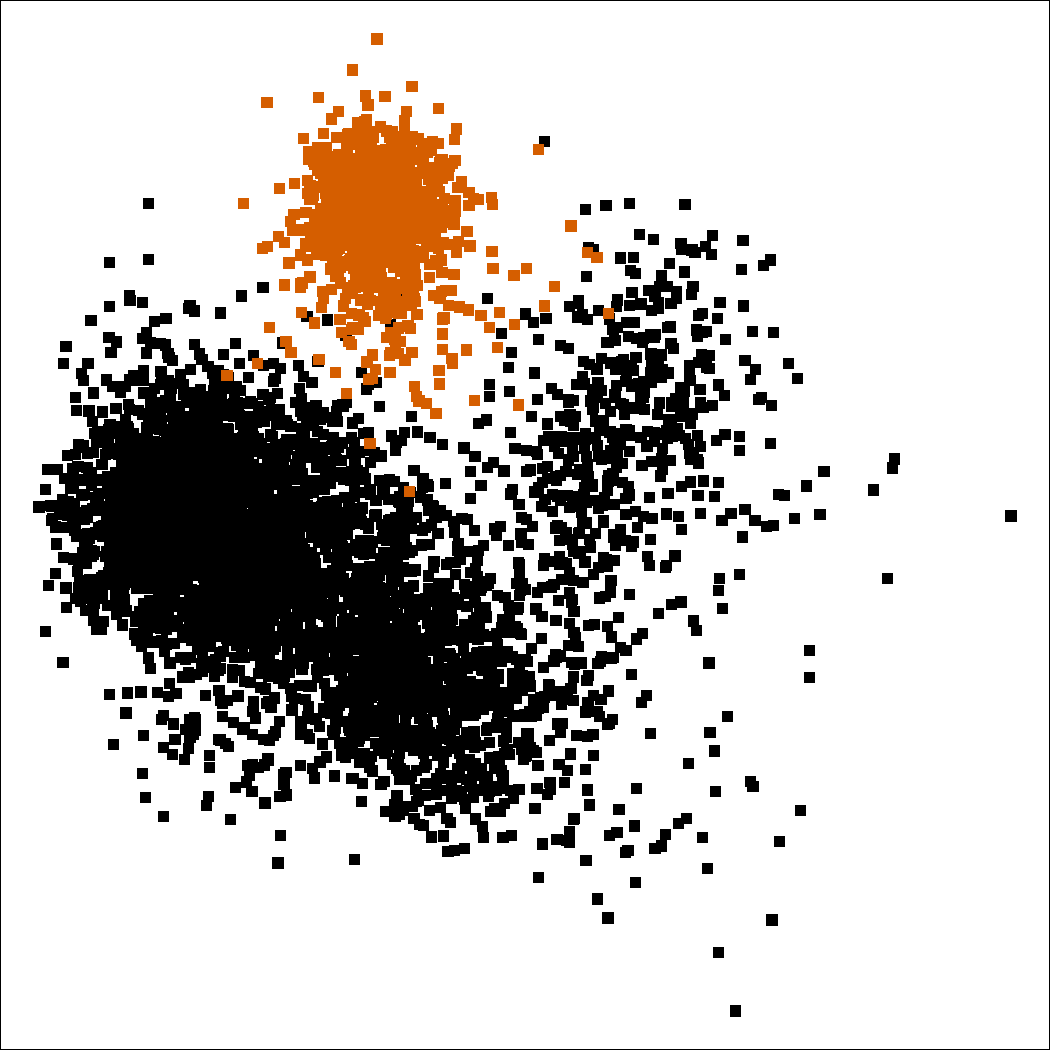} \label{fig:optidigitsBeta3}}
\end{figure}

\section{Derivatives}\label{sec:deriv}

\subsection{Evaluating $D_{P_i}\lambda_2(\cdot)$}

We first consider the standard Laplacian~$L$, and use $\lambda$ and $u$ to denote the second eigenvalue and corresponding
eigenvector.
By Eq.~(\ref{eigdev}) we have $d\lambda = u^\top d(L) u = u^\top d(D) u - u^\top d(A) u$. Now,
\begin{align*}
\frac{\partial D_{ii}}{\partial P_{mn}}  & = \sum_{j=1}^N \frac{\partial A_{ij}}{\partial P_{mn}} = \sum_{j=1}^N \frac{\partial s(P, i, j)}{\partial P_{mn}}, \\
	\frac{\partial A_{ij}}{\partial P_{mn}}  &= \frac{\partial s(P, i, j)}{\partial P_{mn}},
\end{align*}
and so,
\begin{equation*}\label{eq:derivstandard}
\frac{\partial{\lambda}}{\partial P_{mn}} = u^\top \frac{\partial L}{\partial P_{mn}} u 
= \frac{1}{2}\sum_{i, j}(u_i-u_j)^2\frac{\partial s(P, i, j)}{\partial P_{mn}}.
\end{equation*}

\noindent
For the normalised Laplacian, $L_{\mathrm{N}}$, consider first
\begin{align*}
d(L_{\mathrm{N}}) =& d(D^{-1/2}LD^{-1/2})\\
 =&  d(D^{-1/2})LD^{-1/2}+D^{-1/2}d(D)D^{-1/2} \\
&- D^{-1/2}d(A)D^{-1/2} + D^{-1/2}L d(D^{-1/2}).
\end{align*}
We again use $\lambda$ and $u$ to denote the second eigenvalue and corresponding eigenvector.
Using $LD^{-1/2}u = \lambda D^{1/2}u$,
\begin{align*}
d\lambda  = & u^\top d(D^{-1/2})LD^{-1/2}u + u^\top D^{-1/2}d(D)D^{-1/2}u\\ 
	&- u^\top D^{-1/2}d(A)D^{-1/2}u + u^\top D^{-1/2}L d(D^{-1/2})u\\
=& \lambda u^\top d(D^{-1/2})D^{1/2}u + u^\top D^{-1/2}d(D)D^{-1/2}u 
	\\
	&- u^\top D^{-1/2}d(A)D^{-1/2}u  + \lambda u^\top D^{1/2}d(D^{-1/2})u\\
=& (1-\lambda)u^\top D^{-1/2}d(D)D^{-1/2}u - u^\top D^{-1/2}d(A)D^{-1/2}u. \\
=& u^\top D^{-1/2}d(L)D^{-1/2}u  - \lambda u^\top D^{-1/2}d(D)D^{-1/2}u.
\end{align*}
Where in the third step we made use of the fact that\\
$d(D^{-1/2})DD^{-1/2} + D^{-1/2}d(D)D^{-1/2} + D^{-1/2}Dd(D^{-1/2}) = d(D^{-1/2}DD^{-1/2}) = d(I) = \mathbf{0}$.
Therefore,
\begin{align*} 
\frac{\partial \lambda}{\partial P_{mn}} = \frac{1}{2} \sum_{i, j} \left(\frac{u_i}{\sqrt{d_i}} - \frac{u_j}{\sqrt{d_j}}\right)^2 \frac{\partial s(P, i, j)}{\partial P_{mn}} 
- \lambda \sum_{i, j} \frac{u_i^2}{d_i}\frac{\partial s(P, i, j)}{\partial P_{mn}}.
\end{align*}

\subsection{Derivatives of the Approximate Eigenvalue Functions based on Microclusters}

In the general case we may consider a set of $m$ microclusters with centers
$c_1, \ldots, c_m$ and counts $n_1, \ldots, n_m$. The derivations we provide
are valid for $n_i = 1 \ \forall i \in \{1, \dots, m\}$, and so apply
to the exact formulation of the problem as well.  Let $\btheta \in \Theta$. We find it
practically convenient to associate the transformation in Eq.~(\ref{eq:Ttransform}),
which incorporates the set $\pmb{\Delta}(\btheta)$, with the projection of the microclusters rather than with the
computation of similarities. Specifically, we now let $\mathcal{T}$ be the transformed
projected microcluster centers, i.e.,
\begin{align*}
\mathcal{T} =& \{t_1, t_1, \dots, t_m, t_m \}\\
=& \{T_{\pmb{\Delta}(\btheta)}(V(\btheta)^\top c_1), T_{\pmb{\Delta}(\btheta)}(V(\btheta)^\top c_1),\\
& \hspace{5pt} \ldots, T_{\pmb{\Delta}(\btheta)}(V(\btheta)^\top c_m), T_{\pmb{\Delta}(\btheta)}(V(\btheta)^\top c_m) \},
\end{align*}
where each $t_i$ is repeated $n_i$ times. The reason for this is that with this formulation the majority
of terms in the above sums corresponding to $\partial \lambda$ (which are now partial derivatives w.r.t. the elements of $\mathcal{T}$, and not $\P$ as before) are zero. Specifically,
with this expression for $\mathcal{T}$, and letting $T$ be the matrix with columns corresponding to elements in $\mathcal{T}$, we have
\begin{align}\nonumber
\frac{\partial \lambda}{\partial T_{mn}} =& \frac{1}{2} \sum_{i, j}(u_i-u_j)^2\frac{\partial k(\|t_i - t_j\|/\sigma)}{\partial T_{mn}}\\
 =& \sum_{i \not = n } (u_i-u_n)^2\frac{\partial k(\|t_i - t_n\|/\sigma)}{\partial T_{mn}},
\end{align}
and similarly for the normalised Laplacian.

In
Section~\ref{sec:method} we expressed $D_{\btheta}\lambda$ via the chain rule
decomposition $D_P\lambda D_v PD_{\btheta} v$, which we can now simply restructure
as $D_T\lambda D_v TD_{\btheta} v$.
The compression of $\mathcal{T}$ to the
size $m$ non-repeated set, $\mathcal{T}^C = \{t_1, \ldots, t_m \}$, requires a
slight restructuring, as described in Section~\ref{sec:microclust}.
We begin with the standard Laplacian, letting $T^C$ be the matrix corresponding to $\mathcal{T}^C$, and define $N(\btheta)$ and $B(\btheta)$
as in Lemma~\ref{thm:approxbound1}. That is, $N(\btheta)$ is the diagonal
matrix with $i$-th diagonal element equal to $\sum_{j=1}^m n_j k(\|t_i - t_j\|/\sigma)$
and $B(\btheta)_{i,j} = \sqrt{n_i n_j} k(\|t_i - t_j\|/\sigma)$. The derivative of the
second eigenvalue of the Laplacian relies on the corresponding
eigenvector, $u$. However, this vector is not explicitly available as we only
solve the $m\times m$ eigen-problem of $N(\btheta) - B(\btheta)$. Let $u^C$ be
the second eigenvector of $N(\btheta) - B(\btheta)$. As in the proof of
Lemma~\ref{thm:approxbound1} if $i,j$ are such that the $i$-th element of $\mathcal{T}$
corresponds to the $j$-th microcluster, then $u^C_j = \sqrt{n_j}u_i$. The
derivative of $\lambda_2(N(\btheta)-B(\btheta))$ with respect to the $i$-th
column of $\btheta$, and thus equivalently of the second eigenvalue of the
Laplacian is therefore given by
\begin{align}\label{eq:deriv1}\nonumber
\Bigg(&\sum_{j \not = 1}\left(\frac{u^C_j}{\sqrt{n_j}}-\frac{u^C_1}{\sqrt{n_1}}\right)^2n_jn_1\frac{\partial k\left(\frac{\|t_j-t_i\|}{\sigma}\right)}{\partial T^C_{j1}} \ \dots\\
& \ \sum_{j \not = m}\left(\frac{u^C_j}{\sqrt{n_j}}-\frac{u^C_m}{\sqrt{n_m}}\right)^2n_jn_m\frac{\partial k\left(\frac{\|t_j-t_m\|}{\sigma}\right)}{\partial T^C_{jm}} \Bigg)
 D_{V_i} T^C_i D_{\btheta_i}V_i,
\end{align}
where $D_{\btheta_i}V_i$ is given in Eq.~(\ref{eq:difftheta}) and $D_{V_i} T^C_i$ is expressed below. We provide expressions for the case where $\Delta(\btheta) = \prod_{i=1}^l[- \beta\sigma_{\btheta_i}, \beta\sigma_{\btheta_i}]$, as in our implementation, where
we have again assumed that the data have been centered, i.e., have zero mean. 
Then $D_{V_i} T^C_i$ is the $m \times d$ matrix with $j$-th row equal to,
$$
\frac{\delta(1-\delta)}{(-\beta\sigma_{\btheta_i} - V_i^\top c_j + (\delta(1-\delta))^{1/\delta})^\delta} \left(\frac{\beta}{\sigma_{\btheta_i}}\Sigma V_i + c_j\right),
$$
if $V_i^\top c_j < -\beta \sigma_{\btheta_i}$,
$$
c_j,
$$
if $-\beta \sigma_{\btheta_i} \leq V_i^\top c_j \leq \beta \sigma_{\btheta_i}$, and
$$
\frac{\delta(1-\delta)}{(V_i^\top c_j - \beta\sigma_{\btheta_i} + (\delta(1-\delta))^{1/\delta})^\delta} \left(c_j - \frac{\beta}{\sigma_{\btheta_i}}\Sigma V_i\right) + 2\frac{\beta}{\sigma_{\btheta_i}} \Sigma V_i,$$
if $V_i^\top c_j>\beta\sigma_{\btheta_i}$.
Here $\Sigma$ is the covariance matrix of the data.

For the normalised Laplacian, the reduced $m\times m$ eigenproblem has precisely the same form as the original $N\times N$ problem, with the only difference being the introduction of the factors $n_j n_k$. Specifically, with the derivation in Section~\ref{sec:method} we can see that the corresponding derivative is as for the standard Laplacian above, except that the coefficients $(u_j^C/\sqrt{n_j} - u_k^C/\sqrt{n_k})^2n_j n_k$ in Eq.~(\ref{eq:deriv1}) are replaced with $(u_j^C/\sqrt{d_j} - u_k^C/\sqrt{d_k})^2 - \lambda((u_j^C)^2/d_j + (u_k^C)^2/d_k)$, where $\lambda$ is the second eigenvalue of the normalised Laplacian, $u^C$ is the corresponding eigenvector and $d_j$ is the degree of the $j$-th element of $\mathcal{T}^C$.

\section{Computational Complexity}\label{sec:complexity}

Here we give a very brief discussion of the computational complexity of the
proposed method. At each iteration in the gradient descent, computing the
projected data matrix, $P(\btheta)$, requires $\mathcal{O}(Nld)$ operations.
Computing all pairwise similarities from elements of the $l$-dimensional
$\P(\btheta)$ has computational complexity $\mathcal{O}(lN^2)$, and determining
both Laplacian matrices, and their associated eigenvalue/vector pairs adds a
further computational cost $\mathcal{O}(N^2)$. Each evaluation of the
objectives $\lambda_2(\Ltheta)$ or $\lambda_2(\LNtheta)$ therefore requires
$\mathcal{O}(lN(N+d))$ operations.  In order to compute the gradients of these
objectives, the partial derivatives with respect to each element of the
projected data matrix need to be calculated. As we discussed in relation
to the derivatives above, the majority of the terms in the sums in Eqs.~(\ref{eq:derivlam})
and~(\ref{eq:derivlamnorm}) are zero, and in fact each partial derivative can be computed in
$\mathcal{O}(N)$ time, and so all such partial derivatives can be computed in
$\mathcal{O}(lN^2)$ time. The matrix derivatives $D_{\btheta_i} V_i
,i=1,...,l$, in~(\ref{eq:difftheta}) can each be computed with $\mathcal{O}(d(d-1))$
operations. Finally, determining the gradients with respect to each column of
$\btheta$ involves computing the matrix product $D_{\btheta_i} \lambda= D_{P_i}
\lambda D_{V_i} P_i D_{\btheta_i} V_i$, where $D_{P_i} \lambda \in \R^{1\times
N}, D_{V_i} P_i \in \R^{N \times d}$ and $D_{\btheta_i} V_i \in \R^{d\times
(d-1)}$. This has complexity $\mathcal{O}(Nd(d-1))$. The complete gradient
calculation therefore requires $\mathcal{O}(lN(N+d(d-1)))$ operations. We have
found that the optimality conditions based on directional derivatives and gradient sampling steps are seldom, if ever required, and
moreover that these do not constitute the bottleneck in the running time of
the method in practice. The complexity of the optimality condition check may be computed along
similar lines, and be found to be $\mathcal{O}(t^2lN(N+d(d-1)))$, where $t$ is
the multiplicity of the eigenvalue $\lambda = \lambda_2(\Ltheta)$. The gradient sampling
is simply $\mathcal{O}(d)$ times the cost of computing a single gradient. The total
complexity of the projection pursuit optimisation depends on the number of
iterations in the gradient descent method, where in general this number is
bounded for a given accuracy level. For our experiments we use the BFGS
(Broyden-Fletcher-Goldfarb-Shanno) algorithm as this has been found to perform
well on non-smooth functions~\citep{LewisO2013}.

\section{Proofs} \label{sec:proofs}	

\subsection{Proof of Theorem~\ref{thm:convergence}} \label{sec:maxmargtheory}

Before proving Theorem~\ref{thm:convergence}, we require some supporting theory which we present below. 
We will use the notation $v^\top \mathcal{X} = \{v^\top x_1, ..., v^\top
x_N\}$, and for a set $\mathcal{P} \subset \R$ and $y \in \R$ we write, for example,
$\mathcal{P}_{>y}$ for $\mathcal{P} \cap (y, \infty)$. Recall that for scaling parameter $\sigma>0$ we define $\pmb{\theta}_{\sigma}: =
\mbox{argmin}_{\pmb{\theta} \in \Theta} \lambda_2(L(\pmb{\theta}, \sigma))$, where $L(\pmb{\theta}, \sigma)$ is as $L(\pmb{\theta})$ from
before, but with an explicit dependence on the scaling parameter. That is,
$\pmb{\theta}_{\sigma}$ defines the projection generating the minimal
spectral connectivity of $\X$ for a given value of~$\sigma$. We define $\btheta_{\sigma}^N$
similarly for the normalised Laplacian.

Recall that we are interested in those hyperplanes which intersect an arbitrary convex set $\pmb{\Delta}$. This is because very often the maximum marging hyperplane will separate only a few points from the remainder, as data tend to be more sparse in the tails of the underlying distribution. To account for the potential for hyperplanes with very large margins lying in the tails of the distribution, we make the additional assumption that the distance reducing parameter, $\delta$, tends to zero along with $\sigma$.

Lemmas~\ref{thm:maxdistbound} and~\ref{thm:maxdistboundnorm} provide lower bounds on the
second eigenvalue of the graph Laplacians of a one dimensional data set in terms
of the largest Euclidean separation of adjacent points which lie within the
interval $\Delta$, used to represent $\pmb{\Delta}(\btheta)$ in the context of a projection of $\X$. These lemmas also show how we construct the set
$\pmb{\Delta}^\prime$. Lemmas~\ref{thm:deltaMargin} and~\ref{thm:deltaMarginNorm} use these results to show
that a projection angle $\btheta \in \Theta$ leads to lower spectral
connectivity than all projections admitting smaller maximal margin hyperplanes
intersecting $\pmb{\Delta}^\prime$ for all pairs $\sigma, \delta$ sufficiently
close to zero.

\begin{lem} \label{thm:maxdistbound}
Let $k: \mathbb{R}_+ \to \mathbb{R}_+$ be a non-increasing, positive function and let $\sigma > 0, \delta \in (0, 0.5]$. Let $\P = \{p_1, ..., p_N\}$ be a univariate data set and let $\Delta =[a, b]$ for $a<b \in \R$. Suppose that $\vert \P \cap \Delta \vert \geq 2$ and $a\geq \min\{\P\}, b\leq\max\{\P\}$. Define $\Delta^\prime = [a^\prime, b^\prime]$, where $a^\prime = (a+\min\{\P\cap\Delta\})/2$, and $b^\prime = (b+\max\{\P\cap\Delta\})/2$. Let $M = \max_{x \in \Delta^\prime}\{\min_{i=1\dots N}\vert x-p_i \vert\}$.
Define $L(\P)$ to be the Laplacian of the graph with vertices $\P$ and similarities according to $s(P, i, j) = k(\vert T_{\Delta}(p_i) - T_{\Delta}(p_j)\vert/\sigma)$, where $P \in \R^{1 \times N}$ is the matrix with $i$-th column equal to $p_i$. Then $\lambda_2(L(\P)) \geq \frac{1}{\vert \P \vert ^3} k((2M+\delta C)/\sigma)$, where $C = \max\{D, D^{1-\delta}\},$ $D = \max\{a-\min \{\P\}, \max \{\P\} - b\}$.
\end{lem}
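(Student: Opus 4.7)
The plan is to exploit the variational characterisation
\[\lambda_2(L(P)) \;=\; \min_{v \perp \mathbf{1},\ \|v\|=1} v^\top L(P)\, v \;=\; \min_v\; \tfrac{1}{2}\sum_{i,j} A_{ij}(v_i-v_j)^2,\]
together with the one-dimensional structure of $P$. Let $\pi$ denote the permutation sorting $P$ in increasing order. Retaining only the $N-1$ ``consecutive'' edges gives the lower bound
\[v^\top L(P)\, v \;\geq\; \sum_{k=1}^{N-1} A_{\pi(k),\pi(k+1)}\,(v_{\pi(k)} - v_{\pi(k+1)})^2,\]
which reduces the proof to two independent ingredients: a uniform lower bound on the consecutive edge weights, and a lower bound on $\sum_k (v_{\pi(k)}-v_{\pi(k+1)})^2$ for every admissible $v$.

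For the first ingredient, since $k(\cdot)$ is non-increasing it suffices to show $|T_\Delta(p_{\pi(k+1)}) - T_\Delta(p_{\pi(k)})| \leq 2M + \delta C$ for every $k$. I would split into cases according to the location of the two consecutive points relative to $\Delta$. When both lie in $[\min(P\cap\Delta),\max(P\cap\Delta)]$, their midpoint is in $\Delta'$ and has no closer $P$-point, so by the definition of $M$ the gap is at most $2M$; on this interval $T_\Delta$ is a slope-$1$ linear map, giving a $T_\Delta$-distance also bounded by $2M$. When exactly one of the two points is outside $\Delta$ the other must be either $\min(P\cap\Delta)$ or $\max(P\cap\Delta)$; splitting the increment at the boundary $a$ (resp.\ $b$), the interior part is controlled by $2M$ via the midpoint argument using $a'$ (resp.\ $b'$) exactly as before, while the exterior part is a one-sided increment of $t_\Delta$ over an interval of length at most $D$. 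When both points lie outside $\Delta$ on the same side, the entire $T_\Delta$-increment is such a one-sided exterior increment of length at most $D$. In the latter two cases the exterior increment is bounded by $\delta\bigl((D+\epsilon_0)^{1-\delta}-\epsilon_0^{1-\delta}\bigr)$ with $\epsilon_0 = (\delta(1-\delta))^{1/\delta}$, and one verifies $\delta(D+\epsilon_0)^{1-\delta} \leq \delta C$ by treating the regimes $D\leq 1$ and $D>1$ separately, with $C=D^{1-\delta}$ and $C=D$ respectively being the active term in the definition of $C$.

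For the second ingredient, the constraints $v \perp \mathbf{1}$ and $\|v\|=1$ force $\max_i|v_i|\geq 1/\sqrt{N}$ together with a sign change in $v$, which together yield $v_{\max}-v_{\min}\geq 1/\sqrt{N}$. Picking indices $k_\star, k^\star$ in the sorted order at which $v$ attains $v_{\min}$ and $v_{\max}$ and applying Cauchy--Schwarz along that subsequence gives
\[\sum_{k=1}^{N-1}(v_{\pi(k)}-v_{\pi(k+1)})^2 \;\geq\; \frac{(v_{\max}-v_{\min})^2}{N-1} \;\geq\; \frac{1}{N(N-1)} \;\geq\; \frac{1}{N^3}.\]
Combining the two ingredients yields $v^\top L(P) v \geq k((2M+\delta C)/\sigma)/N^3$ for every admissible $v$, and minimising over $v$ gives the claim. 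The main technical obstacle is the case analysis for the edge weight bound: the Rayleigh-quotient reduction and the Cauchy--Schwarz step are essentially routine, but verifying that the exterior contribution of $t_\Delta$ is indeed bounded by $\delta C$ requires careful bookkeeping of the constant $\epsilon_0$ in the two regimes of $D$, and correct handling of the mixed case where a consecutive pair straddles the boundary of $\Delta$.
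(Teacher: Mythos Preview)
Your proposal is correct and follows essentially the same strategy as the paper's proof: sort $P$, bound all consecutive similarities below by $k((2M+\delta C)/\sigma)$ via a case analysis on the position of $p_{\pi(k)},p_{\pi(k+1)}$ relative to $\Delta$, and then extract a lower bound on the Rayleigh quotient. The only difference is in the second ingredient: the paper uses pigeonhole to find a single consecutive pair with $|u_m-u_{m+1}|\geq |P|^{-3/2}$ and retains just that one term, whereas you keep all $N-1$ consecutive terms and apply Cauchy--Schwarz; both routes produce the same $|P|^{-3}$ factor and are equally elementary.
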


{\em {\bf Proof:}}
We can assume that $\P$ is sorted in increasing order, i.e. $p_i \leq p_{i+1}$, since this does not affect the eigenvalues of $L(\P)$. We first show that $s(P, i, i+1) \geq k((2M+\delta C)/\sigma)$ for all $i = 1, ..., N-1$. To this end observe that $\delta\left(x + \left(\delta\left(1-\delta\right)^{\frac{1}{\delta}}\right)\right)^{1-\delta}-\delta\left(\delta\left(1-\delta\right)\right)^{\frac{1-\delta}{\delta}} \leq \delta\max\{x, x^{1-\delta}\}$ for $x\geq 0$.
\begin{itemize}
\item If $p_i, p_{i+1} \leq a$ then $s(P, i, i+1) = k((T_{\Delta}(p_{i+1})- T_{\Delta}(p_i))/\sigma) \geq k((T_{\Delta}(a) - T_{\Delta}(p_i))/\sigma)$ $ \geq k((2M+\delta C)/\sigma)$ by the definition of $C$ and using the above inequality, since $k$ is non-increasing. The case $p_i, p_{i+1}\geq b$ is similar.
\item If $p_i, p_{i+1} \in \Delta$ then $p_i, p_{i+1} \in \Delta^\prime \Rightarrow \vert p_i - p_{i+1}\vert \leq 2M \Rightarrow s(P, i, i+1) \geq k(2M/\sigma) \geq k((2M+\delta C)/\sigma)$ since $M$ is the largest margin in $\Delta^\prime$.
\item If none the above hold, then we lose no generality in assuming $p_i < a$, $a<p_{i+1}<b$ since the case $a<p_i<b$, $p_{i+1}>b$ is analogous. We must have $p_{i+1} = \min\{\P\cap \Delta\}$ and so $a^\prime = (a+p_{i+1})/2$. If $p_{i+1}-a > 2M$ then $\min_{j=1 \dots N} \vert a^\prime - p_j \vert>M$, a contradiction since $a^\prime \in \Delta^\prime$ and $M$ is the largest margin in $\Delta^\prime$. Therefore $p_{i+1}-a \leq 2M$. In all 
\begin{align*}
T_{\Delta}(p_{i+1}) - T_{\Delta}(p_i) &= (p_{i+1}-a) + \delta(a-p_i+(\delta(1-\delta))^{\frac{1}{\delta}})^{1-\delta}\\
& \hspace{15pt}- \delta(\delta(1-\delta))^{\frac{1-\delta}{\delta}}\\
 &\leq 2M + \delta C\\
  \Rightarrow s(P, i, i+1)& \geq k((2M+\delta C)/\sigma).
\end{align*}
\end{itemize}
Now, let $u$ be the second eigenvector of $L(\P)$. Then $\|u\| = 1$ and $u\perp \mathbf{1}$ and therefore $\exists i, j$ s.t. $u_i - u_j \geq \frac{1}{\sqrt{N}}$. We thus know that there exists $m$ s.t. $\vert u_m - u_{m+1}\vert \geq \frac{1}{N^{3/2}}$. By~\cite[Proposition 1]{Luxburg2007}, we know that $u^\top L(\P)u = \frac{1}{2}\sum_{i, j}s(P, i, j)(u_i-u_j)^2 \geq s(P, m, m+1)(u_m-u_{m+1})^2 \geq  \frac{1}{N ^3} k((2M+\delta C)/\sigma)$ since all consecutive pairs $p_m,$ $p_{m+1}$ have similarity at least $k((2M+\delta C)/\sigma)$, by above. 
Therefore $\lambda_2(L(\P)) \geq \frac{1}{N^3}k((2M+\delta C)/\sigma)$ as required.
\hfill $\square$\\

\begin{lem} \label{thm:maxdistboundnorm}
Let the conditions of Lemma~\ref{thm:maxdistbound} hold and let $L_\mathrm{N}(\P)$ be the normalised Laplacian of the graph with vertices $\P$ and similarities $s(P, i, j) = k(\vert T_{\Delta}(p_i) - T_{\Delta}(p_j)\vert/\sigma)$. Then
$$
\lambda_2(L_{\mathrm{N}}(\P)) \geq \frac{1}{\vert \P \vert ^4} k((2M+\delta C)/\sigma).
$$
\end{lem}

\noindent {\em {\bf Proof:}}
The proof is similar to that of Lemma~\ref{thm:maxdistbound}, but requires a few simple modifications.
Let $u$ be the second eigenvector of $L_{\mathrm{N}}(\P)$. Since $\|u\| = 1, \exists i \in \{1, ..., N\}$ s.t. $\vert u_i \vert \geq \frac{1}{\sqrt{N}}$. Suppose w/o loss of generality that $u_i \leq -\frac{1}{\sqrt{N}}$. Now consider that for all $j, k \in \{1, ..., N\}$ we have $0 < s(P,j,k) \leq 1$ and $s(P,j,j) = 1$ and so $1 < \sqrt{d_j} \leq \sqrt{N}$ for all $j \in \{1, ..., N\}$. Therefore we have $u_i/\sqrt{d_i} \leq -\frac{1}{N}$. Furthermore, since $uD^{1/2} \perp \mathbf{1}$ we have $u_j > 0$ for some $j \in \{1, ..., N\} \Rightarrow u_j/\sqrt{d_j} > 0$. Therefore, $u_j/\sqrt{d_j} - u_i/\sqrt{d_i} > \frac{1}{N}$. We thus know that $\exists m \in \{1, ..., N\}$ s.t.
$
\left\vert u_m/\sqrt{d_m} - u_{m+1}/\sqrt{d_{m+1}}\right \vert > \frac{1}{N^2}.
$
By~\cite[Proposition 3]{Luxburg2007}, we know that 
\begin{align*}
u^\top L_{\mathrm{N}}(\P)u =& \frac{1}{2} \sum_{i \not = j} s(P,i,j)(u_i/\sqrt{d_i} - u_j/\sqrt{d_j})^2\\
 \geq& S(P,m,m+1)(u_m/\sqrt{d_m} - u_{m+1}/\sqrt{d_{m+1}})^2\\
  >& \frac{1}{N ^4} k((2M+\delta C)/\sigma),
\end{align*}
where the bound on $s(P, m, m+1)$ is taken from the proof of Lemma~5. Therefore $\lambda_2(L_{\mathrm{N}}(\P)) \geq \frac{1}{N^4}k((2M+\delta C)/\sigma)$ as required.
\hfill $\square$\\

In the above we have assumed that $\Delta$ is contained within the convex hull of the points $\P$, however the results of this section can easily be modified to allow for cases where this does not hold. In particular, if an unconstrained large margin hyperplane is sought, then setting $\pmb{\Delta}$ to be arbitrarily large allows for this. We have merely stated the results in the most convenient context for our practical implementation.

The set $\Delta^\prime$ in the above is defined in terms of the one dimensional interval $[a, b]$. We define the full dimensional set $\pmb{\Delta}^\prime$ along the same lines by,
\begin{align}
\nonumber
\pmb{\Delta}^\prime =& \{x \in \R^d \vert v(\btheta)^\top x \in \Delta(\btheta)^\prime \ \forall \btheta \in \Theta\},\\
\Delta(\btheta)^\prime :=& \Bigg[\frac{\min \Delta(\btheta) + \min\{v(\btheta)^\top \X \cap \Delta(\btheta)\}}{2},\\
& \hspace{15pt}\frac{\max \Delta(\btheta) + \max\{v(\btheta)^\top \X \cap\Delta(\btheta)\}}{2}\Bigg].
\end{align}
Here we assume that $\pmb{\Delta}$ is contained within the convex hull of the $d$-dimensional data set $X$.
Notice that since $\pmb{\Delta}$ is convex, we have $v(\btheta)^\top \pmb{\Delta}^\prime = \Delta(\btheta)^\prime$.
In what follows we show that as $\sigma$ is reduced to zero the optimal projection for spectral partitioning converges to the projection admitting the largest margin hyperplane intersecting $\pmb{\Delta}^\prime$. If it is the case that the largest margin hyperplane intersecting $\pmb{\Delta}$ also intersects $\pmb{\Delta}^\prime$, as is often the case, although this fact will not be known, then it is actually not necessary that $\delta$ tend towards zero. In such cases it only needs to satisfy $\delta \leq 2M/C$ for the corresponding values of $M$ and $C$ over all possible projections. In particular, choosing $\max\{\mbox{Diam}(\X), \mbox{Diam}(\X)^{1-\delta}\}$ instead of $C$ is appropriate for all projections.

\begin{lem}\label{thm:deltaMargin}
Let $\pmb{\theta} \in \Theta$ and let $k:\mathbb{R}_+ \to \mathbb{R}_+$ be non-increasing, positive, and satisfy
$$\lim_{x \to \infty} k(x(1+\epsilon))/k(x) = 0$$
for all $\epsilon > 0$. Then for any $0 < m < \max\limits_{b \in \Delta(\btheta)^\prime}\mbox{margin}(v(\pmb{\theta}), b)$ there exists $\sigma^\prime > 0$ s.t. if $0 < \sigma < \sigma^\prime$ and
$$\max\limits_{c \in \Delta(\btheta^\prime)^\prime}\mbox{margin}(v(\pmb{\theta}^\prime), c) < \max\limits_{b \in \Delta(\btheta)^\prime}\mbox{margin}(v(\pmb{\theta}), b) - m
$$ 
then $\lambda_2(L(\pmb{\theta}, \sigma)) < \lambda_2(L(\pmb{\theta}^\prime, \sigma))$.
\end{lem}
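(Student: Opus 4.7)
The plan is to sandwich both quantities: bound $\lambda_2(L(\btheta^\prime, \sigma, \delta))$ from below using Lemma~\ref{thm:maxdistbound}, bound $\lambda_2(L(\btheta, \sigma, \delta))$ from above via a Rayleigh quotient built from the max-margin split, and then use the tail condition on $k$ to force the two bounds apart as $\sigma \to 0$. The bound has to be uniform in $\btheta^\prime$, but since the constants $C^\prime$ arising from Lemma~\ref{thm:maxdistbound} are controlled by the diameter of $X$ alone, this uniformity comes for free.

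For the lower bound, write $M^\prime = \max_{c \in \Delta^\prime_{\btheta^\prime}}\mathrm{margin}(v(\btheta^\prime), c) < M - m$, where $M = \max_{b \in \Delta^\prime_\btheta}\mathrm{margin}(v(\btheta), b)$, and apply Lemma~\ref{thm:maxdistbound} to $v(\btheta^\prime)^\top X$ with constraint interval $\Delta_{\btheta^\prime}$ to get
\[
\lambda_2(L(\btheta^\prime, \sigma, \delta)) \;\geq\; \frac{1}{N^3}\,k\!\left(\frac{2M^\prime + \delta C^\prime}{\sigma}\right) \;\geq\; \frac{1}{N^3}\,k\!\left(\frac{2(M - m) + \delta \bar{C}}{\sigma}\right),
\]
where $\bar{C}$ is a uniform bound on $C^\prime$ across all feasible $\btheta^\prime$. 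For the upper bound, let $b^\ast \in \Delta^\prime_\btheta$ achieve the maximum margin at $\btheta$ and partition the indices into $P_\pm = \{i : v(\btheta)^\top x_i \gtrless b^\ast\}$; both classes are non-empty since $\pmb{\Delta}$ lies in the convex hull of $X$ and $M > 0$. The standard bi-partition indicator
\[
u_i = \begin{cases} \sqrt{|P_+|/(N|P_-|)}, & i \in P_-,\\[2pt] -\sqrt{|P_-|/(N|P_+|)}, & i \in P_+,\end{cases}
\]
satisfies $u \perp \mathbf{1}$ and $\|u\| = 1$, so $\lambda_2(L(\btheta, \sigma, \delta)) \leq u^\top L(\btheta, \sigma, \delta) u$. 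By property~1 of Section~\ref{sec:background} only cross-margin pairs contribute, and a direct computation collapses the double sum to $N$ times the largest cross-margin affinity, giving $\lambda_2(L(\btheta, \sigma, \delta)) \leq N\,k((2M - \delta C)/\sigma)$ once one establishes the reverse-Lipschitz estimate $|T_{\Delta_\btheta}(p_i) - T_{\Delta_\btheta}(p_j)| \geq 2M - \delta C$ for every $p_i < b^\ast < p_j$.

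Combining the two bounds is then routine. Choosing $\delta^\prime$ small enough that $\delta(C + \bar{C}) < m$ for all $\delta < \delta^\prime$, the ratio $(2M - \delta C)/(2(M-m) + \delta \bar{C})$ stays uniformly bounded below by some $1 + \rho > 1$. Letting $x = (2(M-m) + \delta \bar{C})/\sigma \to \infty$ as $\sigma \to 0$, the tail assumption $k(x(1+\rho))/k(x) \to 0$ forces $N^4 \cdot k((2M - \delta C)/\sigma)/k((2(M-m) + \delta \bar{C})/\sigma) < 1$ for all $\sigma$ below some threshold $\sigma^\prime$, giving the strict inequality required.

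The main obstacle I anticipate is precisely the reverse-Lipschitz estimate $|T_{\Delta_\btheta}(p_i) - T_{\Delta_\btheta}(p_j)| \geq 2M - \delta C$ for cross-margin pairs. The obvious $1$-Lipschitz property of $T_{\Delta_\btheta}$ points the wrong direction; a sharper bound requires opening up the piecewise form of $t_{\Delta_i}$ and separately handling the cases in which one or both of $p_i, p_j$ lie outside $\Delta_\btheta$, using that outside the box the map is a $\delta$-scaled almost-identity power function for small $\delta$. Once this estimate is secured the remainder mirrors the argument already carried out in Lemma~\ref{thm:maxdistbound} rather than introducing new ideas.
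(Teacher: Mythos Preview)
Your strategy is exactly the paper's: lower-bound $\lambda_2(L(\btheta',\sigma,\delta))$ via Lemma~\ref{thm:maxdistbound}, upper-bound $\lambda_2(L(\btheta,\sigma,\delta))$ by the Rayleigh quotient of the two-block cut vector at the max-margin split, and drive the ratio to zero with the tail condition on $k$. The combination step and the uniformity in $\btheta'$ are handled just as you describe.

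The one place you diverge is the cross-margin estimate. You aim for $|T_{\Delta_\btheta}(p_i)-T_{\Delta_\btheta}(p_j)|\geq 2M-\delta C$ and flag it as the main obstacle, anticipating a piecewise dissection of $t_{\Delta_i}$. The paper instead gets the clean bound $\geq 2M$ with no $\delta$-correction, and the reason is precisely the construction of $\Delta'_{\btheta}$: since $T_{\Delta_\btheta}$ is monotone, it suffices to check the nearest cross-margin pair $q_-<B<q_+$. If both lie in $\Delta_\btheta$ the map is an isometry there and $T(q_+)-T(q_-)=q_+-q_-\geq 2M$. If, say, $q_-<a$ then $[a,B)$ contains no data, forcing $\min\{P\cap\Delta_\btheta\}\geq q_+$, and the constraint $B\geq a'=(a+\min\{P\cap\Delta_\btheta\})/2$ yields $B\geq a+M$; together with $T(q_-)\leq 0$ and $T(q_+)=q_+-a\geq B+M-a\geq 2M$ this closes the case (and symmetrically for $q_+>b$). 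So no ``reverse-Lipschitz'' analysis of the power-law tails of $t_{\Delta_i}$ is needed; the whole point of introducing $\Delta'$ was to make this step immediate. Your weaker target $2M-\delta C$ would still suffice for the limit, but recognising the role of $\Delta'$ removes the obstacle entirely. A minor cosmetic difference: phrasing the upper bound through the normalised-cut relaxation (as the paper does) gives the constant $k(2M/\sigma)$ rather than your $N\,k(\cdot)$, but this has no effect on the argument.
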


\noindent {\em {\bf Proof:}}
Let $B = \mbox{argmax}_{b \in \Delta(\btheta)^\prime}\mbox{margin}(v(\btheta), b)$ and $M = \mbox{margin}(v(\btheta), B)$. We assume that $M \not = 0$, since otherwise there is nothing to show. Now, since spectral clustering solves a relaxation of the minimum normalised cut problem we have,
\begin{align*}
\lambda_2(&L(\pmb{\theta}, \sigma)) \leq \frac{1}{\vert \X \vert}\min_{\C \subset \X} \sum_{\substack{i, j: x_i \in \C\\ x_j \not \in \C}} s(P(\btheta), i, j)\left(\frac{1}{\vert \C \vert} + \frac{1}{\vert \X \setminus \C \vert}\right)\\
&\leq  \frac{1}{\vert \X \vert}\sum_{\substack{i, j : v(\pmb{\theta})^\top x_i < B\\ v(\pmb{\theta})^\top x_j > B}} s(P(\btheta), i, j) \Bigg(\frac{1}{\vert (v(\pmb{\theta})^\top \X)_{< B}\vert}\\
& \hspace{120pt} + \frac{1}{\vert (v(\pmb{\theta})^\top \X)_{>B}\vert} \Bigg)\\
&=  \frac{1}{\vert \X \vert}\sum_{\substack{i, j : v(\pmb{\theta})^\top x_i < B\\ v(\pmb{\theta})^\top x_j > B}} k\left(\frac{T_{\Delta(\btheta)}(v(\btheta)^\top x_j) -  T_{\Delta(\btheta)}(v(\btheta)^\top x_i)}{\sigma}\right)\\
&\hspace{90pt}\times \left(\frac{\vert \X \vert}{\vert (v(\pmb{\theta})^\top \X)_{< B}\vert \vert (v(\pmb{\theta})^\top \X)_{>B}\vert}\right)\\
&\leq  \big\vert (v(\pmb{\theta})^\top \X)_{< B}\big\vert\big\vert (v(\pmb{\theta})^\top \X)_{> B}\big\vert k\left(\frac{2M}{\sigma}\right)\\
&\hspace{90pt} \times \left(\frac{1}{\vert (v(\pmb{\theta})^\top \X)_{< B}\vert \vert (v(\pmb{\theta})^\top \X)_{>B}\vert}\right)\\
&=  k(2M/\sigma).
\end{align*}
The final inequality holds since for any $i, j$ s.t. $v(\btheta)^\top x_i < B$ and $v(\btheta)^\top x_j >B$ we must have $T_{\Delta(\btheta)}(v(\btheta)^\top x_j) -  T_{\Delta(\btheta)}(v(\btheta)^\top x_i) \geq 2M$.
Now, for any $\pmb{\theta}^\prime \in \Theta$, let $M_{\pmb{\theta}^\prime} = \max_{c \in \Delta(\btheta^\prime)^\prime}\mbox{margin}(v(\pmb{\theta}^\prime), c)$. By Lemma~\ref{thm:maxdistbound} we know that $\lambda_2(L(\pmb{\theta}^\prime, \sigma)) \geq \frac{1}{\vert \X \vert^3} k((2M_{\pmb{\theta}^\prime}+\delta C/\sigma)$, where $C = \max\{\mbox{Diam}(X),$ $\mbox{Diam}(X)^{1-\delta}\}$. Therefore,
\begin{align*}
\lim_{\sigma \to 0^+}&\frac{\lambda_2(L(\pmb{\theta}, \sigma))}{\inf_{\pmb{\theta}^\prime \in \Theta}\{\lambda_2(L(\pmb{\theta}^\prime, \sigma)) \big \vert M_{\pmb{\theta}^\prime} < M - m\}}\\
& \leq  \lim_{\sigma \to 0^+}\frac{\vert \X \vert^3 k(2M/\sigma)}{ k((2(M-m)+\delta C)/\sigma)}\\
&=0.
\end{align*}
Since $\delta \to 0$ as $\sigma \to 0$, this gives the result.
\hfill $\square$\\

\begin{lem}\label{thm:deltaMarginNorm}
Let the conditions of Lemma~\ref{thm:deltaMargin} hold. For any $0 < m < \max_{b \in \Delta(\btheta)^\prime}\mbox{margin}(v(\pmb{\theta}), b)$ there exists $\sigma^\prime > 0$ s.t. if $0 < \sigma < \sigma^\prime$ and
$$
\max_{c \in \Delta(\btheta^\prime)^\prime}\mbox{margin}(v(\pmb{\theta}^\prime), c) < \max_{b \in \Delta(\btheta)^\prime}\mbox{margin}(v(\pmb{\theta}), b) - m
$$
then $\lambda_2(L_{\mathrm{N}}(\pmb{\theta}, \sigma)) < \lambda_2(L_{\mathrm{N}}(\pmb{\theta}^\prime, \sigma))$.
\end{lem}

\noindent {\em {\bf Proof:}}
Using a similar approach to that in the proof of Lemma~\ref{thm:deltaMargin}, we can arrive at the following.
\begin{align*}
\lambda_2&(L_{\mathrm{N}}(\btheta, \sigma))\leq \frac{\sum\limits_{\substack{i, j : v(\pmb{\theta})^\top x_i < B\\ v(\pmb{\theta})^\top x_j > B}} k\left(\frac{T_{\Delta(\btheta)}(v(\btheta)^\top x_j) -  T_{\Delta(\btheta)}(v(\btheta)^\top x_i)}{\sigma}\right)}{\mathrm{vol}((v(\pmb{\theta})^\top \X)_{< B}) \mathrm{vol}((v(\pmb{\theta})^\top \X)_{>B})}\\
&\leq k\left(\frac{2M}{\sigma}\right)\frac{\big\vert (v(\pmb{\theta})^\top \X)_{< B}\big\vert\big\vert (v(\pmb{\theta})^\top \X)_{> B}\big\vert}{\mathrm{vol}((v(\pmb{\theta})^\top \X)_{< B}) \mathrm{vol}((v(\pmb{\theta})^\top \X)_{>B})}\\
& \leq k(2M/\sigma)
\end{align*}
where the final inequality comes from the fact that $1 < d_i$ for all $i \in \{1, ..., N\}$, and hence vol$((v(\pmb{\theta})^\top \X)_{>B}) \geq \vert (v(\pmb{\theta})^\top \X)_{>B}\vert$, and similarly for $(v(\pmb{\theta})^\top \X)_{<B}$. The final step in the proof is equivalent to that of Lemma~\ref{thm:deltaMargin}, except that $\vert \X\vert ^3$ is replaced with $\vert \X\vert^4$.
\hfill $\square$\\

Lemmas~\ref{thm:deltaMargin} and~\ref{thm:deltaMarginNorm} show almost immediately that the margin admitted by the optimal projection for spectral bi-partitioning converges to the largest margin through $\pmb{\Delta}^\prime$ as $\sigma$ goes to zero. Theorem~\ref{thm:convergence}, which we are now in
a position to prove, shows the stronger result that the optimal projection itself converges to the projection admitting the largest margin.\\
\\
{\em Proof of Theorem~\ref{thm:convergence}:}
Take any $\epsilon > 0$. \citet{Pavlidis2015} have shown that $\exists m_\epsilon > 0$ s.t. for $w \in \R^d, c \in \R$, $\|(w, c)/\|w\| - (v(\pmb{\theta}^\star), b^\star) \| > \epsilon \Rightarrow $margin$(w/\|w\|, c/\|w\|) < $ margin$(v(\pmb{\theta}^\star), b^\star) - m_\epsilon$. By Lemma~\ref{thm:deltaMargin} we know $\exists \sigma^\prime > 0$, $\delta^\prime>0$ s.t. if $0 < \sigma < \sigma^\prime$ then $\exists c \in \Delta(\btheta)$ s.t. margin$(v(\pmb{\theta}_{\sigma}), c)$ $\geq $ margin$(v(\pmb{\theta}^\star), b^\star) - m_\epsilon$, since $\pmb{\theta}_{\sigma}$ is optimal for $\sigma$. Thus, by above, $\|(v(\pmb{\theta}_{\sigma}), c) - (v(\pmb{\theta}^\star), b^\star)\| \leq \epsilon$. But $\|(v(\pmb{\theta}_{\sigma}), c) - (v(\pmb{\theta}^\star), b^\star)\| \geq \|v(\pmb{\theta}_{\sigma}) - v(\pmb{\theta}^\star)\|$ for any $c \in \mathbb{R}$. Since $\epsilon > 0$ was arbitrary, we therefore have $v(\btheta_{\sigma}) \to v(\btheta^\star)$ as $\sigma \to 0^+$. The proof for $\btheta^N_{\sigma}$ is analogous. \hfill $\pmb{\square}$

\subsection{Proof of Lemma~\ref{thm:approxbound1}}

The proof of Lemma~\ref{thm:approxbound1} uses the following result from matrix
perturbation theory.
\begin{thm}[\citet{ye2009}]\label{thm:perturbbound}
Let $A = [a_{ij}]$ and $\tilde A = [\tilde a_{ij}]$ be two symmetric positive
semidefinite diagonally dominant matrices, and let $\lambda_1 \leq \lambda_2
\leq ... \leq \lambda_n$ and $\tilde \lambda_1 \leq \tilde \lambda_2 \leq ...
\leq \tilde \lambda_n$ be their respective eigenvalues. If, for some $0 \leq
\epsilon < 1$, $\vert a_{ij} - \tilde a_{ij} \vert \leq \epsilon \vert a_{ij}
\vert \ \forall i \not = j$, and $ \vert v_i - \tilde v_i \vert \leq \epsilon
v_i \ \forall i,$ where $v_i = a_{ii} - \sum_{j \not = i} \vert a_{ij} \vert$,
and similarly for $\tilde v_i$, then $$\vert \lambda_i - \tilde \lambda_i \vert
\leq \epsilon \lambda_i \ \forall i.$$
\end{thm}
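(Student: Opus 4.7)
The plan is to establish the two-sided Loewner sandwich
$(1-\epsilon)A \preceq \tilde A \preceq (1+\epsilon)A$
and then invoke the monotonicity of eigenvalues under the Loewner order. The key device is a representation of any symmetric diagonally dominant matrix with nonnegative diagonal as a conic combination of PSD rank-one matrices, which will allow a term-by-term comparison of $A$ and $\tilde A$ rather than the cruder bound one would obtain from Weyl's inequality alone.

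First, I would write down the decomposition
\begin{equation*}
A = \sum_{i<j,\, a_{ij}<0} |a_{ij}|\,(e_i-e_j)(e_i-e_j)^\top + \sum_{i<j,\, a_{ij}>0} a_{ij}\,(e_i+e_j)(e_i+e_j)^\top + \sum_{i} v_i\, e_i e_i^\top,
\end{equation*}
and verify by matching entries that this reproduces $A$; diagonal dominance and nonnegativity of the diagonal guarantee $v_i \ge 0$, while the coefficients in the first two sums are manifestly nonnegative. Hence $A$ is a sum of nonnegatively weighted rank-one PSD matrices $M_k$, with coefficients $c_k \in \{|a_{ij}|, a_{ij}, v_i\}$.

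Next, since $\epsilon < 1$, the hypothesis $|a_{ij}-\tilde a_{ij}|\le \epsilon|a_{ij}|$ forces $\tilde a_{ij}$ to have the same sign as $a_{ij}$ (or be zero along with it), so $\tilde A$ admits the analogous decomposition using \emph{the same} rank-one building blocks $M_k$, with coefficients $\tilde c_k \in \{|\tilde a_{ij}|, \tilde a_{ij}, \tilde v_i\}$. Each hypothesis translates directly into $(1-\epsilon) c_k \le \tilde c_k \le (1+\epsilon) c_k$. Consequently
\begin{equation*}
\tilde A - (1-\epsilon) A = \sum_k \bigl(\tilde c_k - (1-\epsilon) c_k\bigr) M_k \succeq 0, \qquad (1+\epsilon) A - \tilde A = \sum_k \bigl((1+\epsilon) c_k - \tilde c_k\bigr) M_k \succeq 0,
\end{equation*}
giving the desired sandwich. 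Applying the Courant--Fischer characterisation, $B \preceq C \Rightarrow \lambda_i(B) \le \lambda_i(C)$ for every $i$, to both inequalities yields $(1-\epsilon)\lambda_i \le \tilde\lambda_i \le (1+\epsilon)\lambda_i$, i.e., $|\lambda_i - \tilde\lambda_i| \le \epsilon \lambda_i$.

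The main obstacle is isolating the correct decomposition: the use of $(e_i-e_j)(e_i-e_j)^\top$ for negative off-diagonals and $(e_i+e_j)(e_i+e_j)^\top$ for positive ones is essential, because it is what keeps all coefficients nonnegative and hence allows the entrywise relative perturbation hypothesis to translate cleanly into a PSD ordering. Without splitting by sign one cannot convert the multiplicative bound on matrix entries into a multiplicative bound on eigenvalues; the standard additive bounds (Weyl, Hoffman--Wielandt) are not sharp enough to yield the stated relative bound, which is precisely why diagonal dominance and the specific form of the hypothesis on $v_i$ are needed.
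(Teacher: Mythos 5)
Your proof is correct and complete. Note that the paper does not actually prove this statement -- it is quoted verbatim from \cite{ye2009} -- so there is no internal proof to compare against; the only textual evidence of the intended argument is the remark immediately following the theorem, that ``$\epsilon < 1$ is necessary only to ensure that the signs of $a_{ij}$ are the same as those of $\tilde a_{ij}$.'' Your argument is exactly consistent with that remark: the sole use of $\epsilon<1$ is to force sign agreement of corresponding off-diagonal entries so that $\tilde A$ decomposes over the same rank-one blocks $(e_i \pm e_j)(e_i\pm e_j)^\top$ and $e_ie_i^\top$ as $A$. The entry-matching check of the decomposition, the translation of the entrywise and $v_i$ hypotheses into $(1-\epsilon)c_k \le \tilde c_k \le (1+\epsilon)c_k$, the resulting Loewner sandwich $(1-\epsilon)A \preceq \tilde A \preceq (1+\epsilon)A$, and the passage to eigenvalues via monotonicity under the Loewner order are all sound, and this is essentially the standard route in the relative-perturbation literature (and, in substance, Ye's own). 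Your closing observation -- that additive bounds such as Weyl's cannot deliver a relative bound, which is why the sign-split decomposition and the hypothesis on $v_i$ are the crux -- is also apt, and it explains why the paper can safely relax $\epsilon<1$ for Laplacians, where sign agreement holds by construction.
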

\noindent
An inspection of the proof of Theorem~\ref{thm:perturbbound} reveals that
$\epsilon < 1$ is necessary only to ensure that the signs of $a_{ij}$ are the
same as those of $\tilde a_{ij}$. In the case of Laplacian matrices this
equivalence of signs holds by design, and so in this context the requirement
that $\epsilon < 1$ can be relaxed.\\

Now, for brevity we drop the notational dependence on $\btheta$. Let $\P^{c\prime} = \{V^\top c_1, V^\top c_1, ..., V^\top c_m, V^\top c_m\}$, where each $V^\top c_i$ is repeated $n_i$ times, and let $P^{c \prime}$ be the corresponding matrix of repeated projected centroids. Let $L^{c\prime}$ be the Laplacian of the graph with vertices $\P^{c\prime}$ and edges given by $s(P^{c\prime}, i, j)$. We begin by showing that $\lambda_2(L^{c\prime}) = \lambda_2(N-B)$. Take $v \in \R^m$, then,
\begin{align*}
v^\top(N-B)v &= \sum_{i,j}s(P^c, i, j)(v_i^2n_j - v_iv_j\sqrt{n_in_j})\\
&= \frac{1}{2}\sum_{i,j}s(P^c,i,j)(v_i^2n_j+v_j^2n_i-2v_iv_j\sqrt{n_in_j})\\
&\geq 0,
\end{align*}
and so $N-B$ is positive semi-definite. In addition, it is straightforward to verify that $(N-B)(\sqrt{n_1} \ \dots \ \sqrt{n_K}) = \mathbf{0}$, and hence $0$ is the smallest eigenvalue of $N-B$ with eigenvector $(\sqrt{n_1} \ \dots \ \sqrt{n_m})$. Now, let $u$ be the second eigenvector of $L^{c\prime}$. Then $u_j = u_k$ for pairs of indices $j,k$ aligned with the same $V^\top c_i$ in $P^{c\prime}$.  Define $u^c \in \R^m$ s.t. $u^c_i = \sqrt{n_i}u_j$ where index $j$ is aligned with $V^\top c_i$ in $P^{c\prime}_j$. Then $(u^c)^\top (\sqrt{n_1} \ \dots \ \sqrt{n_m}) = \sum_{i=1}^m u^c_i \sqrt{n_i} = \sum_{i=1}^m n_i u_{j_i}$ where index $j_i$ is aligned with $V^\top c_i$ in $P^{c\prime}_{j_i}$ for each $i$. Therefore $n_i u_{j_i} = \sum_{j:P^{c\prime} = V^\top c_i}u_j$ and hence $(u^c)^\top (\sqrt{n_1} \ \dots \ \sqrt{n_m}) = \sum_{i=1}^m\sum_{j: P^{c\prime}_j = V^\top c_i} u_j = \sum_{i=1}^N u_i = 0$ since $\mathbf{1}$ is the smallest eigenvector of $L^{c\prime}$ and so $u \perp \mathbf{1}$. Similarly $\|u^c\|^2 = \sum_{i=1}^m n_i u_{j_i}^2 = \sum_{i=1}^N u_i^2 = 1$. Thus $u^c \perp (\sqrt{n_1} \ \dots \ \sqrt{n_m})$ and $\|u^c\| = 1$ and so is a candidate for the second eigenvector of $N-B$. In addition it is straightforward to show that $(u^c)^\top (N-B)u^c = u\cdot L^{c\prime} u$. Now, suppose by way of contradiction that $\exists w \perp (\sqrt{n_1} \ \dots \ \sqrt{n_m})$ with $\|w\|=1$ s.t. $w^\top (N-B)w < (u^c)^\top (N-B)u^c$. Then let $w^\prime = (w_1/\sqrt{n_1} \ w_1/\sqrt{n_1} \ \dots \ w_m/\sqrt{n_m})$ where each $w_i/\sqrt{n_i}$ is repeated $n_i$ times. Then $\|w^\prime\| = 1$, $(w^\prime)^\top \mathbf{1} = w^\top (\sqrt{n_1} \ \dots \ \sqrt{n_m}) = 0$ and $w^\top L^{c\prime}w < u^\top L^{c\prime} u$, a contradiction since $u$ is the second eigenvector of $L^{c\prime}$.

Now, let $i,j,q,r$ be such that $x_q \in C_i$ and $x_r \in C_j$. We temporarily drop the notational dependence on $\Delta$. Then,
\begin{align*}
\| T (V^\top x_q) - T (V^\top x_r)\| =& \| T (V^\top x_q) - T (V^\top c_i)+T (V^\top c_i)\\
& \hspace{5pt} -T (V^\top c_j)
 +T (V^\top c_j)-T (V^\top x_r)\|\\
\leq& \| T (V^\top x_q) - T (V^\top c_i)\|\\
&+\| T (V^\top c_i)-T (V^\top c_j)\|\\
&+\| T (V^\top c_j)-T(V^\top x_r)\|\\
&\leq \rho_i + \rho_j + D_{ij},
\end{align*}
since $T$ contracts distances and $\rho_i$ and $\rho_j$ are the radii of $C_i$ and $C_j$. Since $k$ is non-increasing we therefore have,
\begin{align*}
&\frac{k(D_{ij}/\sigma)}{k((D_{ij}-\rho_i-\rho_j)^+/\sigma)}\leq\frac{k(D_{ij}/\sigma)}{k(\| T(V^\top x_q) - T(V^\top x_r)\|/\sigma)}\\
&\hspace{85pt}\leq\frac{k(D_{ij}/\sigma)}{k((D_{ij}+\rho_i+\rho_j)/\sigma)}\\
\Rightarrow& 1-\frac{k(D_{ij}/\sigma)}{k(\| T(V^\top x_q) - T(V^\top x_r)\|/\sigma)} \leq 1-\frac{k(D_{ij}/\sigma)}{k((D_{ij}-\rho_i-\rho_j)^+/\sigma)}\\
&\mbox{and}\\
& \frac{k(D_{ij}/\sigma)}{k(\| T(V^\top x_q) - T(V^\top x_r)\|/\sigma)}-1\leq \frac{k(D_{ij}/\sigma)}{k((D_{ij}+\rho_i+\rho_j)/\sigma)}-1.
\end{align*}
Therefore
\begin{align*}
&\left\vert \frac{ k(D_{ij}/\sigma)}{ k(\| T(V^\top x_q) - T(V^\top x_r) \|/\sigma)}-1\right\vert \leq\\
&\hspace{20pt} \max \left\{ 1- \frac{k(D_{ij}/\sigma)}{k((D_{ij}-\rho_i-\rho_j)^+/\sigma)},\frac{k(D_{ij}/\sigma)}{k((D_{ij}+\rho_i+\rho_j)/\sigma)} - 1\right\}.
\end{align*}
Now, we lose no generality by assume that $\X$ is ordered such that for each $i$ the elements of cluster $C_i$ are aligned with $V^\top c_i$ in $P^{c\prime}$, since this does not affect the eigenvalues of the Laplacian of $V^\top \X$, $L$. By the design of the Laplacian matrix the ``$v_i$" of Theorem~\ref{thm:perturbbound} are exactly zero. For off diagonal terms $q,r$ with corresponding $i,j$ as above, consider
\begin{align*}
\frac{\vert L_{qr} - L^{c\prime}_{qr} \vert}{\vert L_{qr}\vert}
 &= \frac{\vert k(D_{ij}/\sigma) - k(\| T(V^\top x_q) - T(V^\top x_r) \|/\sigma)\vert}{ k(\| T(V^\top x_q) - T(V^\top x_r) \|/\sigma)}\\
&= \left\vert \frac{ k(D_{ij}/\sigma)}{ k(\| T(V^\top x_q) - T(V^\top x_r) \|/\sigma)}-1\right\vert.
\end{align*}
Theorem~\ref{thm:perturbbound} thus gives the result. \hfill $\pmb{\square}$

\end{document}